\documentclass[11pt]{article}

\usepackage[
  shownumpages,          % Показать номера страниц
  nodefaultlogos,        % Убрать зашитые логотипы МФТИ / ИСП РАН / Иннополиса / BRAIn
  citingstyle=numbers,     % Стиль цитирования: numbers | authoryear | super
  bibliostyle=unsrtnat,    % Стиль библиографии: unsrtnat | plainnat | abbrvnat
  bibfile=references,      % Имя .bib файла (без расширения)
]{brainlab}

\usepackage{shortcuts}
\usepackage{wrapfig}

\usepackage{nicefrac}
\usepackage{booktabs}
\usepackage{multirow}
\usepackage{graphicx}
\usepackage{algorithm}

\setbrainmeta{
  title={Bridging Compressed Communication and Local Steps via Spectral Method in Federated Learning},
  authors={
    Dmitriy Bystrov\textsuperscript{1}, 
    George Chirkov\textsuperscript{1},
    Daniil Medyakov\textsuperscript{1,2}, 
    Dmitry Bylinkin\textsuperscript{1,2}, 
    Aleksandr Beznosikov\textsuperscript{1,2,3}
  },
  affiliations={
    \textsuperscript{1}Basic Research of Artificial Intelligence Laboratory (BRAIn Lab) \\
    \textsuperscript{2}Federated Learning Problems Laboratory \\
    \textsuperscript{3}Innopolis University \\
  },
  abstract={
    Federated Learning (FL) enables collaborative training of complex models across large-scale networks of edge devices while keeping data localized. However, data heterogeneity and high communication overhead remain fundamental bottlenecks. Recently, the spectral optimizer Muon has demonstrated remarkable advantages over Adam-like methods in centralized training. While recent efforts have attempted to transition Muon to FL, naive integrations into traditional architectures fail short. Crucially, existing approaches do not support the the communication compression and local steps --- the two foundational techniques of efficient FL. To address this limitation, we introduce a novel spectral FL method that utilizes both local updates and compression. We establish rigorous convergence guarantees under mild assumptions, notably eliminating the restrictive bounded heterogeneity condition. Empirical evaluation on standard image classification benchmarks validates the efficiency and robustness of our approach.
  },
}

\begin{document}
\begin{mainpart}

\section{Introduction}\label{sec:intro}
Federated Learning (FL) \citep{mcmahan2017communication} is a paradigm of training a shared model across a set of edge devices (often called workers or clients). The clients collect data locally and use it to perform updates, which are aggregated by the server.
Formally, the problem statement is as follows:
\begin{align*}
    \min_{x\in \mathbb{R}^{d_1\times d_2}} \left[f(x) = \frac1M \sum_{i=1}^M f_i(x) \right],
\end{align*}
where $x$ denotes the model weights, $f_i$ is the empirical risk \citep{shalev2014understanding} of client $i$ computed on its local dataset, and $M$ is the total number of clients.

The communication bottleneck remains a central challenge in federated learning \citep{li2020federated}. The source of this issue is two-fold.
Firstly, due to the number of devices in the network, coordinating them becomes a substantial asynchronous workload \citep{le2024exploring}. Thus, reducing communication frequency is important in practice. Algorithmically, the problem can be addressed by local steps. Here, each client performs several local update steps between aggregations \citep{gorbunov2021local}.
Secondly, an individual model may contain up to billions of parameters \citep{zhou2025comprehensive}, and its full transmission severely slows down the aggregation process. Communication compression aims to reduce the size of each message by removing redundant or low-importance information. In practice, model size can be reduced by orders of magnitude with essentially no performance loss \citep{haddadpour2021federated,li2023analysis}.

Modern machine learning problems are growing in complexity and demand advanced optimization techniques. Currently, the most widely used solutions rely on adaptive algorithms like AdaGrad \citep{duchi2011adaptive}, Adam \citep{kingma2014adam}, AdamW \citep{loshchilov2017decoupled}, and Shampoo \citep{gupta2018shampoo}. These methods adapt the update to the local geometry of the problem using current optimization statistics. The paradigm has become the gold standard since its development \citep{liu2025recent}, and it was successfully applied to the FL setting \citep{deng2020adaptive,reddi2020adaptive,wu2023faster}.

A different strategy emerged recently: an adaptation to the spectrum of matrix updates instead of the geometry of the parameter space \citep{jordan2024muon,riabinin2025gluon,pethick2025training}. Subsequent work has empirically demonstrated its advantages over classic optimizers \citep{liu2025muon,semenov2025benchmarking}, and \texttt{Muon} \citep{jordan2024muon} has attracted significant attention as the most efficient. Notably, \texttt{Muon} can be viewed as a Linear Minimization Oracle (LMO) under the spectral matrix norm, and this perspective yields theoretical improvements over classic approaches \citep{bernstein2024old,pethick2025training}. 
Hence, transferring this paradigm to FL can yield methods that surpass adaptive approaches.
%valuable, given that both issues of communication bottleneck are properly adressed.
%A plenty of works followed, developing various \texttt{Muon}-like methods \citep{ahn2025dion,liu2026muon,zhang2026teon}, including \texttt{FedMuon} in FL \citep{takezawa2025fedmuon}. 

\section{Related Works}\label{sec:related_works}

\subsection{Methods for Communication Compression} 
Compression aims to reduce the size of the transmitted message while retaining nearly all essential information. 
Two common approaches are quantization ~--~ reducing the bit precision of each parameter \citep{alistarh2017qsgd,bernstein2018signsgd}, and sparsification ~--~ removing a subset of the vector's values \citep{lin2017deep,stich2018sparsified}.

Theoretically, compressors fall into two commonly studied classes, unbiased and contractive \citep{beznosikov2023biased}. For unbiased compressors, the expectation of the output is equal to the original vector, while contractive compressors do not have this property. 
Unbiased compression is conceptually similar to the stochasticity of gradients in the finite-sum setting. 
In both cases, the distortion is bounded but does not vanish as the iterate approaches the optimum. Hence, techniques developed for the stochastic setting carry over. Namely, the majority of works in this line are driven by Variance Reduction (VR) \citep{johnson2013accelerating,schmidt2017minimizing} mechanisms that were initially designed for stochastic gradients \citep{gorbunov2021marina,beznosikov2022distributed,tyurin2022dasha,mishchenko2025distributed}. However, unbiased compressors tend to perform poorly in practice \citep[Figure 5]{beznosikov2023biased}.
Thus, we focus on contractive compression. 

Notably, simple \texttt{SGD} with an arbitrary contractive compressor does not converge. To overcome the obstacle, special techniques are required, the most widely used of which being Error Feedback (\texttt{EF}). First appeared as a heuristic \citep{seide20141}, \texttt{EF} later received a theoretical foundation with arbitrary compression \citep{karimireddy2019error,stich2020error,beznosikov2023biased}.
The core concept of \texttt{EF} is combining the current message with the error from the previous compression, which compensates for inaccuracies at previous steps. Crucially, \texttt{EF} allows one to establish convergence. \texttt{EF21} is a noteworthy advancement that improves convergence rates in the non-convex setup by utilizing a distinct error accounting procedure
\citep{richtarik2021ef21}. The idea is to compress the residuals instead of the full message. Both \texttt{EF} and \texttt{EF21} have found numerous applications in FL \citep{karimireddy2019error,li2023analysis,bylinkin2025robust}.

\subsection{Local Steps} 
Another approach to tackling the communication bottleneck is local steps. Instead of calculating a single gradient, each client performs several steps over its local dataset and communicates the accumulated model update. This way, the costly server-side aggregation is performed less often. The technique is extensively studied \citep{woodworth2020local,khaled2020tighter,koloskova2020unified,gorbunov2021local}.

Local steps have a crucial downside: clients lose the alignment of their training trajectories between aggregations.
%Thus, the technique tends to be less efficient towards the problems with strong heterogeneity. 
As a result, under strong data heterogeneity, the local models diverge, and the training process stalls. Early work introduced similarity assumptions across clients, such as identical data or bounded gradients \citep{koloskova2020unified,gorbunov2021local}. In practice, most of these assumptions are infeasible \citep[Table 3]{karimireddy2020scaffold}.
The issue was tackled by the \texttt{SCAFFOLD} algorithm \citep{karimireddy2020scaffold}. The authors introduced control variates that allow clients to maintain the correct update direction during local steps. This method was later extended with compression in \texttt{SCAFCOM} \citep{huang2024stochastic}.

\vspace{5mm}

\subsection{Spectral Methods in FL}
\citet{jordan2024muon} introduced \texttt{Muon}, a matrix-structure-aware method. In a nutshell, the algorithm orthogonalizes the updates via SVD, normalizing the gradients and thereby stabilizing training. Notably, this orthogonalization can be viewed as a Linear Minimization Oracle operator over the gradients in the spectral matrix norm \citep{bernstein2024old,pethick2025training}. Moreover, \citet{kim2026convergence} established the convergence of \texttt{Muon} under inexact SVD computed by the Newton-Schulz iteration, further motivating its practical adoption.
These theoretical insights and its practical performance have drawn considerable attention, motivating a growing body of work that studies and extends the method \citep{liu2025muon,qian2025muon,ahn2025dion,si2025adamuon,liu2026muon,zhang2026teon}.

Some recent papers employed spectral methods in a distributed manner \citep{ahn2025dion,therien2025muloco}, but only in the homogeneous setting. As discussed above, this assumption is unrealistic.
The first work to fully address this was \citep{takezawa2025fedmuon}. The authors use a general scheme of \texttt{SCAFFOLD} \citep{karimireddy2020scaffold}. Their \texttt{FedMuon} replaces the plain gradients in the local steps of \texttt{SCAFFOLD} with LMO updates. However, the scheme assumes the transmission of full model updates and control variates, which incurs communication overhead.
%The authors achieved a $\mathcal{O}(\nicefrac{1}{R^{\nicefrac{1}{2}}})$ convergence rate. 
Subsequently, compression was applied to \texttt{Muon} updates via \texttt{EF21} \citep{gruntkowska2025error} and the VR technique of \texttt{VR-MARINA} \citep{qian2026communication}. Crucially, both works forgo local steps, accessing the LMO once per round at the server -- yet local steps are a pillar of communication-efficient FL. The communication bottleneck for \texttt{Muon}-like methods, therefore, remains unresolved.
%To the best of our knowledge, our paper is the first in the literature to analyse local steps and contractive compression in FL setting from the standpoint of LMO updates.

\section{Contribution}\label{sec:contribution}
We summarize the contribution of our work in this section.
\begin{itemize}
\item We propose \texttt{FedCALM-D} (\textbf{Fed}erated \textbf{C}ompressed \textbf{A}nd \textbf{L}ocal \textbf{M}uon — \textbf{D}eterministic), a novel scheme that integrates both compression and local steps into \texttt{Muon}-like LMO updates in the deterministic FL setting. This requires more than a direct application of existing tools: we adapt the compression mechanism to the structure of the LMO update.
\item We extend the scheme to the stochastic setting and obtain our main algorithm, \texttt{FedCALM}. 
\item We establish rigorous convergence guarantees for both methods in the non-convex setting under mild assumptions. Crucially, our analysis is heterogeneity-free.
\item We validate our method on an image-classification benchmark (\textit{ResNet-18} on \textit{CIFAR-10}) and further on a language task (\textit{LoRA} fine-tuning of \textit{RoBERTa} on the \textit{GLUE} benchmark), confirming its practical advantages across modalities. In terms of communicated bits, \texttt{FedCALM} outperforms all baselines in both convergence speed and final accuracy.
\end{itemize}

\section{Main Results}\label{sec:main}
\subsection{Preliminaries}

First, we introduce the relevant notation for the equations in the following sections. 

\paragraph{Matrix norms.}
We consider $\|\cdot\|$ as the spectral matrix norm in $\mathbb{R}^{d_1\times d_2}$, $\|\cdot\|_*$ denotes its conjugate nuclear norm, and $\|\cdot\|_F$ is Frobenius norm.
%With the slight abuse of notation, we consider the matrices $\mathbb{R}^{d_1\times d_2}$ and vectors to be the same objects. We interpret obects as matrices only for the purposes of caclulating the appropriate norms, and otherwise treat those as vectors. 
The following inequalities hold for introduced norms with $\rho = \sqrt{\min(d_1,d_2)}$ \citep[Proposition 2]{kim2026convergence}:
\begin{align}\label{eq:norms}
    \|x\| \leq \|x\|_F \leq \|x\|_*\leq \rho\|x\|_F,
\end{align}
\paragraph{Compression.}
Since the cornerstone of our paper is compression, we define the corresponding operator.
\begin{definition}
    Operator $C: \mathbb{R}^{d_1\times d_2}\rightarrow \mathbb{R}^{d_1\times d_2}$ is a contractive compressor if there exists a constant $\beta_1>1$ such that
    \begin{align}\label{eq:compres}
        \mathbb{E}[\|C(x) - x\|] \leq& \left(1-\frac1{\beta_1}\right)\|x\|.
    \end{align}
\end{definition}
\begin{remark}
    We also assume that $C$ is a contractive compressor in the conjugate norm if there exists a constant $\beta_2>1$ such that
    \begin{align*}
        \mathbb{E}[\|C(x) - x\|_*] \leq& \left(1-\frac1{\beta_2}\right)\|x\|_*.
    \end{align*}
\end{remark}
We denote $\beta = \max(\beta_1,\beta_2)$. An example of a contractive compressor in matrix norm is truncated SVD:
$$ C(X) = U^T\Sigma_kV, $$
where $X=U^T\Sigma V$ is a full SVD decomposition, and $\Sigma_k$ retains $k$ largest singular values $\sigma_1\dots \sigma_k$ of $X$. The compressor satisfies the introduced definitions with $\beta_1 = \frac{\sigma_1}{\sigma_1-\sigma_{k+1}}, $ and $\beta_2 = \frac{\sum_{j=1}^r\sigma_j}{\sum_{j=1}^k\sigma_j}$.
%The canonical example of a contractive compressor is \textit{Top}$K$. It retains the $K$ entries of largest absolute value and zeroes out the rest. Although classically defined for vectors, the operator applies analogously to matrices, keeping the $K$ largest-magnitude entries of $x \in \mathbb{R}^{d_1 \times d_2}$.
\paragraph{LMO.} 
The spectral optimization is performed via the Linear Minimization Oracle (LMO). It refers to the solution of the following subproblem:
\begin{align*}
    \operatorname{lmo}(x) = \arg\min_{y\in\mathbb{R}^{d_1\times d_2},\|y\|\leq 1} \langle x,y\rangle.
\end{align*}
By definition, for any vector $u$ we have 
$\|\operatorname{lmo}(u)\|\leq1$.
We refer to this equation as the unit-length of LMO.
The other distinctive feature of LMO is described in Lemma \ref{lem:lmo}:
\begin{align*}
    \langle x, \operatorname{lmo}(x)\rangle = -\|x\|_*.
\end{align*} The proof can be found in the appendix.
\paragraph{Newton-Schultz iterations.}
Since the exact calculation of LMO requires costly SVD, in practice, an approximation of the operator is calculated via the Newton-Schulz iterations \citep{kim2026convergence}.
\paragraph{Algorithmic notation.}
Next, we explain the notation of the algorithmic variables. 
$x^r\in\mathbb{R}^{d_1\times d_2}$ denotes the global model weights at the start of round $r$, $M$ is the total number of clients. $\eta$ is the stepsize, and $ y_i^r$ is the full update for client $i$ after $K$ local steps. The local and global control variates originating from \texttt{SCAFFOLD} are refered to as $c_i^r$ and $c^r$, respectively.
\paragraph{Assumptions.}
In our proofs, we assume the $L$-smoothness of the objective, following the LMO analysis in prior work \citep{pethick2025training,riabinin2025gluon}.
\begin{assumption}\label{ass:smooth}
    The objective $f_i$ for each client $i$ in $L_*$-smooth: for any $x,y\in\mathbb{R}^{d_1\times d_2}$,
    \begin{align*}
        \|\nabla f_i(x) - \nabla f_i(y)\|_* \leq L_* \|x - y\|.
    \end{align*}
\end{assumption}
The loss landscapes of neural networks are typically highly non-convex \citep{cybenko1989approximation}. As we target practical, real-world scenarios, our analysis focuses on establishing convergence in the non-convex setting.
\begin{assumption}\label{ass:non-convex}
    The function $f$ is non-convex with at least one global minimum: there exists may be not unique $x^*$ such that
        \begin{align*}
            f(x^*) = \inf\limits_{x \in \mathbb{R}^d} f(x) > -\infty.
        \end{align*}
\end{assumption}
% \begin{remark}
%     The two inequalities above are not stronger than any one of them alone. Using the norm equivalence \ref{eq:norms}, one can be derived from another:
%     $$\frac{c_*}{C}L_2 \leq L_*\leq \frac{C_*}cL_2$$
%     The two separate inequalities are assumed for convinience and clarity.
% \end{remark}
\paragraph{Communication.}
Finally, we introduce two notions of communication efficiency.
\begin{definition}
The number of communication rounds (round complexity) is the number of times the server communicates with the clients.
\end{definition}
This notion disregards which clients take part in a round and the sizes of the transmitted messages.
\begin{definition}\label{def:cc}
Communication time (time complexity) is the total time of communication with all clients, assuming their channels are of equal quality.
\end{definition}
Unlike the first, this notion accounts for the sizes of the messages. Unless stated otherwise, we adopt Definition \ref{def:cc} as our measure of communication complexity, since it captures the practical advantages of compression: compression reduces the transmission time of each round, which in turn accelerates convergence. Since we assume all channels are equal, communication time is proportional to the total number of transmitted bits; we therefore report communicated bits in our experiments as a direct proxy for time complexity.

\subsection{Motivation}\label{sec:motivation}
This section explains the design choices of our algorithm. First, we consider the deterministic setting. 

\textbf{Local drift.} As discussed in Section \ref{sec:related_works}, current federated methods with LMO updates lack one of two FL techniques: compression \citep{takezawa2025fedmuon} or local steps \citep{gruntkowska2025error,qian2026communication}. We aim to implement both in the same scheme. Since we assume no heterogeneity bound, local steps create client drift and spoil the convergence \citep[Figure 1]{karimireddy2020scaffold}. To overcome this, we consider control variates of \texttt{SCAFFOLD}. 
% The idea behind these is simple. The control variates for each client approximate respective gradients:
% \begin{equation}\label{eq:c_approx}
%     c_i^r \approx \nabla f_i(x^r),\ c^r = \frac1M\sum_{i=1}^Mc_i^r \approx \nabla f(x^r).
% \end{equation}
% This approximation holds during the local training because of the LMO unit-length feature. The client drift is limited over one round (Lemma \ref{lem:dist}):
% \begin{align*}
%     \|x_i^{r,K} - x_i^{r,0}\| = \mathcal{O}(\eta K).
% \end{align*} 
% Combined with smoothness (Assumption \ref{ass:smooth}), this leads to the bounded gradient drift:
% \begin{align}\label{eq:grad_drift1}
%     \|\nabla f_i(x^{r,k}_i) - \nabla f_i(x_i^{r,0})\|_* =\mathcal{O}( L_* \eta K).
% \end{align}
% Thus, clients maintain alignment between the aggregations and avoid divergence:
% \begin{align*}
%     x^{r,k+1}_i = x^{r,k}_i + \eta \operatorname{lmo}(\nabla f_i(x_i^{r,k}) - c_i^r + c^r) 
%     \approx x^{r,k}_i + \eta \operatorname{lmo}(\nabla f(x^r)).
% \end{align*} 
% The bound makes the scheme viable as long as we can preserve $c_i^r \approx \nabla f_i(x^r)$ approximation.

\textbf{Two compressions.} With compression, the main challenge arises: \textbf{how to properly implement compressors in the scheme?} We need to communicate both the local updates $y_i^r=x_i^{r,K}-x_i^{r,0}$ and control variates $c_i^r$. \citet{huang2024stochastic} faced a similar challenge in the design of \texttt{SCAFCOM}. The authors managed to combine the two messages into one and apply \texttt{EF21}. The update for the control variate $\Delta c_i^r$ can be expicitly derived from the update $y_i^r$ for the model.

However, this unification is impossible in LMO setting. Client drift has an LMO structure and \textit{provides no information about the scale of the gradient} due to the unit length of LMO, which is essential for the control variate update. \textit{Thus, the client drift and the control-variate update cannot be encoded in a single message and call for two distinct compression steps.}

\textbf{Model update.} Let us discuss the compression for model update $y_i^r$. For \texttt{SCAFCOM}, \texttt{EF21} was fitting due to the diminishing gradients. As the model approaches the optimal point, $\|\nabla f(x^r)\|\rightarrow 0$ and $\|y_i^r\|\rightarrow 0$ lead to smaller errors. This is not the case in the spectral setting. 
Since the update has LMO nature, it therefore always has the size $\mathcal{O}(\eta K)$. The direction is what changes across rounds. Even near the optimal point, $\|y_i^{r+1}-y_i^r\|=\mathcal{O}(\eta K)$. \texttt{EF21} scheme would introduce bias at each round, and of the same magnitude as the full vector $y_i^r$. This creates non-vanishing error and prevents convergence. 
\begin{algorithm}{\texttt{FedCALM-D}}\label{alg:fedcalm-d}
\begin{algorithmic}[1]
\State \textbf{Input:} initial point $x^0 \in \mathbb{R}^d$, $c^{0} = \nabla f(x^0)$, $e_i^{0} = 0,c_i^0=\nabla f_i(x^0)$ for every client $i=1,\dots M$, number of rounds $R$, round length $K$
\State \textbf{Hyperparameters:} stepsize $\eta > 0$, number of Newton-Schulz steps $q$
\For{$r = 0, 1, \ldots, R-1$}
  \For{clients $i = 1, \ldots, M$}
    \State $x_i^{r,0} = x^r $ \label{line:fedcalm-d:init_x}
    \For{local step $k = 0,1, \ldots, K-1$}
      \State $x_i^{r,k+1} = x_i^{r,k} + \eta\texttt{NS}^q(\nabla f_i(x_i^{r,k}) - {c}_i^r + c^r) $ \label{line:fedcalm-d:local_step}
    \EndFor
    \State $y_i^{r+1} = x^{r,K}_i - x_i^{r,0} $ \label{line:fedcalm-d:update}
    \State $v_i^{r+1} = C(y_i^{r+1} + e_i^{r}) $ \label{line:fedcalm-d:compress_update}
    \State $e_i^{r+1} = e_i^r + y_i^{r+1} - v_i^{r+1} $ \label{line:fedcalm-d:error}
    \State Clients send $v_i^{r+1}$ to the server \label{line:fedcalm-d:collect_updates}
    \State $\delta_i^{r+1} = C(\nabla f_i(x_i^{r,K}) - c_i^{r}) $\label{line:fedcalm-d:ef21_1}
    \State $c_i^{r+1} = c_i^{r} + \delta_i^{r+1} $  \label{line:fedcalm-d:ef21_2}
    \State Clients send $\delta_i^{r+1}$ to the server \label{line:fedcalm-d:collect_c}
  \EndFor
  \State Server updates $x^{r+1} = x^{r} + \frac{1}{M}\sum_{i=1}^Mv_i^{r+1} $ \label{line:fedcalm-d:global_update}
  \State Server updates $c^{r+1} = c^r + \frac1M\sum_{i=1}^M\delta_i^{r+1} $ \label{line:fedcalm-d:update_c}
  \State Server sends $x^{r+1}, c^{r+1}$ to all clients \label{line:fedcalm-d:send_new_point}
\EndFor
\State \Return $\frac1R\sum_{r=0}^{R-1}x^r$
\end{algorithmic}
\end{algorithm}

\textbf{Control variates.} The motivation for the implementation of control variates instantly suggests the error compensation scheme to use ~--~ \texttt{EF21}. Contrary to the model update, the gradient changes slowly across the rounds due to nature of the model update, and \texttt{EF21} is designed to track these small changes. 
Moreover, for $c_i^r$, the natural transition to the virtual sequence of \texttt{EF} is impossible. 

This yields \texttt{FedCALM-D} (\textbf{Fed}erated \textbf{C}ompressed \textbf{A}nd \textbf{L}ocal \textbf{Muon-Deterministic}\textbf{D}) given in Algorithm \ref{alg:fedcalm-d}.
\subsection{Deterministic Algorithm}
We describe the \texttt{FedCALM-D} algorithm in detail. At the start, the control variates are initialized to the initial gradients. A full, uncompressed gradient communication is required at this stage, but not at any other point; this one-time transmission does not affect the convergence rate. Moreover, a similar heavy initialization is typical for methods with compression \citep{richtarik2021ef21,huang2024stochastic}. At each round, all clients perform local training in parallel, starting from the shared point $x^r$ received from the server (Line \ref{line:fedcalm-d:init_x}). 
The local steps follow the discussed structure of \texttt{SCAFCOM} (Line \ref{line:fedcalm-d:local_step}).
% \begin{align*}
%     x_i^{r,k+1} = x_i^{r,k} + \eta\texttt{NS}^q(\nabla f_i(x_i^{r,k}) - {c}_i^r + c^r).
% \end{align*}
Here, \texttt{NS}$^q(\cdot)$ denotes $q$ steps of the Newton-Schulz algorithm \citep{kim2026convergence}. As discussed in Section \ref{sec:motivation}, the clients maintain $c_i^r \approx \nabla f_i(x^r),~c^r\approx \nabla f(x^r)$, which provides $\nabla f_i(x_i^{r,k}) - {c}_i^r + c^r \approx \nabla f(x^r)$. For sufficiently small steps, this estimate is expected to hold. After $K$ local steps, the total client update $y_i^{r+1}$ is compressed and sent to the server, and the clients' errors are corrected via \texttt{EF} (Lines \ref{line:fedcalm-d:update}--\ref{line:fedcalm-d:collect_updates}). The control variates are in turn updated via \texttt{EF21} (Lines \ref{line:fedcalm-d:ef21_1}--\ref{line:fedcalm-d:collect_c}). 
The server collects the compressed updates $v_i^{r+1}$ and $\delta_i^{r+1}$ and utilizes those to compute the new point (Line \ref{line:fedcalm-d:global_update}) and the new global control variate $c^{r+1}$ (Line \ref{line:fedcalm-d:update_c}). The vectors are then and broadcasted to all clients at the end of the round (Line \ref{line:fedcalm-d:send_new_point}). 
 
% Here, \texttt{NS}$^q(\cdot)$ denotes $q$ steps of the Newton-Schulz algorithm \citep{kim2026convergence}. As discussed in Section \ref{sec:motivation}, the clients maintain $c_i^r \approx \nabla f_i(x^r),~c^r\approx \nabla f(x^r)$, which provides $\nabla f_i(x_i^{r,k}) - {c}_i^r + c^r \approx \nabla f(x^r)$. For sufficiently small steps, this estimate is expected to hold. After $K$ local steps, the total client update $y_i^{r+1}$ is compressed and sent to the server, and the clients' errors are corrected via \texttt{EF} (Lines \ref{line:fedcalm-d:update}--\ref{line:fedcalm-d:collect_updates}). The server then computes the new iterate as the averaged update (Line \ref{line:fedcalm-d:global_update}) and transmits it to the clients.
 
% After updating the server model $x^{r+1}$, a second stage of the round begins. \textit{This is a crucial algorithmic distinction of our method from \texttt{FedMuon} and \texttt{SCAFCOM}}. We update the control variates at the new point $x^{r+1}$ via \texttt{EF21} (Lines \ref{line:fedcalm-d:ef21_1}--\ref{line:fedcalm-d:collect_c}). The server computes the new global control variate $c^{r+1}$ (Line \ref{line:fedcalm-d:update_c}) and broadcasts it to all clients at the end of the round (Line \ref{line:fedcalm-d:send_c}). This separate aggregation allows the clients to begin the next round with up-to-date control variates.
 
Note that the setting is deterministic: we assume access to full gradients $\nabla f_i(x)$. In real-world applications, this makes \texttt{FedCALM-D} impractical, and that is by design: it serves as a theoretical milestone that converges provably under both compression and local steps. We now state the convergence results for Algorithm \ref{alg:fedcalm-d}. The proof of Theorem \ref{th:conv_fedcalm-d} can be found in the appendix.
\begin{theorem}\label{th:conv_fedcalm-d}
    Suppose Assumptions \ref{ass:smooth}, \ref{ass:non-convex} hold. Then Algorithm \ref{alg:fedcalm-d} to achieve $\varepsilon$-accuracy needs
    \begin{equation*}
     R = \mathcal{O}\left(\frac{\beta^2\Delta L_*}{\varepsilon^2(1-\varepsilon_q)^2}\right) \text{~~communication rounds,}
    \end{equation*}
    where $\displaystyle\varepsilon = \mathbb{E}\left[\frac1R\sum_{r=0}^{R-1}\|\nabla f(x^r)\|_*\right]$, $\Delta=f(x^0)-{f(x^*)}$, and $\varepsilon_q$ is the Newton-Schulz steps error.
\end{theorem}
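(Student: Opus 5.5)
We will use the standard virtual-sequence argument for error feedback. Define the virtual iterate $\tilde x^r = x^r + \frac1M\sum_{i=1}^M e_i^r$. Lines \ref{line:fedcalm-d:error} and \ref{line:fedcalm-d:global_update} then give the exact recursion
\begin{align*}
\tilde x^{r+1} = \tilde x^r + \frac1M\sum_{i=1}^M y_i^{r+1} = \tilde x^r + \frac{\eta}{M}\sum_{i=1}^M\sum_{k=0}^{K-1}\texttt{NS}^q(g_i^{r,k}),
\end{align*}
with $g_i^{r,k} = \nabla f_i(x_i^{r,k}) - c_i^r + c^r$. We will apply the $L_*$-smoothness descent inequality $f(\tilde x^{r+1}) \le f(\tilde x^r) + \langle \nabla f(\tilde x^r), \tilde x^{r+1}-\tilde x^r\rangle + \frac{L_*}{2}\|\tilde x^{r+1}-\tilde x^r\|^2$. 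The last term is at most $\frac{L_*}{2}\eta^2K^2$ by the unit length of the LMO; we take the inexact Newton--Schulz output to have spectral norm at most one as well. For the inner product, we write $\texttt{NS}^q(g)$ as $\operatorname{lmo}(g)$ plus an error. We then use the Newton--Schulz guarantee of \citep{kim2026convergence} in the form $\langle g, \texttt{NS}^q(g)\rangle \le -(1-\varepsilon_q)\|g\|_*$, which combined with Lemma \ref{lem:lmo} yields
\begin{align*}
\langle \nabla f(\tilde x^r), \texttt{NS}^q(g_i^{r,k})\rangle \le -(1-\varepsilon_q)\|\nabla f(\tilde x^r)\|_* + 2\|g_i^{r,k} - \nabla f(\tilde x^r)\|_*.
\end{align*}
This uses $\|\texttt{NS}^q\|\le 1$ and spectral/nuclear duality.

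The main work is bounding the deviation $\|g_i^{r,k}-\nabla f(\tilde x^r)\|_*$. We split it into three parts.

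First, the local drift: $\|\nabla f_i(x_i^{r,k}) - \nabla f_i(x^r)\|_* \le L_*\eta K$, since each local step moves by at most $\eta$ in spectral norm.

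Second, the control-variate errors $\|c_i^r - \nabla f_i(x^r)\|_*$ and $\|c^r-\nabla f(x^r)\|_* \le \frac1M\sum_i\|c_i^r-\nabla f_i(x^r)\|_*$. Let $D_i^r = \|c_i^r - \nabla f_i(x_i^{r-1,K})\|_*$ be the \texttt{EF21} tracking error at the point where the gradient was queried. Contractivity in the nuclear norm gives $D_i^{r+1} \le (1-\frac1\beta)\|\nabla f_i(x_i^{r,K}) - c_i^r\|_*$. The gradient moves between consecutive query points by at most $L_*\|x_i^{r,K} - x_i^{r-1,K}\| \le L_*(\|x^{r+1}-x^r\|+2\eta K)$. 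Unrolling this geometric recursion gives $D^r = \mathcal{O}(\beta L_*\eta K\cdot(\text{stuff}))$. There is a subtlety: $\|x^{r+1}-x^r\|\le \frac1M\sum_i\|v_i^{r+1}\|$, and this requires a bound on $\|e_i^r\|$. We therefore first bound the \texttt{EF} error: $\|e_i^{r+1}\| \le (1-\frac1\beta)(\|e_i^r\|+\eta K)$ in expectation, so $\|e_i^r\|\le \beta\eta K$. Consequently $\|v_i^{r+1}\|\le \|e_i^r\|+\|y_i^{r+1}\|+\|e_i^{r+1}\| = \mathcal{O}(\beta\eta K)$, and the control-variate errors are $\mathcal{O}(\beta^2 L_*\eta K)$. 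The switch from $x_i^{r-1,K}$ to $x^r$ costs another $L_*\mathcal{O}(\eta K)$.

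Third, the virtual-point gap: $\|\nabla f(x^r)-\nabla f(\tilde x^r)\|_*\le L_*\frac1M\sum_i\|e_i^r\| \le L_*\beta\eta K$.

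Altogether the deviation is $\mathcal{O}(\beta^2 L_*\eta K)$. We expect this uniform, non-telescoping control of the two coupled error sequences (\texttt{EF} for $y$ and \texttt{EF21} for $c$) to be the main obstacle. It must be done carefully so that the powers of $\beta$ come out as claimed, possibly $\beta$ rather than $\beta^2$ if the $e$-bound is absorbed more carefully.

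Plugging in, we get the per-round inequality $f(\tilde x^{r+1}) \le f(\tilde x^r) - (1-\varepsilon_q)\eta K\|\nabla f(\tilde x^r)\|_* + \mathcal{O}(\beta^2 L_*\eta^2K^2)$. We then convert back to $x^r$ via $\|\nabla f(x^r)\|_*\le\|\nabla f(\tilde x^r)\|_*+L_*\beta\eta K$. Telescoping over $r$ uses $\tilde x^0=x^0$ and $f(\tilde x^R)\ge f(x^*)$ (Assumption \ref{ass:non-convex}), and gives
\begin{align*}
\frac1R\sum_{r=0}^{R-1}\mathbb{E}\|\nabla f(x^r)\|_* \lesssim \frac{\Delta}{(1-\varepsilon_q)\eta KR} + \frac{\beta^2L_*\eta K}{1-\varepsilon_q}.
\end{align*}
Choosing $\eta K = \sqrt{\Delta/(\beta^2L_*R)}$ yields $\varepsilon \lesssim \beta\sqrt{\Delta L_*/R}/(1-\varepsilon_q)$. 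Inverting this gives $R=\mathcal{O}\!\left(\frac{\beta^2\Delta L_*}{\varepsilon^2(1-\varepsilon_q)^2}\right)$, as stated in Theorem \ref{th:conv_fedcalm-d}.
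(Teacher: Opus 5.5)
Your proposal is correct and follows the paper's proof essentially step for step. The shared steps are the error-feedback virtual sequence, the smoothness descent with the LMO/Newton--Schulz inner-product bound, and a split of the direction error into local drift, an \texttt{EF21} control-variate recursion driven by the $\mathcal{O}(\beta\eta K)$ round-step bound (from $\|e_i^r\|\le\beta\eta K$), and the virtual-point gap, closed by $\eta K\propto\sqrt{\Delta/(\beta^2L_*R)}$. The remaining differences are bookkeeping only: you track the \texttt{EF21} error at $x_i^{r-1,K}$ rather than $x^r$ (the switch costs $\mathcal{O}(\beta L_*\eta K)$, not $\mathcal{O}(L_*\eta K)$, still dominated by the $\beta^2$ term), you state the descent at $\tilde x^r$ before converting, your sign $\tilde x^r=x^r+\frac1M\sum_i e_i^r$ is the one consistent with the algorithm's updates, and the extra assumption $\|\texttt{NS}^q\|\le 1$ is not needed, since the paper's $\|\texttt{NS}^q\|\le 1+\varepsilon_q$ gives the same constant $2$ in your key inequality.
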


\textbf{Discussion.}  
Our asymptotic rate coincides with that of \texttt{SCAFCOM} when only the non-stochastic terms are considered \citep[Theorem 3]{huang2024stochastic}:
\begin{align*}
    R = \mathcal{O}\left(\frac{\beta\Delta L_F}{\varepsilon^2}\right), \quad \text{where} \quad \varepsilon =\mathbb{E}\left[\frac1R\sum_{r=0}^{R-1}\|\nabla f(x^r)\|_F\right].
\end{align*}
Thus, in the deterministic setting, Algorithm \ref{alg:fedcalm-d} attains the same asymptotic rate in $\varepsilon$ under both compression and local steps. Our bound nonetheless differs from that of \texttt{SCAFCOM}.
 
First, we state convergence in the nuclear norm, which upper-bounds the Frobenius norm \eqref{eq:norms}. Consequently, driving $\|\nabla f(x^r)\|_*$ below $\varepsilon$ automatically drives $\|\nabla f(x^r)\|_F$ below the same $\varepsilon$, so our stationarity criterion is strictly stronger. Empirically, the Hessian spectrum of deep networks is approximately sparse, concentrated on a few large eigenvalues \citep{papyan2018full}, which keeps the two smoothness constants comparable, $L_* \approx L_F$. In this regime the two bounds carry essentially the same constants, and Algorithm \ref{alg:fedcalm-d} therefore certifies a stronger criterion at the same asymptotic cost --- mirroring the advantage of \texttt{Muon} over Euclidean methods.

Inexact LMO analysis also introduces the Newton-Schulz error $\varepsilon_q$, which is thoroughly analysed by \citet{kim2026convergence}. The authors state it exponentially converges to zero, making it negligible.
 
Further, our bound carries a higher power of $\beta$. This stems from our compression procedure: the output of the first compressor (Line \ref{line:fedcalm-d:compress_update}) enters, indirectly, the argument of the second (Line \ref{line:fedcalm-d:ef21_1}). Such nested compression is known to introduce an additional factor of $\beta$ \citep{gao2025accelerated}:
\begin{align*}
    c_i^{r+1} = c_i^r + C\left[ \nabla f_i\!\left( x^r + \frac1M\sum_{i=1}^M C(y_i^{r+1}+e_i^r) \right) - c_i^r \right].
\end{align*}

\subsection{Stochastic Algorithm}
We now turn to the stochastic setting, where computing full gradients is infeasible and only unbiased estimates are available. We denote a stochastic gradient by $g_i(x,\xi)$, where $i$ indexes the client and $\xi$ is a random sample from its local dataset. We adopt the standard assumption of bounded variance.
\begin{assumption}\label{ass:stoch}
    An unbiased stochastic gradient is available for client $i$ with bounded variance:
    \begin{align*}
        &\mathbb{E}[g_i(x,\xi)]=\nabla f_i(x),
        \\&\mathbb{E}\|g_i(x,\xi) - \nabla f_i(x)\|_2^2 \leq \sigma^2.
    \end{align*}
\end{assumption}
\begin{algorithm}{\texttt{FedCALM}}\label{alg:fedcalm}
\begin{algorithmic}[1]
\State \textbf{Input:} initial point $x^0 \in \mathbb{R}^d$, $c^{0} = \nabla f(x^0)$, $e_i^{0} = 0,c_i^0=\nabla f_i(x^0)$ for every client $i=1,\dots M$, number of rounds $R$, round length $K$
\State \textbf{Hyperparameters:} stepsize $\eta > 0$, momentum $\alpha \in [0;1]$, number of Newton-Schulz steps $q$
\For{$r = 0, 1, \ldots, R-1$}
  \For{clients $i = 1, \ldots, M$}
    \State $x_i^{r,0} = x^r, m_i^{r,0}=m_i^{r-1,K} $ \label{line:fedcalm:init_round}
    \For{local step $k = 0,1, \ldots, K-1$}
      \State $m_i^{r,k+1} = (1-\alpha)m_i^{r,k} + \alpha g_i(x_i^{r,k},\xi_i^{r,k}) $\label{line:fedcalm:local_momentum}
      \State $x_i^{r,k+1} = x_i^{r,k} + \eta\texttt{NS}^q(\nabla f_i(x_i^{r,k}) - {c}_i^r + c^r) $ \label{line:fedcalm:local_step}
    \EndFor
    \State $y_i^{r+1} = x^{r,K}_i - x_i^{r,0} $ \label{line:fedcalm:update}
    \State $v_i^{r+1} = C(y_i^{r+1} + e_i^{r}) $ \label{line:fedcalm:compress_update}
    \State $e_i^{r+1} = e_i^r + y_i^{r+1} - v_i^{r+1} $ \label{line:fedcalm:error}
    \State Clients send $v_i^{r+1}$ to the server \label{line:fedcalm:collect_updates}
    \State $\delta_i^{r+1} = C(m_i^{r,K} - c_i^{r}) $\label{line:fedcalm:ef21_1}
    \State $c_i^{r+1} = c_i^{r} + \delta_i^{r+1} $  \label{line:fedcalm:ef21_2}
    \State Clients send $\delta_i^{r+1}$ to the server \label{line:fedcalm:collect_c}
  \EndFor
  \State Server updates $x^{r+1} = x^{r} + \frac{1}{M}\sum_{i=1}^Mv_i^{r+1} $ \label{line:fedcalm:global_update}
  \State Server updates $c^{r+1} = c^r + \frac1M\sum_{i=1}^M\delta_i^{r+1} $ \label{line:fedcalm:update_c}
  \State Server sends $x^{r+1}, c^{r+1}$ to all clients \label{line:fedcalm:send_new_point}
\EndFor
\State \Return $\frac1R\sum_{r=0}^{R-1}x^r$
\end{algorithmic}
\end{algorithm}

We modify Algorithm \ref{alg:fedcalm-d} by adding a momentum, which leads to our main algorithm \texttt{FedCALM} (\textbf{Fed}erated \textbf{C}ompressed \textbf{A}nd \textbf{L}ocal \textbf{M}uon), given in Algorithm \ref{alg:fedcalm}.
\texttt{FedCALM} is a generalization of \texttt{FedCALM-D} with the momentum. The momentum in the local steps (Line \ref{line:fedcalm:local_momentum}) mitigates the variance within a round; this update rule coincides with that of \texttt{FedMuon}. The introduced momentum becomes the target for control variate update (Line \ref{line:fedcalm:ef21_1}). Notably, if one sets $\alpha = 1$ in Algorithm \ref{alg:fedcalm}, that would directly yield Algorithm \ref{alg:fedcalm-d}. 
%Second, the \texttt{EF21} procedure is substantially modified: we introduce a gradient surrogate $s_i^r$, updated with VR as in \citet{qian2025muon}, and use it as the target for \texttt{EF21} in place of the true gradient $\nabla f_i(x^{r+1})$ used in \texttt{FedCALM-D}. Notably, \texttt{FedCALM} is a strict generalization: setting $\theta=\alpha=1$ recovers Algorithm \ref{alg:fedcalm-d}.

\begin{theorem}\label{th:conv_fedcalm}
    Suppose Assumptions \ref{ass:smooth}, \ref{ass:non-convex}, and \ref{ass:stoch} hold. Then Algorithm \ref{alg:fedcalm} to achieve $\varepsilon$-accuracy needs
    \begin{equation*}
     R = \mathcal{O}\left(\frac{{\Delta L_* \beta^2}}{\varepsilon^2(1-\varepsilon_q)^2} + \frac{\Delta L_*\rho^2\sigma^2\beta}{M\varepsilon^4(1-\varepsilon_q)^4}\right)\text{~communication rounds,}
    \end{equation*}
    where $\displaystyle\varepsilon = \mathbb{E}\left[\frac{1}{R}\sum_{r=0}^{R-1}\|\nabla f(x^r)\|_*\right]$, $\Delta=f(x^0)-{f(x^*)}$, and $\varepsilon_q$ is of Newton-Schulz steps error.
\end{theorem}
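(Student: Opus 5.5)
We would follow the standard perturbed-iterate analysis for error feedback, combined with the descent lemma for LMO steps. Introduce the virtual sequence $\tilde x^r = x^r + \frac1M\sum_{i=1}^M e_i^r$. By Lines \ref{line:fedcalm:compress_update}--\ref{line:fedcalm:global_update} it evolves without compression:
\begin{align*}
\tilde x^{r+1}=\tilde x^r+\frac1M\sum_{i=1}^M y_i^{r+1}.
\end{align*}
Each $y_i^{r+1}$ is a sum of $K$ inexact LMO directions with $\|y_i^{r+1}\|\le \eta K$. Using \eqref{eq:compres} recursively, we get $\mathbb{E}\|e_i^{r}\|\le \beta\eta K$: the error norm satisfies $\mathbb{E}\|e^{r+1}\|\le(1-1/\beta)(\mathbb{E}\|e^r\|+\eta K)$. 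Hence $\|\tilde x^r-x^r\|=\mathcal O(\beta\eta K)$.

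Next we apply $L_*$-smoothness of $f$ (averaging Assumption \ref{ass:smooth}) along the virtual sequence:
\begin{align*}
f(\tilde x^{r+1})\le f(\tilde x^r)+\Big\langle \nabla f(\tilde x^r),\tfrac1M\textstyle\sum_i y_i^{r+1}\Big\rangle+\tfrac{L_*}{2}\eta^2K^2.
\end{align*}
We split each inner product of the gradient with an NS direction $u_i^{r,k}=\texttt{NS}^q(d_i^{r,k})$. Here we take the direction argument to be $d_i^{r,k}=m_i^{r,k+1}-c_i^r+c^r$, which is the intended version of Line \ref{line:fedcalm:local_step}. We write $\langle\nabla f(\tilde x^r),u\rangle=\langle d,u\rangle+\langle \nabla f(\tilde x^r)-d,u\rangle$. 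The inexact-LMO property from \citet{kim2026convergence} together with Lemma \ref{lem:lmo} should give $\langle d,u\rangle\le-(1-\varepsilon_q)\|d\|_*$. Unit length gives $|\langle \nabla f(\tilde x^r)-d,u\rangle|\le \|\nabla f(\tilde x^r)-d\|_*$. Combining with $\|d\|_*\ge\|\nabla f(x^r)\|_*-\|\nabla f(x^r)-d\|_*$, each step contributes
\begin{align*}
-(1-\varepsilon_q)\eta\|\nabla f(x^r)\|_*+2\eta\|\nabla f(x^r)-d_i^{r,k}\|_*+\eta L_*\beta\eta K.
\end{align*}
The last term comes from $\|\nabla f(\tilde x^r)-\nabla f(x^r)\|_*\le L_*\|\tilde x^r-x^r\|$.

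The core is then bounding the estimation error $E^{r,k}=\frac1M\sum_i\|\nabla f(x^r)-d_i^{r,k}\|_*$. We decompose it as
\begin{align*}
d_i^{r,k}-\nabla f(x^r)=\big(m_i^{r,k+1}-\nabla f_i(x_i^{r,k})\big)+\big(\nabla f_i(x_i^{r,k})-\nabla f_i(x^r)\big)+\big(\nabla f_i(x^r)-c_i^r\big)-\big(\nabla f(x^r)-c^r\big).
\end{align*}
\begin{itemize}
\item The drift term is $\le L_*\eta K$ by local unit-length steps, with no heterogeneity assumption needed.
\item The momentum error is handled as in momentum-LMO analyses (\citet{pethick2025training}, FedMuon). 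Unrolling the recursion gives a bias of order $L_*\eta/\alpha$ from iterate movement (including jumps of size $\mathcal O(\beta\eta K)$ between rounds). It also gives a variance term. Averaging over the $M$ clients, using $\|\cdot\|_*\le\rho\|\cdot\|_F$ from \eqref{eq:norms} and Assumption \ref{ass:stoch}, this variance is $\rho\sigma\sqrt{\alpha/M}$, because the aggregated error is what enters the descent.
\item The control-variate error $G^r=\frac1M\sum_i\|c_i^r-\nabla f_i(x^r)\|_*$ is tracked by the EF21 recursion with the conjugate-norm contraction. It gives $G^{r+1}\le(1-1/\beta)\big(G^r+\|m_i^{r,K}-\nabla f_i(x^{r+1})\|_*\big)$.
\item The shift from $x^r$ to $x^{r+1}$ costs $L_*\|v^{r+1}\|=\mathcal O(L_*\beta\eta K)$, since $\|v_i\|\le\|y_i\|+\|e_i\|+\|e_i\|$-type bounds give $\mathcal O(\beta\eta K)$.
\end{itemize}
Summing the geometric recursion yields $G^r=\mathcal O\big(\beta(\beta L_*\eta K+\text{momentum error})\big)$. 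This nested compression is where the $\beta^2$ arises, as anticipated in the Discussion. We will use a Lyapunov function $\Phi^r=f(\tilde x^r)+a\,G^r$ with $a\propto\eta K\beta$ so that the recursion terms telescope.

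Telescoping over $R$ rounds and dividing by $(1-\varepsilon_q)\eta KR$ gives
\begin{align*}
\varepsilon\lesssim\frac{\Delta}{(1-\varepsilon_q)\eta KR}+\frac{\beta^2L_*\eta K+L_*\eta/\alpha+\rho\sigma\sqrt{\alpha/M}\,\sqrt\beta}{1-\varepsilon_q}.
\end{align*}
We will use $\Delta$ for $f(\tilde x^0)-f^*$, since $e^0=0$. Choosing $\alpha$ to balance the two momentum terms and $\eta$ to balance the remaining terms, then solving for $R$, yields the stated $\beta^2\Delta L_*/\varepsilon^2$ term. It also yields the $\Delta L_*\rho^2\sigma^2\beta/(M\varepsilon^4)$ term, with the $(1-\varepsilon_q)$ powers coming from squaring.

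The main obstacle is the coupled recursion between $G^r$, the momentum error, and the EF error. The momentum target $m_i^{r,K}$ is evaluated at $x_i^{r,K}$, not at $x^{r+1}$, and $x^{r+1}$ depends on compressed updates, so we must carefully bound the round-to-round movement. For the variance, we must keep the $1/M$ factor without having it multiplied by $\beta^2$. We would first try to handle the averaged momentum error $\frac1M\sum_i(m_i-\nabla f_i)$ separately from the per-client EF21 error, applying the contraction only to deterministic-bias parts and bounding the variance contribution by $\beta$ only once.
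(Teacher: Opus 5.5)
Your skeleton matches the paper: the virtual sequence, smoothness along $\tilde x^r$, paying $\varepsilon_q$ to pass from \texttt{NS} to the LMO, uniform bounds from unit-length steps, an EF21 recursion, and tuning of $\eta,\alpha$. The gap is that your splitting of the estimation error cannot produce the $1/M$ in the stochastic term. The LMO acts separately on each client's direction, so the descent inequality contains $\frac1M\sum_i\|\nabla f(x^r)-d_i^{r,k}\|_*$, an average of norms. Your justification that ``the aggregated error is what enters the descent'' is therefore false for the pieces you isolate. Both $\|m_i^{r,k+1}-\nabla f_i(x_i^{r,k})\|_*$ and $\|\nabla f_i(x^r)-c_i^r\|_*$ carry per-client momentum noise of order $\rho\sigma\sqrt{\alpha}$, with no averaging over clients. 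The second one does so because the EF21 target is $m_i^{r,K}$, so $c_i^r$ tracks $m_i$, not $\nabla f_i$. Indeed, your recursion for $G^r$ is fed by $\|m_i^{r,K}-\nabla f_i(x^{r+1})\|_*$. This gives $G^r$ a term $\beta\rho\sigma\sqrt{\alpha}$ and also multiplies the momentum bias $L_*\eta K\beta/\alpha$ by another $\beta$. In the sum, the two noises cancel to leading order, and the triangle inequality throws that cancellation away. Your fallback of contracting ``only the deterministic-bias parts'' is not available: $C$ is nonlinear, and \eqref{eq:compres} applies only to its full argument. Carried through, your route gives a stochastic term $\beta\rho\sigma\sqrt{\alpha}$ instead of $\rho\sigma\sqrt{\alpha/M}$, i.e.\ a rate without the $1/M$ and with extra powers of $\beta$.

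The missing idea, which is what the paper does, is to measure the EF21 error against its actual target and regroup as $d_i^{r,k}-\nabla f(x^r)=(m_i^{r,k+1}-c_i^r)+(c^r-m^{r,0})+(m^{r,0}-\nabla f(x^r))$, with $m^{r,0}=\frac1M\sum_i m_i^{r,0}$. Since $m_i^{r,0}=m_i^{r-1,K}$, one gets $\mathbb{E}\|c_i^r-m_i^{r,0}\|_*\le(1-\frac1\beta)\big(\mathbb{E}\|c_i^{r-1}-m_i^{r-1,0}\|_*+\mathbb{E}\|m_i^{r-1,K}-m_i^{r-1,0}\|_*\big)$. This recursion is driven only by the within-round momentum increment. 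That increment unrolls to $(1-(1-\alpha)^K)(\nabla f_i(x^r)-m_i^{r,0})$ plus $\alpha$-weighted noise and drift. Its noise is therefore only $\mathcal{O}(\rho\sigma\alpha\sqrt K(1+\sqrt{\alpha K}))$, of order $\alpha$ rather than $\sqrt\alpha$. Its $1/\alpha$-type bias is cancelled by the factor $1-(1-\alpha)^K$ before the EF21 factor $\beta$ is applied (proof of Lemma \ref{lem:c_f}); the same holds for $m_i^{r,k+1}-m_i^{r,0}$. The $\sqrt{\alpha}$-size noise then survives only in the client-averaged term $m^{r,0}-\nabla f(x^r)$, where independence gives $\rho\sigma\sqrt{\alpha/M}$ (Lemma \ref{lem:m_f}). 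Its bias $\frac{L_*\eta K\beta}{1-(1-\alpha)^K}\le\frac{L_*\eta K\beta}{\alpha}$ enters only once.

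Two further points:
\begin{itemize}
\item The $\beta$ in the second term of the stated rate comes precisely from this bias, which is caused by the round jumps $\|x^{r+1}-x^r\|\le 6\eta K\beta$ of Lemma \ref{lem:step}, when you balance it against $\Delta/(\eta KR)$ and $\rho\sigma\sqrt{\alpha/M}$. Your bias $L_*\eta/\alpha$ drops this $\beta$, and the $\sqrt\beta$ you attach to the variance is inserted rather than derived.
\item The Lyapunov term $aG^r$ is unnecessary. All movements are deterministically bounded, so the recursions can simply be unrolled into uniform bounds, as the paper does.
\end{itemize}
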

The proof of Theorem \ref{th:conv_fedcalm} can be found in the appendix.

\textbf{Discussion.} In the stochastic setting, Algorithm \ref{alg:fedcalm} attains the same asymptotic rate as \texttt{SCAFCOM}, showing that it successfully assimilates both compression and local steps into the spectral FL. Our bound matches that of \texttt{SCAFCOM} in the two leading terms \citep[Theorem 3]{huang2024stochastic}:
\begin{equation*}
     R = \mathcal{O}\left(\frac{\beta\Delta L_F}{\varepsilon^2} + \frac{\Delta L_F\sigma^2}{MK\varepsilon^4}\right).
\end{equation*}
As in the deterministic case, our criterion is stronger: since the analysis is carried out in the spectral norm, we control a larger $\varepsilon$ over the same asymptotic number of rounds. As before, both terms carry an additional factor of $\beta$ due to nested compression. The stochastic term, in turn, carries an extra factor of $\rho^2$, which stems from the norm inequalities (\ref{eq:norms}) and reflects the conversion between the Frobenius and nuclear norms. Up to this factor, the stochastic term coincides with that of \texttt{SCAFCOM}.

\section{Experiments} \label{sec:experiments}
%To justify the relevance of the proposed approach, we perform experimental validation. 
\paragraph{Setup.}
The task at hand is image classification on the \textit{CIFAR-10} dataset \citep{CIFAR}. The chosen architecture is \textit{ResNet-18} \citep{he2016deep} ~--~ the standard benchmark for optimization problems \citep{meng2019frame}.
We examine the following federated setting. The training data is split among $M=10$ clients with two different distributions.  For more details, refer to the Appendix \ref{app:experiments}.
%\\\textit{Pathological distribution.} Each client has access to a random number of objects from each class, varying from $0.9$ to $1.1$ compared to uniform distribution.
%\\\textit{Heterogeneous distribution.} First $4$ clients have first $4$ classes split evenly amongst them, and last $6$ clients have remaining $6$ classes uniformly split. This is a severely strong heterogeneity, as clients do not have access to some of the classes.

\textbf{Baselines.} We compare our approach to the competitors discussed in Section \ref{sec:related_works}: \texttt{SCAFCOM} \citep{huang2024stochastic}, \texttt{FedMuon} \citep{takezawa2025fedmuon}, and \texttt{EF21-Muon} \citep{gruntkowska2025error}. We also implement \texttt{LocalSGD} \citep{stich2018local} as a simple FL benchmark without compression.

\textbf{Compression.} We choose \textit{TopK\%} sparsification as a compression operator. This means the output vector retains a fixed fraction of entries with the largest absolute values. \textit{Top1\%} and \textit{Top10\%} compression rates are explored in this study. \textit{Top100\%} compression means communicating the full message, and is implmented for methods that do not account for compression (\texttt{FedMuon}, \texttt{LocalSGD}).

\textbf{Communication details.} To demonstrate the practical advantages of compression, we present the plots in terms of the communicated bits. Under assumption of a stable chanell, this is equivalent to the total communication time (Definition \ref{def:cc}). In Table 1 we state the total bits transmitted in a single round for each method per client. Notably, \texttt{FedCALM} communicates two separate messages over the round and thus doubles the transmission size compared to other schemes with compression. Hence, \texttt{FedCALM} achives twice as much total communicated bits as \texttt{SCAFCOM} and \texttt{EF21-Muon}, while being trained for the same number of rounds (100 rounds).
\subsection{Empirical Results}
\begin{table}[ht]
\centering
\begin{tabular}{*6c}
\toprule
\multirow{2}{*}{Method} & \multirow{2}{*}{Compression} & Model & Accuracy  & Accuracy & Accuracy  \\
 & &size  &(\textit{CIFAR-10, Het.}) &(\textit{CIFAR-10, Path.}) & (\textit{SST-2}) \\
\midrule
\multirow{2}{*}{\texttt{FedCALM}} & \textit{Top1\%} & $\times$0.02 & 0.814 & 0.956 & 0.930 \\
   &  \textit{Top10\%} & $\times$0.2 & 0.825 & 0.959 & 0.928 \\
\multirow{2}{*}{\texttt{SCAFCOM}} & \textit{Top1\%} & $\times$0.01 & 0.460 & 0.904 & 0.933 \\
   &  \textit{Top10\%} & $\times$0.1 & 0.801 & 0.923 & 0.927 \\
\multirow{2}{*}{\texttt{EF21-Muon}} & \textit{Top1\%} & $\times$0.01 & 0.374 & 0.556 & 0.900 \\
   &  \textit{Top10\%} & $\times$0.1 & 0.601 & 0.746& 0.912 \\
{\texttt{FedMuon}} & \textit{Top100\%} & $\times$1 & 0.826 & 0.903 & 0.930 \\
{\texttt{LocalSGD}} & \textit{Top100\%} & $\times$1 & 0.811 & 0.886 & 0.927 \\
\bottomrule
\end{tabular}
\caption{Comparison of the methods performance. Model size designates the amount of transmitted information compared to the size of a single model. Last columns report best accuracy across training for all methods in three setups: \textit{CIFAR-10} with \textit{Heterogeneous} and \textit{Pathological} distributions, and \textit{SST-2} with \textit{IID} distribution.}\label{tab:acc}
\end{table}

\begin{figure}[H]
  \centering
  \includegraphics[
    width=0.45\columnwidth,
    %height=0.20\textheight,
    %keepaspectratio
  ]{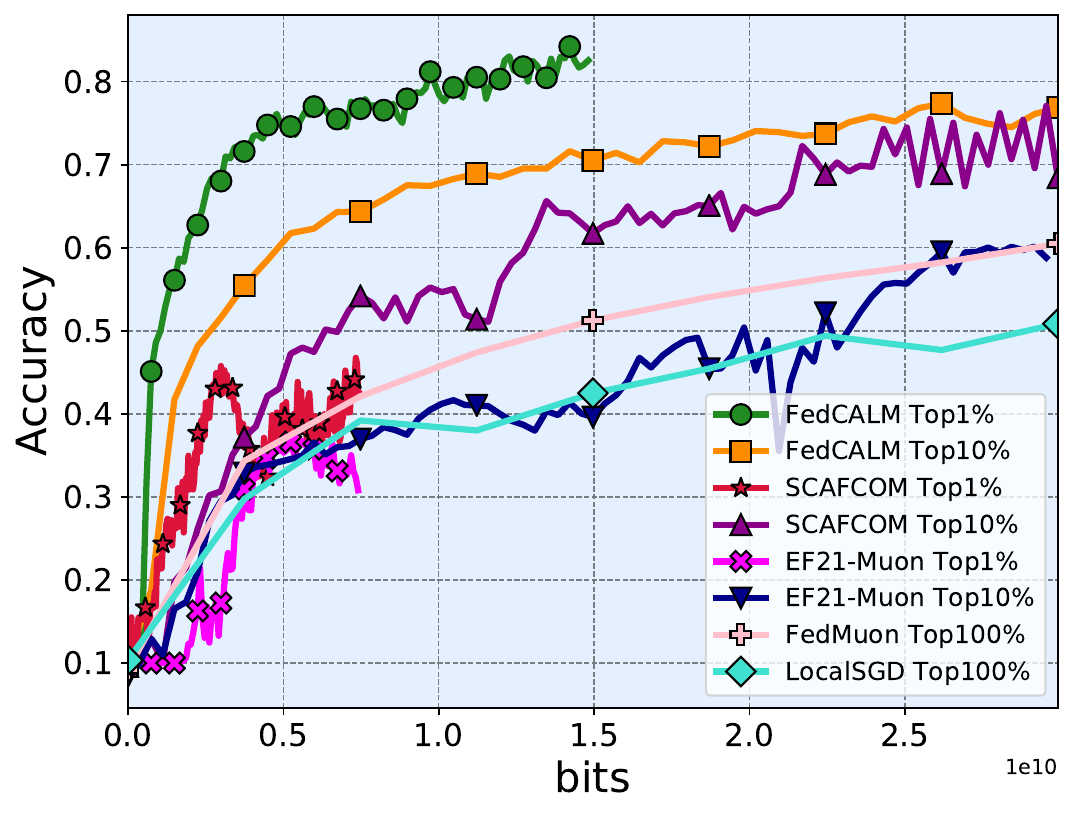}
  \includegraphics[
    width=0.45\columnwidth,
    %height=0.20\textheight,
    %keepaspectratio
  ]{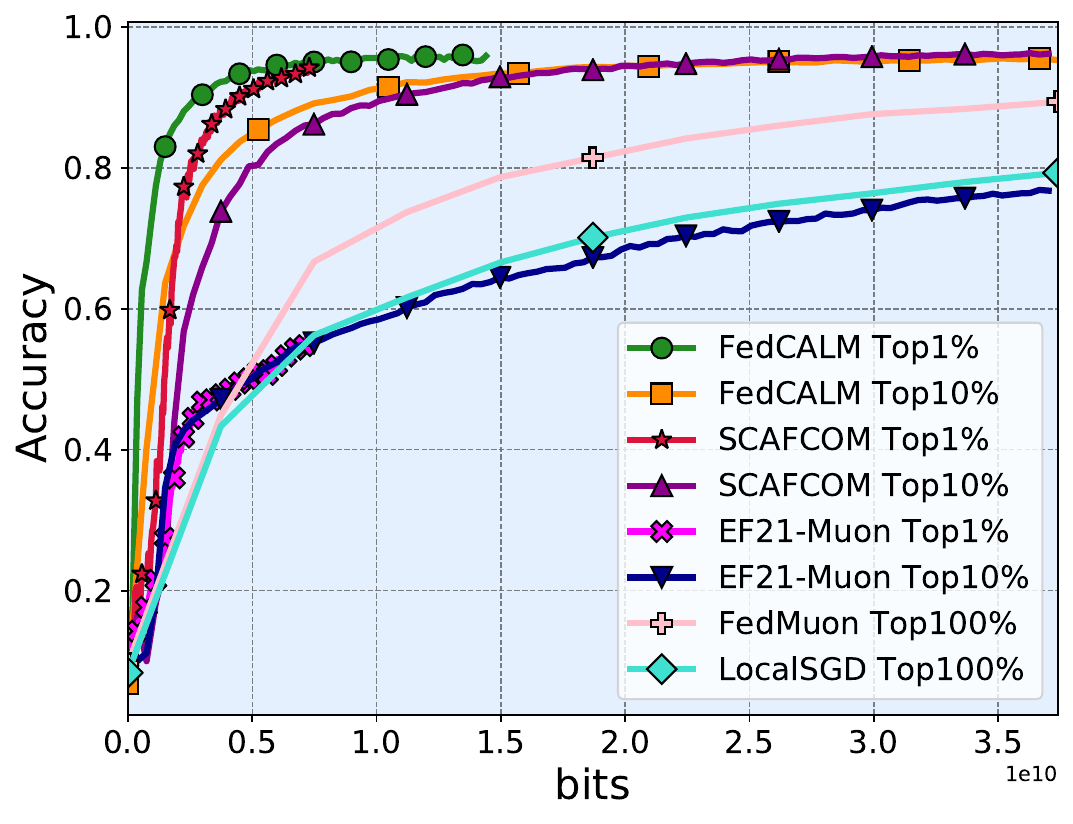}
  \caption{The test accuracy accross the training on \textit{Heterogeneous distibution} (left) and \textit{Pathological distribution} (right) for all methods. The horizontal axis designates the amount of communicated bits.}\label{fig:acc}
  %\vspace{-3mm}
\end{figure}
%The plots also demonstrate the advantages of the spectral approaches: \texttt{FedMuon} outperforms \texttt{FedAvg} and \texttt{FedCALM} surpasses \texttt{SCAFCOM}. 
%Notably, despite being a spectral method, \texttt{EF21-Muon} falls back. This is a consequence of the abcense of local steps which heavily facilitate the convergence.
\begin{wrapfigure}{r}{0.5\columnwidth}
  \centering
  \includegraphics[
    width=0.5\columnwidth,
    %height=0.20\textheight,
    %keepaspectratio
  ]{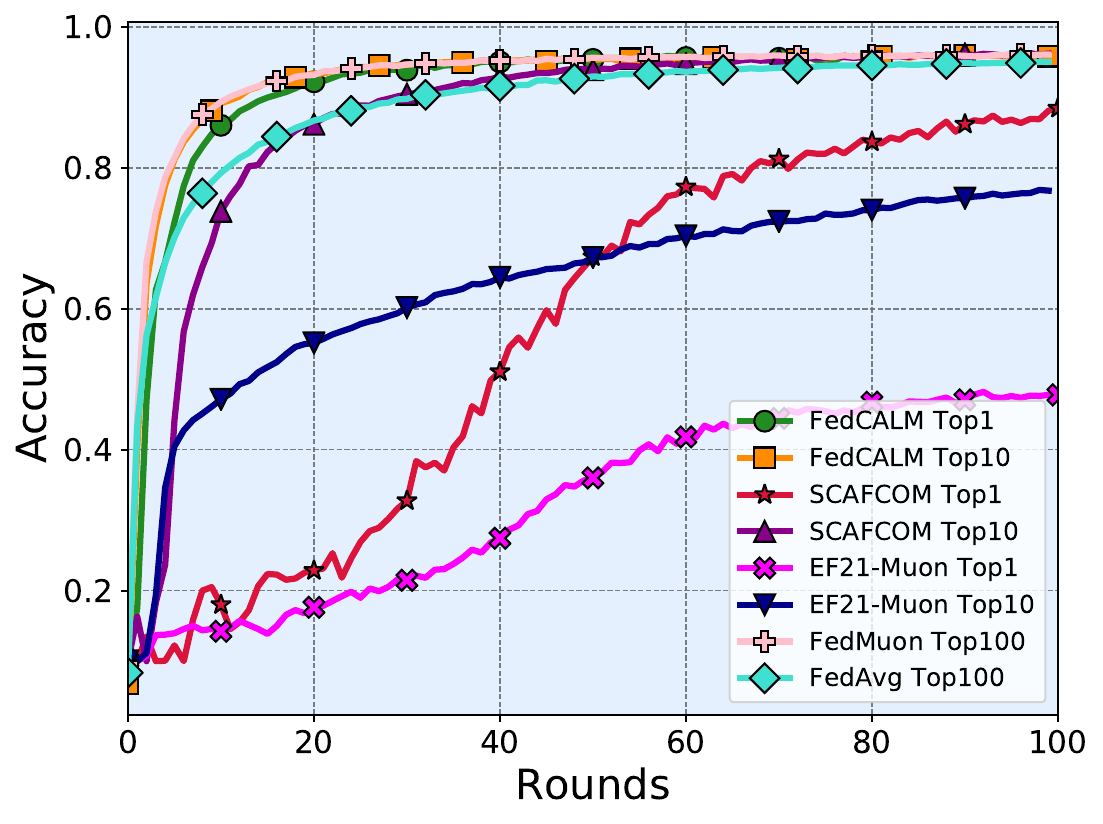}
  \caption{The test accuracy accross the training on \textit{Pathological distibution} for all methods over the communication rounds. That is, compression rates are not taken into account.}\label{fig:pathology_rounds}
  \vspace{-3mm}
\end{wrapfigure}
\paragraph{\textit{CIFAR-10} experiments.} As demonstrated by the plots (see Figure \ref{fig:acc}), proposed \texttt{FedCALM} method consistently outperforms all of the competitors in terms of communicated bits.
%\texttt{SCAFCOM}, \texttt{FedMuon}, \texttt{EF21-Muon} and \texttt{FedAvg}. 
Across two data distributions, \texttt{FedCALM} and \texttt{SCAFCOM} show the best performance by a large margin, evidently demonstrating the superiority of methods with both local steps and compression. Moreover, stronger compression almost directly translates to faster convergence. However, in \textit{Heterogeneous distribution}, \texttt{SCAFCOM} with \textit{Top1\%} compression performs worse than with the moderate compression \textit{Top10\%}. The reason for that may be the shear complexity of the problem: severe heterogeneity is combined with strong sparsification.

Figure \ref{fig:pathology_rounds} clearly demonstrates the advantages of compression. The plot shows that the performance of most methods is similar in terms of rounds, and there is almost no slowdown with compression applied.

Moreover, the domination of spectral methods -- \texttt{FedMuon}, \texttt{FedCALM} -- is apparent compared to their euclidean counterparts -- \texttt{LocalSGD}, \texttt{SCAFCOM}.
Notably, the \texttt{EF21-Muon} stays left behind. The poor performance is likely rooted in the absence of local steps.

\paragraph{LLM fine-tuning.} To further extend the scope of our validation, we conduct experiments with a LLM fine-tuning on \textit{GLUE} benchmark \citep{wang2018glue}.
%, and demonstrate the performance of \texttt{FedCALM} in the different modality, which matches our previous results. 
We report the best accuracy across training in Table \ref{tab:acc}. The results demonstrate the strong performance of proposed \texttt{FedCALM} in the different modality, which matches our previous experiments.
The details of the setup are presented in Appendix \ref{app:sst2}. 

\subsection{Network load} 
Moreover, we perform a careful study of network load and observe the theoretical advantages of Algorithm \ref{alg:fedcalm} in terms of communication uplink. The results are described in Appendix \ref{app:network_load}.

\section{Limitations}
First, we rely on smoothness of the objective. In practice, it is fulfilled only locally, however, it is classic in the field.
Second, the scope of our empirical validation is limited, and the future work can extend the scale of the experiments.
Lastly, we consider all clients to be available at every communication round, while this may not be the case in real-world scenarios.

\section{Conclusion}
In this paper, we address the communication bottleneck of FL by proposing a novel spectral method, \texttt{FedCALM-D} , which utilizes two key techniques of FL: local steps and communication compression. Crucially, we manage to perform two separate communications with proper error compensation mechanisms for both.
The scheme provably converges with arbitrary heterogeneity in the deterministic case.

Furthermore, we expand our approach with \texttt{FedCALM} -- the stochastic version of Algorithm \ref{alg:fedcalm-d}. We obtain heterogeneity-free convergence bounds for \texttt{FedCALM}, asymptotically matching the classic rates of \texttt{SCAFCOM} in the spectral setting.

We perform experimental validation on computer vision and language processing tasks and demonstrate the empirical advantages of \texttt{FedCALM} over standard baselines.

\end{mainpart}

\begin{appendixpart}
\allowdisplaybreaks
\section{Additional Experiments}\label{app:experiments}

\subsection{Language Model Fine-Tuning on SST-2}
\label{app:sst2}
\noindent\textbf{Setup.} 
We extend our experiemnts to a bigger scale with a NLP task. We use SST-2 problem from the GLUE benchmark \citep{wang2018glue}: binary sentiment classification of single sentences taken
from movie reviews. Sentences are tokenized with the RoBERTa tokenizer and truncated or padded to $128$ tokens.
The rest of the setting is described in details in section \ref{app:experimental details}.
The results of the experiment are presented in Table \ref{tab:result_glue}. Each of the methods achieved roughly 93\% accuracy in the totla runtime of 30 communication rounds. 

\begin{table}[ht]
\centering
\begin{tabular}{*4c}
\toprule
Method & Compression & Accuracy  \\
\midrule
\multirow{2}{*}{\texttt{FedCALM}} & \textit{Top1\%} & 0.930 \\
   &  \textit{Top10\%} & 0.928 \\
\multirow{2}{*}{\texttt{SCAFCOM}} & \textit{Top1\%} & 0.933 \\
   &  \textit{Top10\%} & 0.927\\
\multirow{2}{*}{\texttt{EF21-Muon}} & \textit{Top1\%} & 0.900 \\
   &  \textit{Top10\%} & 0.912 \\
{\texttt{FedMuon}} & \textit{Top100\%} & 0.930 \\
{\texttt{LocalSGD}} & \textit{Top100\%} & 0.927 \\
\bottomrule
\end{tabular}
\caption{The test accuracy accross the training on \textit{IID distibution} for all methods over $30$ communication
rounds in \textit{SST-2} experiments.}\label{tab:result_glue}
\end{table}

\subsection{Network Load}\label{app:network_load}
In the main part we described the communication efficiency of \texttt{FedCALM} in terms of accuracy over the rounds and transmitted bits. However, we left out the network load during the runs and did not compare the methods based on the communication time. In this section, we report the corresponding metrics. In particular, Table \ref{tab:resnet_network} shows the following statistics: the peak client GPU usage during local training, total uplink memory and aggregation time at the server. The table shows that while GPU is utilized similarly across methods, the uplink and aggregation time benefit strongly from the compression. \texttt{FedCALM} requires almost twice as much resources as \texttt{SCAFCOM} and \texttt{EF21-Muon} due to the two-stage structure of the method. However, this is alleviated by the faster convergence. Table \ref{tab:resnet_resources_to90} reports the total uplink time and memory untill the methods reach 90\% accuracy. The results demonstrate the acceleration provided by \texttt{FedCALM} in terms of both of the metrics. Another statistic that we provide is the average client memory usage over the training. Our method demonstrates almost the lowest costs, only rivaled by \texttt{FedMuon}. \texttt{EF21-Muon} is absent in Table \ref{tab:resnet_resources_to90} since it did not achieve target accuracy.

\begin{table}[ht]
\centering
\caption{Network statistics for \textit{CIFAR-10} experiments.}
\begin{tabular}{ccccccc}
\toprule
Method & Client GPU &Compression &  Uplink & Aggr. time \\
\midrule
\multirow{2}{*}{\texttt{FedCALM}} & \multirow{2}{*}{208.9 MB} & \textit{Top1\%}  & 8.94 MB & 171 ms\\
 &  &  \textit{Top10\%}  & 89.4 MB & 815 ms\\
\multirow{2}{*}{\texttt{SCAFCOM}} & \multirow{2}{*}{235.9 MB} & \textit{Top1\%} & 4.47 MB &85 ms\\
 &  &  \textit{Top10\%} & 44.7 MB & 407 ms\\
\multirow{2}{*}{\texttt{EF21-Muon}} & \multirow{2}{*}{181.3 MB}& \textit{Top1\%}  & 4.47 MB & 85 ms\\
 &  &  \textit{Top10\%} & 44.7 MB &407 ms\\
{\texttt{FedMuon}} & {225.1} MB & \textit{Top100\%} & 894 MB & 3.62 s \\
{\texttt{LocalSGD}} & 181.6 MB & \textit{Top100\%} & 447 MB & 3.62 s\\
\bottomrule
\end{tabular}\label{tab:resnet_network}
\end{table}

\begin{table}[ht]
\centering
\caption{Memory usage and total aggregation time to reach 90\% accuracy for \textit{CIFAR-10} experiments.}
\begin{tabular}{ccccc}
\toprule
Method &Compression & Uplink  & Time & Average memory \\
\midrule
\multirow{2}{*}{\texttt{FedCALM}} & \textit{Top1\%}  & 143 MB &  2.73 s & 7.87 MB \\
 &  \textit{Top10\%}  & 1072 MB & 9.78 s & 5.90 MB \\
\multirow{2}{*}{\texttt{SCAFCOM}}  & \textit{Top1\%} & 522 MB & 9.94 s & 57.6 MB \\
 & \textit{Top10\%} & 11803 MB & 11.8 s & 14.2 MB\\
{\texttt{FedMuon}}  & \textit{Top100\%} & 4917 MB & 39.2 s & 4.39 MB\\
{\texttt{LocalSGD}} & \textit{Top100\%} & 13857 MB & 110 s & 13.8 MB\\
\bottomrule
\end{tabular}\label{tab:resnet_resources_to90}
\end{table}

\noindent\textbf{Hardware.} We consider cross-silo setup for $M$=10 clients with uplink time of 100 Mb/s and 50 ms setup latency. Each client is leveraging a single CPU (Intel Xeon 2.20 GHz) and a single GPU (NVIDIA A100) for computation, and the server operates with one CPU. The experiments are implemented in Python using the PyTorch library. 
%The total runtime for all experiments is approximately 140 hours.

\noindent\textbf{Network load for SST-2.} Similarly to the experiments on \textit{CIFAR-10}, the advantages of compression are clearly visible when the costs are properly considered. In Table \ref{tab:glue_network}, we report client peak GPU usage, uplink memory and time. \texttt{FedCALM} continues to showcase low resource demand with an overhead compared to \texttt{SCAFCOM}, \texttt{EF21-Muon}. Table \ref{tab:glue_resources_to90} demonstrates the gains of our method over the competitors when the total costs are considered untill 90\% accuracy is achieved. This highlights the dominance of \texttt{FedCALM} among the methods: it requires the least uplink time and memory to solve the task, which signifies strong practical performance.

\begin{table}[ht]
\centering
\caption{ Network statistics for \textit{SST-2} experiments.}
\begin{tabular}{lcccc}
\hline
Method & Client memory & Compression & Uplink & Aggr. time \\
\hline
\multirow{2}{*}{\texttt{FedCALM}} & \multirow{2}{*}{2952.98 MB} & \textit{Top1\%} & 0.74 MB & 109 ms \\
 &  & \textit{Top10\%} & 7.39 MB & 641 ms \\
\multirow{2}{*}{\texttt{SCAFCOM}} & \multirow{2}{*}{2947.30 MB} & \textit{Top1\%} & 0.37 MB & 80 ms \\
 &  & \textit{Top10\%} & 3.70 MB & 346 ms \\
\multirow{2}{*}{\texttt{EF21-Muon}} & \multirow{2}{*}{2937.51 MB} & \textit{Top1\%} & 0.37 MB & 80 ms \\
 &  & \textit{Top10\%} & 3.70 MB & 346 ms \\
\multirow{1}{*}{\texttt{FedMuon}} & \multirow{1}{*}{2948.33 MB} & \textit{Top100\%} & 73.91 MB & 5.96 s \\
\multirow{1}{*}{\texttt{LocalSGD}} & \multirow{1}{*}{2937.95 MB} & \textit{Top100\%} & 36.96 MB & 3.01 s \\
\hline
\end{tabular}\label{tab:glue_network}
\end{table}

\begin{table}[ht]
\centering
\caption{Memory usage and total aggregation time to reach $91\%$ accuracy for \textit{SST-2} experiments.}
\begin{tabular}{lcccc}
\hline
Method & Compression & Uplink & Time & Average memory \\
\hline
\multirow{2}{*}{\texttt{FedCALM}} & \textit{Top1\%} & 4 MB & 546 ms & 615.20 MB \\
 & \textit{Top10\%} & 15 MB & 1.28 s & 250.48 MB \\
\multirow{2}{*}{\texttt{SCAFCOM}} & \textit{Top1\%} & 7 MB & 1.43 s & 2210.48 MB \\
 & \textit{Top10\%} & 22 MB & 2.07 s & 736.83 MB \\
\multirow{2}{*}{\texttt{EF21-Muon}} & \textit{Top1\%} & -- & -- & -- \\
 & \textit{Top10\%} & 89 MB & 8.30 s & 2937.51 MB \\
\multirow{1}{*}{\texttt{FedMuon}} & \textit{Top100\%} & 222 MB & 17.89 s & 368.54 MB \\
\multirow{1}{*}{\texttt{LocalSGD}} & \textit{Top100\%} & 111 MB & 9.02 s & 367.24 MB \\
\hline
\end{tabular}\label{tab:glue_resources_to90}
\end{table}
\newpage
\subsection{Ablation Study} 
\paragraph{Newton-Shultz steps.} To validate the stability of proposed \texttt{FedCALM}, we provide ablation over number of Newton-Schulz steps $q$. Figure \ref{fig:resnet_ablation} and Table \ref{tab:ablation_glue} demonstrate the robustness of our method to the choice of $q$. The performance gets worse only slightly if a single Newton-Schulz step is applied. This means that \texttt{FedCALM} is robust to the choice of $q$, which makes it more easily applicable in practice. 
\begin{figure}[ht]
  \centering
  % \includegraphics[
  %   width=0.4\columnwidth,
  %   %height=0.20\textheight,
  %   %keepaspectratio
  % ]{plots/fedcalm_theta_ablation_top10_test_acc.pdf}
  \includegraphics[
    width=0.4\columnwidth,
    %height=0.20\textheight,
    %keepaspectratio
  ]{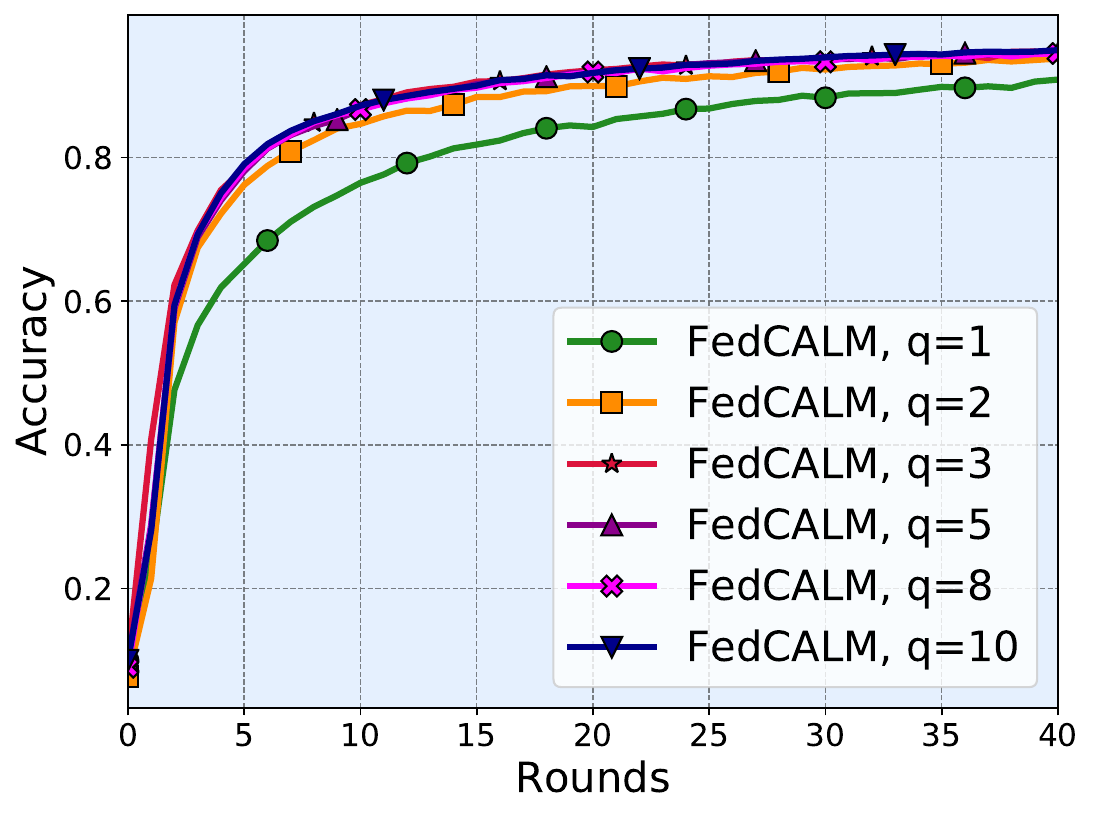}
  \caption{The ablation of \texttt{FedCALM} over the number of Newton-Schulz steps $q$ with \textit{CIFAR-10} experiments. The setup is the \textit{Pathological distribution} with \textit{Top10\%} compression rate.}\label{fig:resnet_ablation}
  %\vspace{-3mm}
\end{figure}

\begin{table}[ht]
\centering
\begin{tabular}{*3c}
\toprule
$q$ & Accuracy \\
\midrule
1 & 0.912 \\
2 & 0.926 \\
3 & 0.927 \\
5 & 0.928 \\
8 & 0.928 \\
10 & 0.929 \\
\bottomrule
\end{tabular}
\caption{The ablation of \texttt{FedCALM} over the number of Newton-Schulz steps $q$ with \textit{SST-2} experiments. The setup is the \textit{IID distribution} with \textit{Top10\%} compression rate during $20$ rounds.}
\label{tab:ablation_glue}
\end{table}

\paragraph{Compression.}
Further, we study the compression rates in the chosen norms. Fig. \ref{fig:compression_abl_het}, \ref{fig:compression_abl_path} reportes the explicit values for $\beta_1,\beta_2$ after performing the compression.
Evidently, the values of $\beta_1,\beta_2$ are upper-bounded by the corresponding compression rates (red lines) with a significant margin. At Top1\% compression, outliers are present at the beginning of the training. Nevertheless, those are  few in number ($\approx$2\%), and stay in at most one order of magnitude away from compression rate. Thus, the utilized compression can be considered practically contractive.

\begin{figure}[H]
    \centering
    \includegraphics[width=0.4\linewidth]{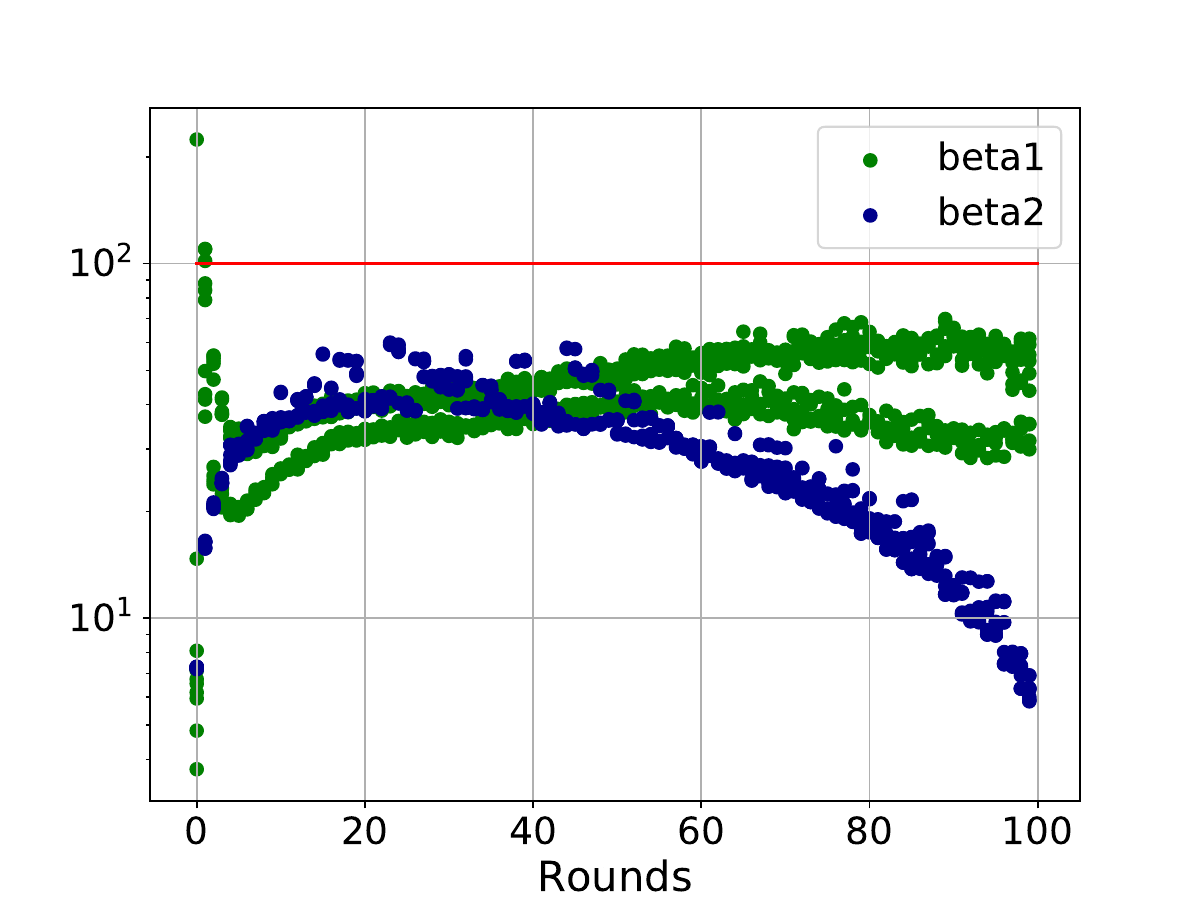}
    \includegraphics[width=0.4\linewidth]{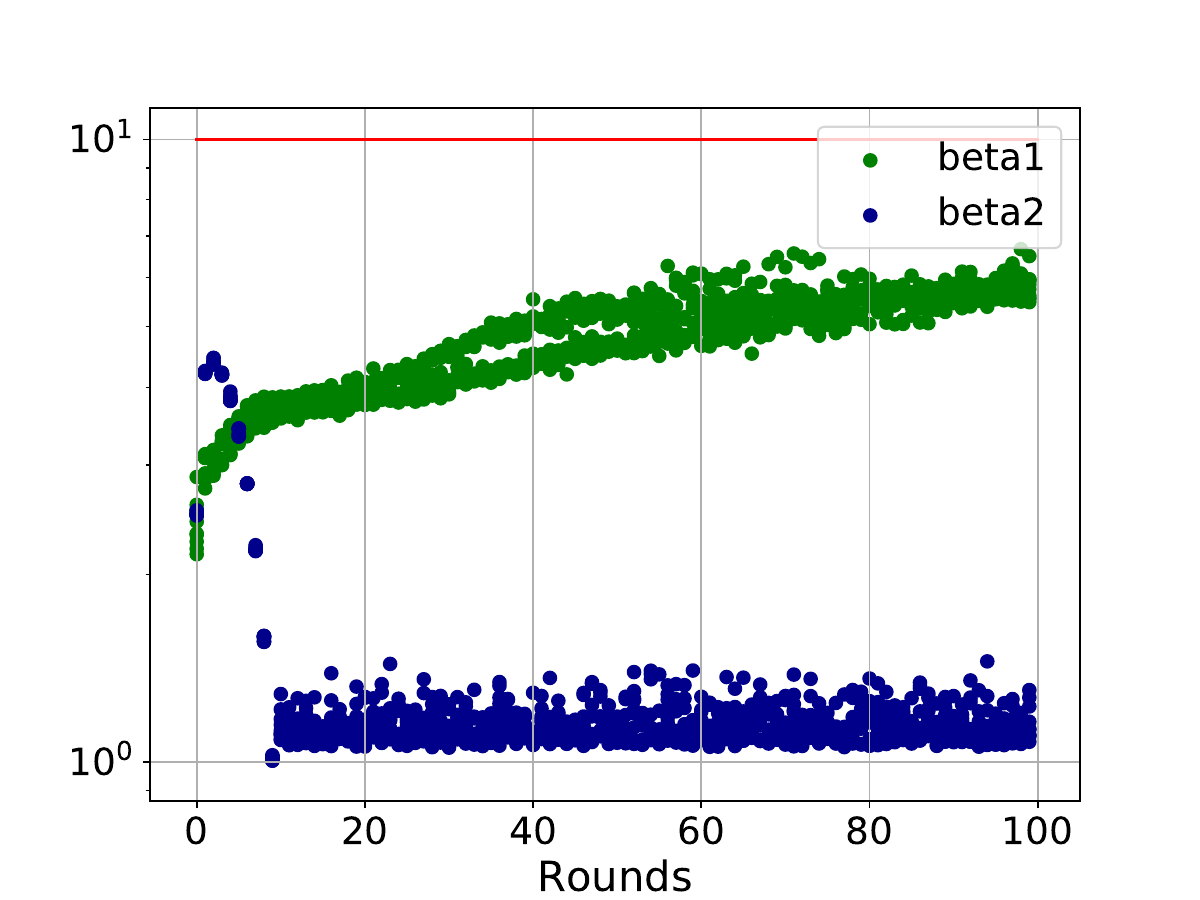}
    \caption{The empirical lower bounds for $\beta_1$ (green), $\beta_2$ (blue) for \textit{Heterogeneous} distribution. The red line marks the applied compression rates (Top1\% on the left and Top10\% on the right).}
    \label{fig:compression_abl_het}
\end{figure}
\begin{figure}[H]
    \centering
    \includegraphics[width=0.4\linewidth]{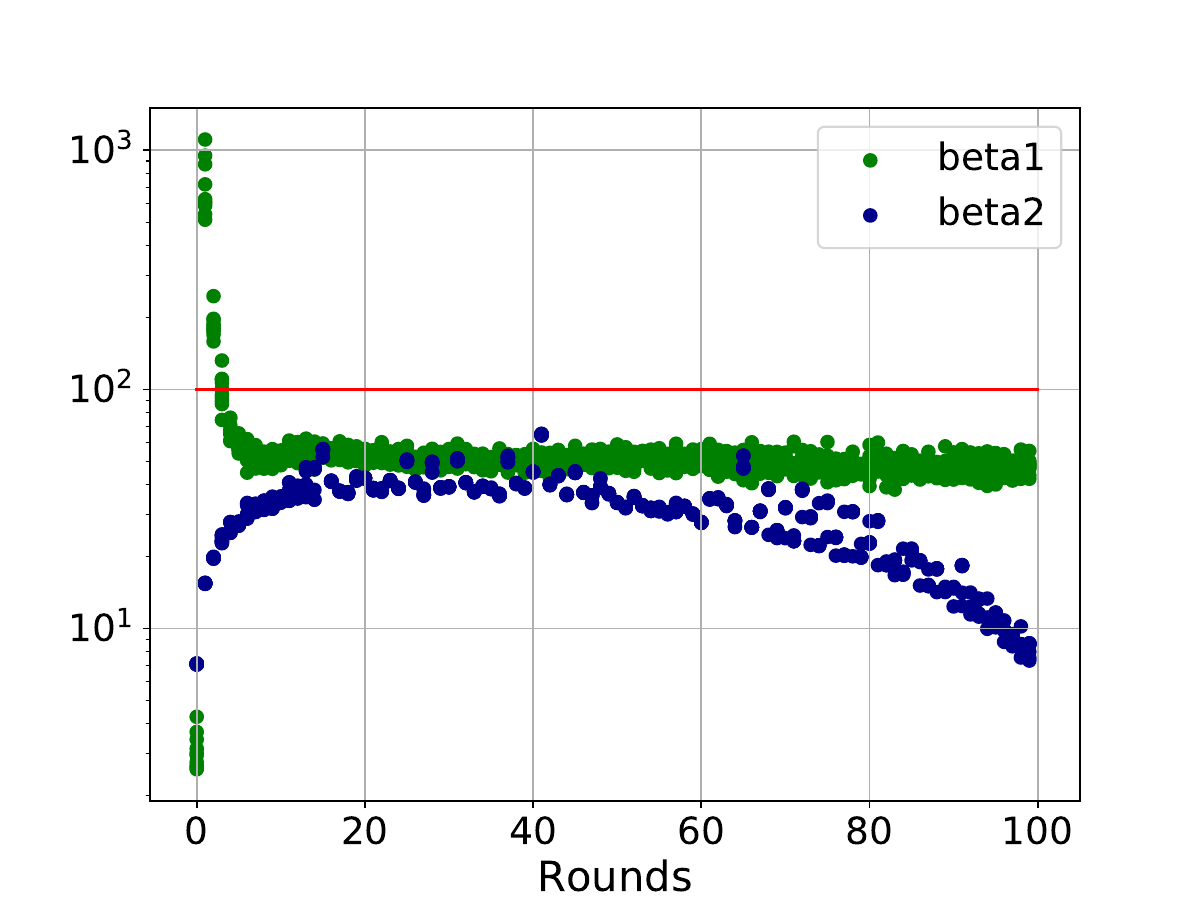}
    \includegraphics[width=0.4\linewidth]{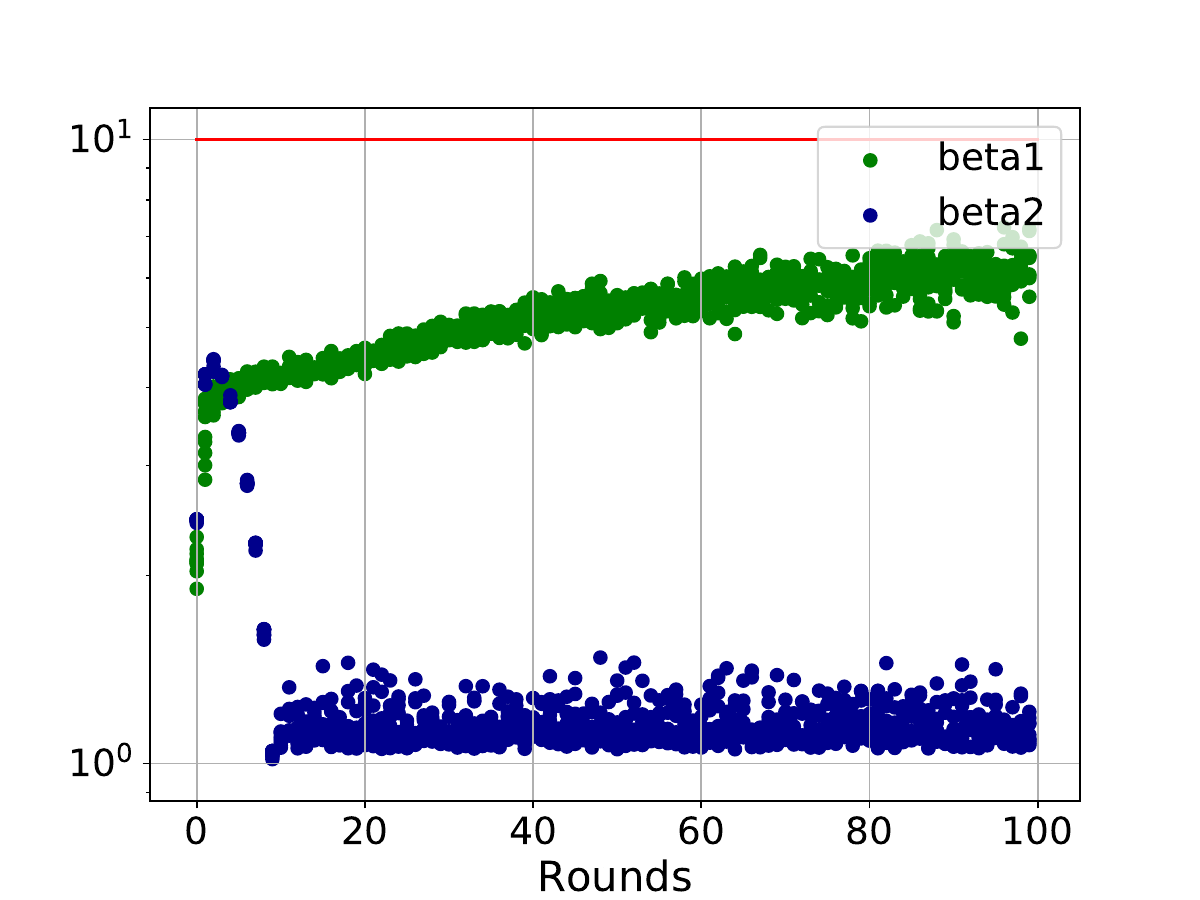}
    \caption{The empirical lower bounds for $\beta_1$ (green), $\beta_2$ (blue) for \textit{Pathological} distribution. The red line marks the applied compression rates (Top1\% on the left and Top10\% on the right).}
    \label{fig:compression_abl_path}
\end{figure}

\subsection{Experimental Details} \label{app:experimental details}
In this section we describe the specifics of our experimental setup.

\noindent \textbf{GLUE setup.}
The backbone is RoBERTa-base ($125$M parameters) with LoRA adapters \cite{hu2022lora} of rank $r = 4$ and scaling $\alpha = 2r = 8$, attached to the query, key and value projections and to every dense projection of each of the $12$ transformer blocks. 
Per round, each client performs at most $20$ local optimizer steps on
minibatches of $32$ sentences, i.e.\ it sees $640$ samples between two
communications. The local optimizer is Muon with momentum $0.95$ and $5$
Newton--Schulz iterations, applied to the matrix-shaped LoRA factors. Every run
lasts $30$ communication rounds.

\noindent\textbf{Hyperparameters.}
We present the hyperparameters of all methods for final runs. 

\noindent\textbf{CIFAR-10 experiments.} The following parameters are shared across all methods: batch size 256, 5 local steps.
The stepsizes were chosen for the methods with the grid search on a following grid: (0.00005, 0.0001, 0.0005, 0.001, 0.005, 0.01, 0.05, 0.1). Final stepsizes are listed in Table \ref{tab:stepsizes}. For spectral methods (\texttt{FedCALM}, \texttt{EF21-Muon}, \texttt{FedMuon}) \texttt{Muon} optimizer was implemented with 5 Newton-Schulz steps and momentum 0.95. For the rest of the methods we implemented \texttt{Adam} optimizer with momentums ($\beta_1, \beta_2 $)=(0.9, 0.999) and weight decay $\lambda$=0.001.

\begin{table}[ht]
\centering
\caption{Stepsizes for the final runs of \textit{CIFAR-10} experiments.}
\begin{tabular}{*5c}
\toprule
Method & Compression & \textit{Heter. distr.} & \textit{Path. distr.}  \\
\midrule
\multirow{2}{*}{\texttt{FedCALM}} & \textit{Top1\%} & 0.01 & 0.01\\
   &  \textit{Top10\%} & 0.01 & 0.01\\
\multirow{2}{*}{\texttt{SCAFCOM}} & \textit{Top1\%} & 0.0005 & 0.0005 \\
   &  \textit{Top10\%} & 0.0005 & 0.0005\\
\multirow{2}{*}{\texttt{EF21-Muon}} & \textit{Top1\%} & 0.0005 & 0.0001 \\
   &  \textit{Top10\%} & 0.0005 & 0.0001 \\
{\texttt{FedMuon}} & \textit{Top100\%} & 0.005 & 0.01 \\
{\texttt{LocalSGD}} & \textit{Top100\%} & 0.0001 & 0.0001 \\
\bottomrule
\end{tabular}\label{tab:stepsizes}
\end{table}

\noindent\textbf{GLUE experiments.} For every method the learning rate is
tuned over $0.0005,\, 0.001,\, 0.025, \, 0.005,\,
0.02$. Because Muon
normalizes its update direction, its step size does not live on the same scale
as the SGD or AdamW steps of the reference works, so the grid is treated as a
starting range rather than as a transferred setting. 
%The method-specific
%constant is tuned jointly with the learning rate rather than after it --
%$\theta \in \{0.01, 0.1\}$ for FedCALM, $\beta \in \{0.05, 0.2\}$ for SCAFCOM
%and EF21-Muon 
%optimal step size depends on them. 
The tuning is carried out at \textit{Top10\%} and \textit{Top1\%} indendently, so that a loss of accuracy
under aggressive sparsification cannot be an artifact of a step size inherited
from a milder compression level.

\begin{table}[ht]
\centering
\caption{Stepsizes for the final runs of \textit{SST-2} experiments.}
\begin{tabular}{*4c}
\toprule
Method & Compression & Stepsize  \\
\midrule
\multirow{2}{*}{\texttt{FedCALM}} & \textit{Top1\%} & 0.005\\
   &  \textit{Top10\%} & 0.005\\
\multirow{2}{*}{\texttt{SCAFCOM}} & \textit{Top1\%} & 0.0025 \\
   &  \textit{Top10\%} & 0.0025\\
\multirow{2}{*}{\texttt{EF21-Muon}} & \textit{Top1\%} & 0.0005 \\
   &  \textit{Top10\%} & 0.0005 \\
{\texttt{FedMuon}} & \textit{Top100\%} & 0.005 \\
{\texttt{LocalSGD}} & \textit{Top100\%} & 0.0025 \\
\bottomrule
\end{tabular}\label{tab:stepsizes_glue}
\end{table}

\noindent \textbf{Distributions.} 
For the \textit{GLUE} experiments, the training data is split among $M = 10$ clients, all of which participate in
every round. The split is IID: within each class the sentences are shuffled. Strong heterogeneity is not explored to ensure the stability of the training.

\noindent \textbf{CIFAR-10 experiments.}
For \textit{Pathological distribution} each client was assigned a number of objects from every class sampled uniformly from $U$[0.9,1.1]$\times$ 475.
For \textit{Heterogeneous distribution}, 4 clients were evenly assigned with objects from first 4 classes, and the rest 6 classes were evenly split among last 6 clients. However, each client has at least 2 objects from every class for stability of cross-enthropy loss. Tables \ref{tab:path_distr1}, \ref{tab:path_distr2}, \ref{tab:hetero_distr1}, \ref{tab:hetero_distr2} report exact amount of object of each class in clients datasets. 
\begin{table}[h]
\centering
\caption{Number of objects in client datasets for \textit{Pathological distibution} (clients 1-5).}
\begin{tabular}{c|ccccc}
\toprule
 & Client 1 & Client 2 & Client 3 & Client 4 & Client 5\\% & Client 6 & Client 7 & Client 8 & Client 9 & Client 10 \\
\midrule
Class 0 &1184 &1184 &1184 &1184 &2 \\%&2 &2 &2 &2 &2 \\
Class 1 &1185 &1185 &1185 &1185 &2 \\%&2 &2 &2 &2 &2 \\
Class 2 &1185 &1185 &1185 &1185 &2 \\%&2 &2 &2 &2 &2 \\
Class 3 &1184 &1184 &1184 &1184 &2 \\%&2 &2 &2 &2 &2 \\
Class 4 &2 &2 &2 &2 &790 \\%&790 &790 &790 &790 &790\\
Class 5 &2 &2 &2 &2 &791 \\%&791 \\%&791 &791 &791 &791 \\
Class 6 &2 &2 &2 &2 &791 \\%&791 \\%&791 &791 &791 &791 \\
Class 7 &2 &2 &2 &2 &790 \\%&790 \\%&790 &790 &790 &790 \\
Class 8 &2 &2 &2 &2 &790 \\%&790 \\%&790 &790 &790 &790 \\
Class 9 &2 &2 &2 &2 &790 \\%&790 \\%&790 &790 &790 &790 \\
\bottomrule
\end{tabular}\label{tab:path_distr1}
\end{table}

\begin{table}[h]
\centering
\caption{Number of objects in client datasets for \textit{Pathological distibution} (clients 6-10).}
\begin{tabular}{c|ccccc}
\toprule
 & Client 6 & Client 7 & Client 8 & Client 9 & Client 10 \\
\midrule
Class 0 &2 &2 &2 &2 &2 \\
Class 1 &2 &2 &2 &2 &2 \\
Class 2 &2 &2 &2 &2 &2 \\
Class 3 &2 &2 &2 &2 &2 \\
Class 4 &790 &790 &790 &790 &790\\
Class 5 &791&791 &791 &791 &791 \\
Class 6 &791&791 &791 &791 &791 \\
Class 7 &790&790 &790 &790 &790 \\
Class 8 &790&790 &790 &790 &790 \\
Class 9 &790&790 &790 &790 &790 \\
\bottomrule
\end{tabular}\label{tab:path_distr2}
\end{table}
\begin{table}[h]
\centering
\caption{Number of objects in client datasets for \textit{Heterogeneous distibution}.}
\begin{tabular}{c|ccccc}
\toprule
& Client 1 & Client 2 & Client 3 & Client 4 & Client 5 \\
\midrule
Class 0 &440 &448 &400 &428 &450 \\%&421 &448 &412 &430 &482 \\
Class 1 &501 &481 &488 &488 &502 \\%&510 &479 &523 &535 &503 \\
Class 2 &465 &479 &509 &518 &513 \\%&469 &496 &446 &504 &497 \\
Class 3 &470 &428 &476 &495 &437 \\%&447 &527 &490 &435 &486 \\
Class 4 &449 &482 &477 &486 &478 \\%&443 &506 &480 &460 &425 \\
Class 5 &495 &512 &547 &507 &524 \\%&497 &501 &500 &525 &482 \\
Class 6 &452 &428 &398 &438 &426 \\%&469 &450 &441 &419 &435 \\
Class 7 &471 &495 &534 &481 &519 \\%&516 &447 &514 &483 &477 \\
Class 8 &471 &468 &439 &439 &440 \\%&456 &437 &427 &452 &437 \\
Class 9 &536 &529 &482 &470 &461 \\%&522 &459 &517 &507 &526\\
\bottomrule
\end{tabular}\label{tab:hetero_distr1}
\end{table}
\begin{table}[h]
\centering
\caption{Number of objects in client datasets for \textit{Pathological distibution}.}
\begin{tabular}{c|ccccc}
\toprule
& Client 6 & Client 7 & Client 8 & Client 9 & Client 10 \\
\midrule
Class 0 &421 &448 &412 &430 &482 \\
Class 1 &510 &479 &523 &535 &503 \\
Class 2 &469 &496 &446 &504 &497 \\
Class 3 &447 &527 &490 &435 &486 \\
Class 4 &443 &506 &480 &460 &425 \\
Class 5 &497 &501 &500 &525 &482 \\
Class 6 &469 &450 &441 &419 &435 \\
Class 7 &516 &447 &514 &483 &477 \\
Class 8 &456 &437 &427 &452 &437 \\
Class 9 &522 &459 &517 &507 &526\\
\bottomrule
\end{tabular}\label{tab:hetero_distr2}
\end{table}

\newpage
\section{Proofs}\label{app:proofs}
First, we list some useful inequalities. Let $u,v$ be vectors in $\mathbb{R}^d$, $\{a_j\}_{j=1}^n$ be a sequence of vectors, and $\varphi : \mathbb{R}^d\rightarrow \mathbb{R}$ be a scalar $L_*$-smooth function.
\begin{center}
\begin{align}\displaystyle
    \label{Conj}\tag{Conj}\langle u,v\rangle &~\leq \|u\|\cdot \|v\|_* ,\\
    \label{Norm}\tag{Norm}\| u+v\| &~\leq \|u\|+ \|v\| ,\\
    \label{Jen}\tag{Jen}\|\frac{1}{n}\sum_{j=1}^na_j \|_* &~\leq\frac{1}{n}\sum_{j=1}^n \|a_j \|_*,\\
    \label{Smooth}\tag{Smooth} \varphi(v) &~\leq \varphi(u) + \langle \nabla \varphi(u),v-u\rangle + \frac{L}{2}\|u-v\|^2,\\
    \label{CS}\tag{CS}\left\|\sum_{i=1}^n a_i\right\|^2& ~\leq ~n\sum_{i=1}^n\|a_i\|^2.
\end{align}
\end{center}
Then, we prove a few auxiliary lemmas.
\begin{lemma}\label{lem:lmo}
    For any vector $u$, we have
    \begin{align*}
        \langle u,\operatorname{lmo}(u)\rangle= -\|u\|_*.
    \end{align*}
\end{lemma}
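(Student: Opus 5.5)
The plan is to derive the identity directly from duality between the spectral norm and the nuclear norm, that is, from the variational characterization of the dual norm. By definition of the conjugate norm,
\begin{align*}
    \|u\|_* = \max_{\|y\|\leq 1} \langle u, y\rangle .
\end{align*}
The unit ball of the spectral norm is symmetric: $\|y\|\leq 1$ if and only if $\|-y\|\leq 1$. Replacing $y$ by $-y$ therefore gives
\begin{align*}
    \min_{\|y\|\leq 1}\langle u,y\rangle = -\max_{\|y\|\leq 1}\langle u, y\rangle = -\|u\|_*.
\end{align*}
Since $\operatorname{lmo}(u)$ is by definition a minimizer of $\langle u, y\rangle$ over this ball, the claim follows. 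The minimum is attained because the unit ball is compact in finite dimensions and the objective is linear.

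To keep the argument self-contained without leaning on the abstract dual-norm fact, I would also verify the identity concretely via the SVD. Write $u = U\Sigma V^\top$ and take $y_0 = -UV^\top$ (restricted to the rank of $u$). Then $\|y_0\| = 1$ and
\begin{align*}
    \langle u, y_0\rangle = -\operatorname{tr}(\Sigma) = -\|u\|_*,
\end{align*}
so the minimum is at most $-\|u\|_*$. For the matching lower bound, use (Conj): $\langle u,y\rangle \geq -\|y\|\,\|u\|_* \geq -\|u\|_*$ for every $y$ with $\|y\|\leq 1$. Hence any minimizer, and in particular $\operatorname{lmo}(u)$, attains exactly $-\|u\|_*$.

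The only delicate point is justifying (Conj) itself, i.e. the trace duality $|\operatorname{tr}(u^\top y)| \leq \|y\|\sum_j \sigma_j(u)$. I would prove it by expanding $u=\sum_j \sigma_j a_j b_j^\top$ and bounding each term: $|a_j^\top y b_j| \leq \|y\|$. The identity does not depend on which minimizer is chosen when the argmin is not unique, since every minimizer attains the same minimal value.
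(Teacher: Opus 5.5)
Your proof is correct and follows the paper's argument: the paper also writes $-\|u\|_* = -\max_{\|v\|\leq 1}\langle u,v\rangle = \min_{\|v\|\leq 1}\langle u,v\rangle = \langle u,\operatorname{lmo}(u)\rangle$, and you make explicit the ball symmetry that it leaves implicit. Your SVD verification is an optional extra; it justifies identifying the nuclear norm with the dual of the spectral norm, which the paper takes for granted.
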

\begin{proof} The proof follows from the definition of conjugate norm.
    \begin{eqnarray*}
    -\|u\|_* = -\max_{\|v\|\leq 1}\langle u,v\rangle = \min_{\|v\|\leq 1}\langle u,v\rangle = \langle u,\arg\min_{\|v\|\leq 1}(\langle u,v\rangle)\rangle = \langle u,\operatorname{lmo}(u)\rangle.    
    \end{eqnarray*}
\end{proof}

Following \citet{kim2026convergence}, we introduce the error factor of Newton-Schultz steps $\varepsilon_q$. Formally, 
\begin{eqnarray}\label{eq:nss_error}
    \varepsilon_q = \max_{r=1\dots R}\max_{k=1\dots K}\max_{i=1\dots M}\|\operatorname{lmo}(p_i^{r,k}) - \texttt{NS}^q(p_i^{r,k})\|,
\end{eqnarray}
where $p_i^{r,k}$ denotes $\nabla f_i(x_i^{r,k}) - c_i^r + c^r $ in the context of the algorithm \ref{alg:fedcalm-d} and $p_i^{r,k}=m_i^{r,k} - c_i^r + c^r $ for the algorithm \ref{alg:fedcalm}. \citet{kim2026convergence} showed, that $\varepsilon_q<1$ and approaches zero at a doubly exponential rate over $q$.
\begin{corollary}\label{eq:nss_bound} During the run of Algorithm \ref{alg:fedcalm-d} or \ref{alg:fedcalm}, the following holds for any $p_i^{r,k}$ (as defined earlier):
\begin{align*}
    \|\texttt{NS}(p_i^{r,k})\|\leq 1 + \varepsilon_q.
\end{align*}
\begin{proof}
We utilize the unit length of LMO and introduced error of Newton-Schultz steps:
    \begin{eqnarray*}
        \|\texttt{NS}^q(p_i^{r,k})\|\overset{\ref{Norm}}{\leq} \|\operatorname{lmo}(p_i^{r,k})\| + \|\operatorname{lmo}(p_i^{r,k}) - \texttt{NS}^q(p_i^{r,k})\|
        \overset{(\ref{eq:nss_error})}{\leq} \|\operatorname{lmo}(p_i^{r,k})\| + \varepsilon_q
        {\leq} 1 + \varepsilon_q.
    \end{eqnarray*}
\end{proof}
    
\end{corollary}

\begin{lemma}\label{lem:dist}
    During the run of Algorithm \ref{alg:fedcalm-d} or \ref{alg:fedcalm}, the following inequality holds for each $i=1,\dots M$ and $r=1,\dots R+1$: 
    \begin{align*}
        \|x_i^{r,k} - x_r\| \leq \eta k (1+\varepsilon_q).
    \end{align*}
\end{lemma}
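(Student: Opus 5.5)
The bound follows from a telescoping argument over the local steps of a single round, combined with the bound on the Newton--Schulz output from Corollary~\ref{eq:nss_bound}. Fix a client $i$ and a round $r$. Line~\ref{line:fedcalm-d:init_x} of Algorithm~\ref{alg:fedcalm-d}, or Line~\ref{line:fedcalm:init_round} of Algorithm~\ref{alg:fedcalm}, gives $x_i^{r,0}=x^r$. Here $x_r$ in the statement is read as the server iterate $x^r$. We then write the displacement after $k$ local steps as a sum of increments:
\begin{align*}
    x_i^{r,k}-x^r = \sum_{j=0}^{k-1}\left(x_i^{r,j+1}-x_i^{r,j}\right) = \eta\sum_{j=0}^{k-1}\texttt{NS}^q(p_i^{r,j}),
\end{align*}
where $p_i^{r,j}$ is the argument of the Newton--Schulz map as defined before Corollary~\ref{eq:nss_bound}. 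The same identity holds for both algorithms, because their local update lines (Line~\ref{line:fedcalm-d:local_step} and Line~\ref{line:fedcalm:local_step}) have identical form in $x$.

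Next we apply the triangle inequality \eqref{Norm} repeatedly in the spectral norm to get $\|x_i^{r,k}-x^r\|\le \eta\sum_{j=0}^{k-1}\|\texttt{NS}^q(p_i^{r,j})\|$. Each summand is at most $1+\varepsilon_q$ by Corollary~\ref{eq:nss_bound}. Summing $k$ such terms gives $\eta k(1+\varepsilon_q)$. The case $k=0$ is trivial, and one can equivalently present the argument as an induction on $k$ with the step $\|x_i^{r,k+1}-x^r\|\le\|x_i^{r,k}-x^r\|+\eta(1+\varepsilon_q)$.

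There is one bookkeeping issue. The definition \eqref{eq:nss_error} of $\varepsilon_q$ takes the maximum over indices $k=1\dots K$ and $r=1\dots R$, while the local steps use $p_i^{r,j}$ for $j=0,\dots,K-1$, and the lemma is stated for rounds up to $R+1$. I would take $\varepsilon_q$ to be the maximum over all pairs $(r,j)$ actually visited by the algorithm, which is clearly the intended meaning. With that reading, Corollary~\ref{eq:nss_bound} applies to every step used in the sum, and nothing else in the argument requires care.
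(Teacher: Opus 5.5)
Your proposal is correct and follows the same route as the paper's proof: telescope $x_i^{r,k}-x^r$ into $k$ local increments, apply the triangle inequality, and bound each increment by $\eta(1+\varepsilon_q)$ via Corollary~\ref{eq:nss_bound}. Your reading of $\varepsilon_q$ as the maximum over all $(r,j)$ actually visited is the sensible fix for the index mismatch in \eqref{eq:nss_error}, which the paper leaves implicit.
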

\begin{proof}
    The proof relies on the definition of LMO. Since the $\arg\min$ is taken over the $\|\cdot\|$-norm ball, the LMO output has the norm of $1$ or less. Thus,
    \begin{eqnarray*}
        \|x_i^{r,k} - x^{r}\| = \|x_i^{r,0} - x_i^{r,0}\|
        = \|\sum_{l=0}^{k-1}(x_i^{r,l} - x_i^{r,l+1})\|
        \overset{\ref{Norm}}{\leq} \sum_{l=0}^{k-1}\|x_i^{r,l} - x_i^{r,l+1}\|
        %\\&\leq & \eta\sum_{k=0}^K1
        %\\&=& \eta K
        .
    \end{eqnarray*}
    As we noted, each update $x^{r,l}_i-x_i^{r,l+1}$ is an output of \texttt{NS} steps scaled by $\eta$. Corollary \ref{eq:nss_bound} implies that the norm of every update equals to or less than $\eta(1+\varepsilon_q)$, which concludes the proof:
        \begin{eqnarray*}
        \|x_i^{r,k} - x^{r}\| &{\leq}& \sum_{l=0}^{k-1}\|x_i^{r,l} - x_i^{r,l+1}\|
        \leq \sum_{l=0}^k\eta(1+\varepsilon_q)
        =\eta k(1+\varepsilon_q)
        .
    \end{eqnarray*}
\end{proof}
\begin{corollary}\label{cor:y}
    From the definition of $y_i^r$ it is easy to observe that
    $\|y_i^r\|\leq \eta K(1+\varepsilon_q)$ for any client $i$ and round $r$.
\end{corollary}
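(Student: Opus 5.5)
The plan is to read the bound directly off Lemma~\ref{lem:dist} with $k=K$, after matching the round indices. By Line~\ref{line:fedcalm-d:update} of Algorithm~\ref{alg:fedcalm-d} (and the identical Line~\ref{line:fedcalm:update} of Algorithm~\ref{alg:fedcalm}), the update indexed $r$ is produced in round $r-1$:
\begin{align*}
    y_i^{r} = x_i^{r-1,K} - x_i^{r-1,0} = x_i^{r-1,K} - x^{r-1},
\end{align*}
where the second equality uses the initialization $x_i^{r-1,0}=x^{r-1}$ (Line~\ref{line:fedcalm-d:init_x}, resp.\ Line~\ref{line:fedcalm:init_round}). So $\|y_i^r\|$ is exactly the quantity bounded in Lemma~\ref{lem:dist}, taken at round $r-1$ with $k=K$.

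Invoking Lemma~\ref{lem:dist} then gives $\|y_i^r\| = \|x_i^{r-1,K}-x^{r-1}\|\le \eta K(1+\varepsilon_q)$, which is the claim. The same argument covers both algorithms. In each, the local step is $\eta\,\texttt{NS}^q(p_i^{r,k})$, with $p_i^{r,k}$ as defined before Corollary~\ref{eq:nss_bound}. By that corollary, each such step has spectral norm at most $\eta(1+\varepsilon_q)$. Applying the triangle inequality \eqref{Norm} over $K$ steps yields the factor $K$.

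The only point needing care is the index range. Lemma~\ref{lem:dist} is stated for $r=1,\dots,R+1$, while the updates $y_i^r$ are generated from rounds $0,\dots,R-1$. The lemma's proof never uses $r$: it only uses the telescoping sum and the per-step bound from Corollary~\ref{eq:nss_bound}. The definition \eqref{eq:nss_error} of $\varepsilon_q$ takes a maximum over all rounds, so it is understood to include every round of the run. Hence the bound holds uniformly over every round, and the shift from $r$ to $r-1$ causes no difficulty. I do not expect a real obstacle in this step.
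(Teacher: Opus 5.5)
Your argument is correct and matches the paper's intended one: the corollary is Lemma~\ref{lem:dist} evaluated at $k=K$, using $y_i^{r}=x_i^{r-1,K}-x_i^{r-1,0}=x_i^{r-1,K}-x^{r-1}$. Your remarks on the index shift, and on reading $\varepsilon_q$ in \eqref{eq:nss_error} as a maximum over every Newton--Schulz call of the run, simply tidy up the paper's loose index ranges and do not change the approach.
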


\begin{lemma}\label{lem:e}
    During the run of Algorithm \ref{alg:fedcalm-d} or \ref{alg:fedcalm}, the following inequality holds for each $i=1,\dots M$ and $r=1,\dots R+1$: 
    \begin{align*}
        \|e_i^r\| \leq  \eta K \beta(1+\varepsilon_q).
    \end{align*}
\end{lemma}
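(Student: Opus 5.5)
The plan is to induct on $r$, using the error-feedback recursion in Line \ref{line:fedcalm-d:error} (identical in both algorithms), the contraction property \eqref{eq:compres}, and Corollary \ref{cor:y}. The key observation is that, with $z_i^{r} := e_i^r + y_i^{r+1}$, Lines \ref{line:fedcalm-d:compress_update} and \ref{line:fedcalm-d:error} combine into
\begin{align*}
    e_i^{r+1} = z_i^r - C(z_i^r).
\end{align*}
So the new error is exactly the compression residual of the vector that was compressed.

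The one-step estimate follows from this. Since $\beta \geq \beta_1$, we have $1-\frac{1}{\beta_1} \leq 1-\frac{1}{\beta}$. Combining this with (\ref{Norm}) and Corollary \ref{cor:y} gives
\begin{align*}
    \|e_i^{r+1}\| \leq \left(1-\frac1\beta\right)\|z_i^r\| \leq \left(1-\frac1\beta\right)\left(\|e_i^r\| + \eta K(1+\varepsilon_q)\right).
\end{align*}
The base case is $e_i^0 = 0$. Now assume the inductive hypothesis $\|e_i^r\| \leq \eta K\beta(1+\varepsilon_q)$. The right-hand side is then at most
\begin{align*}
    \left(1-\frac1\beta\right)(\beta+1)\,\eta K(1+\varepsilon_q) = \left(\beta - \frac1\beta\right)\eta K(1+\varepsilon_q) \leq \beta\,\eta K(1+\varepsilon_q),
\end{align*}
which closes the induction. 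Equivalently, unrolling the recursion gives a geometric series bounded by $\eta K(1+\varepsilon_q)\sum_{j\ge1}(1-1/\beta)^j = (\beta-1)\eta K(1+\varepsilon_q)$, which is even slightly sharper.

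The main obstacle is that \eqref{eq:compres} is stated only in expectation, whereas the lemma asserts a pathwise bound. I would handle this in one of two ways:
\begin{itemize}
    \item If $C$ is deterministic, as truncated SVD is, the contraction holds pointwise and the argument above is literally valid.
    \item Otherwise, I would prove the statement for $\mathbb{E}\|e_i^r\|$. Condition on the history up to the compression at round $r$ and apply \eqref{eq:compres} to the fixed vector $z_i^r$. Then take total expectation by the tower property, using the deterministic bound on $\|y_i^{r+1}\|$ from Corollary \ref{cor:y}. The same induction then goes through verbatim for $\mathbb{E}\|e_i^r\|$.
\end{itemize}
I would state this caveat explicitly, since later uses of the lemma inside expectations only require the expected version.
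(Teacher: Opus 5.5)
Your proposal is correct and follows essentially the same route as the paper: the one-step bound $\|e_i^{r+1}\|\le(1-\frac1\beta)(\|e_i^r\|+\|y_i^{r+1}\|)$ from the contraction property, Corollary \ref{cor:y}, and unrolling the recursion from $e_i^0=0$. Your induction is just a repackaging of the paper's geometric-series argument, and your constant $\beta-1$ is slightly sharper than the paper's $\beta$. The paper applies \eqref{eq:compres} pathwise without comment, so your caveat (pathwise for deterministic $C$, otherwise the same bound for $\mathbb{E}\|e_i^r\|$ by conditioning on the history) tightens the rigor rather than deviating from the argument.
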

\begin{proof}
    This proof resembles classic error analysis for \texttt{EF}.
    \begin{eqnarray*}
        \|e_i^r\| &=& \|e_i^{r-1} + y_i^r - v_i^r\|
        \\&=& \|e_i^{r-1} + y_i^r - C(y_i^r + e_i^{r-1})\|
        \\&\overset{(Def.\ref{eq:compres})}{\leq}& (1-\frac1\beta)\|y_i^r+e_i^{r-1}\| 
        \\&\overset{\ref{Norm}}{\leq}& (1-\frac1\beta)\|y_i^r\|+(1-\frac1\beta)\|e_i^{r-1}\|.
    \end{eqnarray*}
    Now, we unroll a recurcion over $\|e_i^{r}\|$ to $r=0$:
    \begin{eqnarray*}
        \|e_i^r\| &{\leq}& (1-\frac1\beta)\|y_i^r\|+(1-\frac1\beta)\|e_i^{r-1}\|
        \\&\dots&
        \\&\leq& \sum_{t=0}^r (1-\frac1\beta)^{r+1-t}\|y_i^{t}\| + (1-\frac1\beta)^r\|e_i^0\|
        \\&\overset{(Cor.\ref{cor:y})}{\leq}& \sum_{t=0}^r (1-\frac1\beta)^{r+1-t}\eta K (1+\varepsilon_q)+ (1-\frac1\beta)^r\|e_i^0\|
        \\&{\leq}& \eta K(1+\varepsilon_q) \sum_{t=0}^\infty (1-\frac1\beta)^{t}+ (1-\frac1\beta)^r\|e_i^0\|
        \\&=& \eta K \beta (1+\varepsilon_q) + (1-\frac1\beta)^r\|e_i^0\|.
    \end{eqnarray*}
    Finally, we note that $\|e_i^0\|=0$ due to initialization.
    
\end{proof}

\begin{lemma}\label{lem:step}
    During the run of Algorithm \ref{alg:fedcalm-d} or \ref{alg:fedcalm}, the following inequality holds for each round $r=1,\dots R+1$: 
    \begin{align*}
        \|x^{r} - x^{r-1}\| \leq  6\eta K \beta.
    \end{align*}
\end{lemma}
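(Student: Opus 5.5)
The plan is to write the global step through the compressed messages and bound each one with the contraction property. By Line~\ref{line:fedcalm-d:global_update} (identically Line~\ref{line:fedcalm:global_update}),
\begin{align*}
    x^{r} - x^{r-1} = \frac1M\sum_{i=1}^M v_i^{r} = \frac1M\sum_{i=1}^M C\bigl(y_i^{r} + e_i^{r-1}\bigr),
\end{align*}
so by the triangle inequality \eqref{Norm} it suffices to show $\|v_i^r\|\leq 6\eta K\beta$ for every client $i$.

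For a single client, write $v_i^r = (y_i^r+e_i^{r-1}) - \bigl[(y_i^r+e_i^{r-1}) - C(y_i^r+e_i^{r-1})\bigr]$. Applying \eqref{Norm} together with the contraction definition \eqref{eq:compres}, used in the same pathwise way as in the proof of Lemma~\ref{lem:e}, gives
\begin{align*}
    \|v_i^r\| \leq \Bigl(2-\frac1\beta\Bigr)\|y_i^r+e_i^{r-1}\| \leq 2\bigl(\|y_i^r\| + \|e_i^{r-1}\|\bigr).
\end{align*}
A shortcut is also available. Since $v_i^r = e_i^{r-1} + y_i^r - e_i^r$ by Line~\ref{line:fedcalm-d:error}, we get $\|v_i^r\|\leq \|e_i^{r-1}\|+\|y_i^r\|+\|e_i^r\|$ without invoking the compressor at all.

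Next, insert Corollary~\ref{cor:y}, $\|y_i^r\|\leq \eta K(1+\varepsilon_q)$, and Lemma~\ref{lem:e}, $\|e_i^{r-1}\|,\|e_i^r\|\leq \eta K\beta(1+\varepsilon_q)$. This yields
\begin{align*}
    \|v_i^r\|\leq \eta K(1+\varepsilon_q)(1+2\beta) \leq 3\eta K\beta(1+\varepsilon_q),
\end{align*}
using $\beta>1$. Finally, use $\varepsilon_q<1$ (the result of \citet{kim2026convergence} quoted before Corollary~\ref{eq:nss_bound}) to obtain $\|v_i^r\|\leq 6\eta K\beta$. Averaging over $i$ then gives the claim.

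The main obstacle is bookkeeping rather than substance. Definition~\eqref{eq:compres} holds only in expectation, while Lemma~\ref{lem:e} is used as a deterministic bound. To stay consistent, I would follow the paper's convention and treat the contraction as holding pathwise, or equivalently state the result in expectation. The identity route $v_i^r=e_i^{r-1}+y_i^r-e_i^r$ sidesteps this issue for the $v$-bound itself, since it relies only on Lemma~\ref{lem:e}. I would also check the index range: Lemma~\ref{lem:e} must cover $e_i^{r-1}$ with $r-1=0$, which holds trivially since $e_i^0=0$.
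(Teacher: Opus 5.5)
Your first route is the paper's argument: the triangle inequality over clients, then $\|C(z)\|\le\|C(z)-z\|+\|z\|\le(2-\frac1\beta)\|z\|$ with $z=y_i^r+e_i^{r-1}$, then Corollary~\ref{cor:y} and Lemma~\ref{lem:e}. The loosening $(2-\frac1\beta)\le 2$ that you display, however, loses too much for the stated constant. It gives $2(1+\beta)(1+\varepsilon_q)\eta K\le 4(1+\beta)\eta K$, which is at most $6\beta\eta K$ only when $\beta\ge 2$. The paper keeps the factor and uses $(2-\frac1\beta)(1+\beta)=1+2\beta-\frac1\beta\le 3\beta$; that is where the $6$ comes from.

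The chain you actually complete is the shortcut, and it is correct and slightly different from the paper. The error update in Line~\ref{line:fedcalm-d:error} gives $v_i^r=e_i^{r-1}+y_i^r-e_i^r$, hence $\|v_i^r\|\le(1+2\beta)(1+\varepsilon_q)\eta K\le 3\beta(1+\varepsilon_q)\eta K\le 6\beta\eta K$. This route makes no second appeal to the compressor and is purely algebraic. The price is that Lemma~\ref{lem:e} is used at index $r$ as well as $r-1$, which is harmless because $e_i^r$ is produced in the same round as $x^r$. The resulting constants, $1+2\beta$ versus the paper's $1+2\beta-\frac1\beta$, are essentially the same.

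One correction to your closing remark: the identity route does not remove the in-expectation versus pathwise issue, it only moves it. Lemma~\ref{lem:e} was itself derived by applying \eqref{eq:compres} pathwise. If the contraction holds only in expectation, both routes instead give the bound for $\mathbb{E}\|x^r-x^{r-1}\|$. For deterministic compressors such as truncated SVD or \textit{TopK}, the distinction disappears.
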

\begin{proof}
    We start with the definition of the step:
    \begin{eqnarray*}
        \|x^{r} - x^{r-1}\| &=& \|\frac1M \sum_{i=1}^M v_i^r\|
        \\&\overset{(\ref{Norm})}{\leq}& \frac1M \sum_{i=1}^M\|C(y_i^r + e_i^{r-1})\|
        \\&\overset{(\ref{Norm})}{\leq}& \frac1M \sum_{i=1}^M\|C(y_i^r + e_i^{r-1}) - y_i^r - e_i^{r-1}\| + \frac1M \sum_{i=1}^M\|y_i^r + e_i^{r-1}\|
        \\&\overset{(Def.\ref{eq:compres})}{\leq}& \frac1M \sum_{i=1}^M(1-\frac1\beta)\|y_i^r + e_i^{r-1}\| + \frac1M \sum_{i=1}^M\|y_i^r + e_i^{r-1}\|
        \\&=& \frac{2-\frac1\beta}M \sum_{i=1}^M\|y_i^r + e_i^{r-1}\|
        \\&=& \frac{2-\frac1\beta}M \sum_{i=1}^M(\|y_i^r \|+\| e_i^{r-1}\|)
        .
    \end{eqnarray*}
    Then, we apply Lemma \ref{lem:e} and Corollary \ref{cor:y} to bound $\|e_i^{r-1}\|$ and $\|y_i^r\|$ individually:
    \begin{eqnarray*}
        \|x^{r} - x^{r-1}\| &\leq& \frac{2-\frac1\beta}M \sum_{i=1}^M(\|y_i^r \|+\| e_i^{r-1}\|)
        \\&\leq& \frac{2-\frac1\beta}M \sum_{i=1}^M(\eta K(1+\varepsilon_q)+\eta K \beta(1+\varepsilon_q))
        \\&=& \frac{(2-\frac1\beta)(\beta+1)}M (1+\varepsilon_q)\eta K\sum_{i=1}^M 1
        \\&=& {(1+2\beta - \frac1\beta)}\eta K(1+\varepsilon_q)
        \\&\leq& {6\beta}\eta K
        ,
    \end{eqnarray*}
    where the last inequality was achived via trivial bounds $1\leq \beta,\frac1\beta > 0$ and $\varepsilon_q\leq1$.
     
\end{proof}

\subsection{Proof of Theorem \ref{th:conv_fedcalm-d}}
Now, we are ready to prove the convergence for Algorithm \ref{alg:fedcalm-d}.
\begin{theorem}[Theorem \ref{th:conv_fedcalm-d}]
    Suppose Assumption \ref{ass:smooth} holds. Then Algorithm \ref{alg:fedcalm-d} to achieve $\varepsilon$-accuracy needs
    \begin{equation*}
     R = \mathcal{O}\left(\frac{\beta^2\Delta(L_*+L_2)}{\varepsilon^2(1-\varepsilon_q)^2}\right) \text{~~communication rounds,}
    \end{equation*}
    where $\displaystyle\varepsilon = \mathbb{E}\left[\sum_{r=0}^{R-1}\|\nabla f(x^r)\|_*\right]$, and $\Delta=f(x^0)-{f(x^R)}$, and $\varepsilon_q$ is the error of Newton-Schultz iterations.
\end{theorem}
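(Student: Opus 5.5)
We will run the standard error-feedback analysis through the \emph{virtual sequence} $\tilde x^r = x^r + \frac1M\sum_{i=1}^M e_i^r$. Lines \ref{line:fedcalm-d:compress_update}--\ref{line:fedcalm-d:global_update} give $\tilde x^{r+1} = \tilde x^r + \frac1M\sum_i y_i^{r+1}$. Hence $\tilde x$ moves exactly by the averaged uncompressed local updates:
\begin{align*}
\tilde x^{r+1}-\tilde x^r=\frac{\eta}{M}\sum_{i=1}^M\sum_{k=0}^{K-1}\texttt{NS}^q(p_i^{r,k}).
\end{align*}
By Lemma \ref{lem:e}, $\|\tilde x^r - x^r\|\le \eta K\beta(1+\varepsilon_q)$. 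Applying (\ref{Smooth}) to $f$ at $\tilde x^r$ (with constant $L_*$ in the spectral/nuclear pair), Corollary \ref{cor:y} bounds the quadratic term by $\frac{L_*}{2}\eta^2K^2(1+\varepsilon_q)^2$. For the linear term, we first replace $\texttt{NS}^q$ by $\operatorname{lmo}$ at a cost of $\eta K\varepsilon_q\|\nabla f(\tilde x^r)\|_*$ via (\ref{Conj}) and (\ref{eq:nss_error}). We then write
\begin{align*}
\langle \nabla f(\tilde x^r),\operatorname{lmo}(p_i^{r,k})\rangle \le -\|\nabla f(\tilde x^r)\|_* + 2\|p_i^{r,k}-\nabla f(\tilde x^r)\|_* .
\end{align*}
This follows from Lemma \ref{lem:lmo} together with the unit length of LMO, in the usual way for normalized methods. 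The result is a per-round descent
\begin{align*}
f(\tilde x^{r+1})\le f(\tilde x^r)-\eta K(1-\varepsilon_q)\|\nabla f(\tilde x^r)\|_* + \frac{2\eta}{M}\sum_{i,k}\|p_i^{r,k}-\nabla f(\tilde x^r)\|_* + \tfrac{L_*}{2}\eta^2K^2(1+\varepsilon_q)^2 .
\end{align*}

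The error term splits by the triangle inequality into three pieces. The first is $\|\nabla f_i(x_i^{r,k})-\nabla f_i(x^r)\|_*\le L_*\eta K(1+\varepsilon_q)$, by Lemma \ref{lem:dist}. The second is $\|\nabla f(x^r)-\nabla f(\tilde x^r)\|_*\le L_*\eta K\beta(1+\varepsilon_q)$, by Lemma \ref{lem:e}. The third is the control-variate mismatch $\|c_i^r-\nabla f_i(x^r)\|_*+\|c^r-\nabla f(x^r)\|_*$, which we treat with (\ref{Jen}) since $c^r=\frac1M\sum_i c_i^r$ by induction from the initialization. We also convert $\|\nabla f(\tilde x^r)\|_*$ back to $\|\nabla f(x^r)\|_*$ at a further cost of $L_*\eta K\beta(1+\varepsilon_q)$.

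The main obstacle is the \texttt{EF21} tracking of $c_i^r$, because the target $\nabla f_i(x_i^{r,K})$ depends on the moving iterates. Let $G_i^r=\|c_i^r-\nabla f_i(x^r)\|_*$; note $c_i^r$ tracks the gradient at the last local point of the previous round. Using the nuclear-norm contraction with $\beta_2\le\beta$,
\begin{align*}
\|c_i^{r+1}-\nabla f_i(x_i^{r,K})\|_*\le(1-\tfrac1\beta)\|c_i^r-\nabla f_i(x_i^{r,K})\|_*.
\end{align*}
We then shift the reference points with smoothness, using Lemma \ref{lem:dist} and Lemma \ref{lem:step}; all displacements are $O(\eta K\beta)$. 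This gives the recursion $G_i^{r+1}\le(1-\frac1\beta)G_i^r + O(L_*\eta K\beta)$, with $G_i^0=0$ from the initialization. Unrolling yields $G_i^r=O(L_*\eta K\beta^2)$; this nesting is where the $\beta^2$ comes from. If the shift of reference points proves delicate, we will instead define $G_i^r$ relative to $x_i^{r-1,K}$ and absorb the single displacement.

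Collecting terms, every error is $O(L_*\eta^2K^2\beta^2)$ per round. Summing over $r=0,\dots,R-1$, telescoping, and using Assumption \ref{ass:non-convex} to bound $f(\tilde x^0)-f(\tilde x^R)\le\Delta$ (since $\tilde x^0=x^0$), then dividing by $\eta KR(1-\varepsilon_q)$, gives
\begin{align*}
\frac1R\sum_r\|\nabla f(x^r)\|_*\lesssim \frac{\Delta}{\eta K R(1-\varepsilon_q)}+\frac{L_*\eta K\beta^2}{1-\varepsilon_q}.
\end{align*}
Choosing $\eta K=\sqrt{\Delta/(L_*\beta^2R)}$ makes the right-hand side $O\big(\beta\sqrt{\Delta L_*/R}/(1-\varepsilon_q)\big)$. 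Setting this equal to $\varepsilon$ gives the claimed $R$.
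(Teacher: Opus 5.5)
Your proposal is correct and follows the paper's proof essentially step for step. The shared steps are: the error-feedback virtual sequence, the $\texttt{NS}^q\to\operatorname{lmo}$ swap at cost $\varepsilon_q$, the LMO inequality from Lemma~\ref{lem:lmo} together with unit length, the bounds of Lemmas~\ref{lem:dist}, \ref{lem:e}, \ref{lem:step}, the \texttt{EF21} recursion $G_i^{r+1}\le(1-\frac1\beta)G_i^r+O(L_*\eta K\beta)$ that produces the $\beta^2$, and the same stepsize $\eta K\asymp\sqrt{\Delta/(L_*\beta^2R)}$. The differences are cosmetic: you pivot the LMO inequality at $\nabla f(\tilde{x}^r)$ rather than $\nabla f(x^r)$; your sign $\tilde{x}^r=x^r+\frac1M\sum_i e_i^r$ is the one consistent with Algorithm~\ref{alg:fedcalm-d} as written, whereas the paper's derivation silently flips the sign convention of $v_i^r,y_i^r$; and, exactly like the paper's own argument, what you actually establish is the main-text form with $\Delta=f(x^0)-f(x^*)$ and the $\frac1R$-averaged $\varepsilon$, not the restated version with $\Delta=f(x^0)-f(x^R)$ and the unaveraged $\varepsilon$.
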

\newcommand{\tx}{\tilde{x}}
\begin{proof}
    Classicly for the \texttt{EF}, we start with defining a virtual sequence.
    \begin{align*}
        \tx^r \overset{\text{def}}{\equiv} x^r - \frac{1}{M}\sum_{i}e^r_i.
    \end{align*}
    The vectors $\tx^r$ are useful for the proof but never explicitly calculated in the algorithm (hence the name \textit{virtual}).
    The main feature of the sequence is its update rule:
    \begin{align*}
    \tx^{r+1} =& x^{r+1} - \frac{1}{M}\sum_{i}e^{r+1}_i
    \\ =&  (x^r - \frac{1}{M} \sum_{i}v^{r+1}_i)
    - \frac{1}{M}\sum_{i} (e^{r}_i + y^{r+1}_i - v_i^{r+1})
    \\=& x^r -\frac{1}{M} \sum_{i}\left(v^{r+1}_i +e^r_i+y^{r+1}_i-v^{r+1}_i\right)
    \\=& (x^r - \frac{1}{M} \sum_{i}e^{r+1}_i)- \frac{1}{M} \sum_{i}y^{r+1}_i
    \\=& \tx^r -  \frac{1}{M} \sum_{i}y^{r+1}_i.
    \end{align*}
    Now, we write the Smoothness inequality substituting $v=\tx^{r+1}$ and $u=\tx^r$:
    \begin{eqnarray*}
        f(\tx^{r+1}) &\overset{(\ref{Smooth})}{\leq} & f(\tx^r) - \langle \nabla f(\tx^r),\frac{1}{M}\sum_{i=1}^M y^{r+1}_i\rangle + \frac{L_*}{2}\|\frac1M \sum_{i=1}^M y_i^{r+1}\|^2
        \\&\overset{(\ref{CS})}{\leq}& f(\tx^r) - \frac{1}{M}\sum_{i=1}^M\langle \nabla f(\tx^r), y^{r+1}_i\rangle + \frac{L_*}{2M}\sum_{i=1}^M\| y_i^{r+1}\|^2.
    \end{eqnarray*}
    Let us analyse scalar product separately.
    \begin{eqnarray*}
        -\langle \nabla f(\tx^r), y^{r+1}_i\rangle &=& -\langle \nabla f(\tx^r), x^r-x^{r,K}_i\rangle
        \\&=& \langle \nabla f(\tx^r), \sum_{k=0}^{K-1}\eta \texttt{NS}(\nabla f_i(x_i^{r,k}) - c_i^r + c^r) \rangle
        \\&=& \langle \nabla f(\tx^r), \sum_{k=0}^{K-1}\eta \operatorname{lmo}(\nabla f_i(x_i^{r,k}) - c_i^r + c^r) \rangle
        \\&&+ \langle \nabla f(\tx^r), \sum_{k=0}^{K-1}\eta [\operatorname{lmo}-\texttt{NS}^q](\nabla f_i(x_i^{r,k}) - c_i^r + c^r) \rangle
        \\&=& \eta\sum_{k=0}^{K-1}\langle \nabla f(\tx^r),  \operatorname{lmo}(\nabla f_i(x_i^{r,k}) - c_i^r + c^r) \rangle
        \\&&+ \eta\sum_{k=0}^{K-1}\langle \nabla f(\tx^r),  [\operatorname{lmo}-\texttt{NS}^q](\nabla f_i(x_i^{r,k}) - c_i^r + c^r) \rangle
        \\&\overset{(\ref{Conj})}{\leq}& \eta\sum_{k=0}^{K-1}\langle \nabla f(\tx^r),  \operatorname{lmo}(\nabla f_i(x_i^{r,k}) - c_i^r + c^r) \rangle
        \\&&+ \eta\sum_{k=0}^{K-1}\| \nabla f(\tx^r)\|_*  \|[\operatorname{lmo}-\texttt{NS}^q](\nabla f_i(x_i^{r,k}) - c_i^r + c^r) \|
        \\&\overset{(\ref{eq:nss_error})}{\leq}& \eta\sum_{k=0}^{K-1}\langle \nabla f(\tx^r),  \operatorname{lmo}(\nabla f_i(x_i^{r,k}) - c_i^r + c^r) \rangle
        \\&&+ \eta\sum_{k=0}^{K-1}\| \nabla f(\tx^r)\|_*  \varepsilon_q
        % \\&=& \eta\sum_{k=0}^{K-1}\langle \nabla f_i(x_i^{r,k}) - c_i^r+c^r,  \operatorname{lmo}(\nabla f_i(x_i^{r,k}) - c_i^r + c^r) \rangle
        % \\&&+ \eta\sum_{k=0}^{K-1}\langle \nabla f(\tx^r)-\nabla f_i(x_i^{r,k}) + c_i^r-c^r,  \operatorname{lmo}(\nabla f_i(x_i^{r,k}) - c_i^r + c^r) \rangle
        .
    \end{eqnarray*}
    Here, we transitioned from inexact \texttt{NS} to LMO, gaining the error term proportonal to $\varepsilon_q$. Next, we operate over the first term to utilize Lemma \ref{lem:lmo}.
    \begin{eqnarray*}
        -\langle \nabla f(\tx^r), y^{r+1}_i\rangle &{\leq}& \eta\sum_{k=0}^{K-1}\langle \nabla f(\tx^r),  \operatorname{lmo}(\nabla f_i(x_i^{r,k}) - c_i^r + c^r) \rangle
        \\&&+ \eta\sum_{k=0}^{K-1}\| \nabla f(\tx^r)\|_*  \varepsilon_q
        \\&=& \eta\sum_{k=0}^{K-1}\langle \nabla f_i(x_i^{r,k}) - c_i^r+c^r,  \operatorname{lmo}(\nabla f_i(x_i^{r,k}) - c_i^r + c^r) \rangle
        \\&&+ \eta\sum_{k=0}^{K-1}\langle \nabla f(\tx^r)-\nabla f_i(x_i^{r,k}) + c_i^r-c^r,  \operatorname{lmo}(\nabla f_i(x_i^{r,k}) - c_i^r + c^r) \rangle
        \\&&+ \eta\sum_{k=0}^{K-1}\| \nabla f(\tx^r)\|_*  \varepsilon_q
        .
    \end{eqnarray*}
    
    We apply Lemma \ref{lem:lmo} to the first term, retrieving the conjugate norm. For the second term, we use \ref{Conj} inequality keeping in mind the unit norm of LMO.
    \begin{eqnarray*}
        -\langle \nabla f(\tx^r), y^{r+1}_i\rangle &\leq& -\eta\sum_{k=0}^{K-1}\| \nabla f_i(x_i^{r,k}) - c_i^r+c^r\|_*
        \\&&+ \eta\sum_{k=0}^{K-1}\| \nabla f(\tx^r)-\nabla f_i(x_i^{r,k}) + c_i^r-c^r\|_*\cdot 1
        \\&&+ \eta\sum_{k=0}^{K-1}\| \nabla f(\tx^r)\|_*  \varepsilon_q
        \\&\overset{\ref{Norm}}{\leq}& -\eta\sum_{k=0}^{K-1}\| \nabla f(x^{r})\|_*
        \\&&+ 2\eta\sum_{k=0}^{K-1}\| \nabla f(x^r)-\nabla f_i(x_i^{r,k}) + c_i^r-c^r\|_*
        \\&&+\eta\sum_{k=0}^{K-1}\| \nabla f(\tx^r)-\nabla f(x^r)\|_*
        \\&&+ \eta\sum_{k=0}^{K-1}(\| \nabla f(\tx^r) - \nabla f(x^r)\|_* + \|\nabla f(x^r)\|_*)  \varepsilon_q
        \\&\overset{\ref{Norm}}{\leq}& -\eta(1-\varepsilon_q)\sum_{k=0}^{K-1}\| \nabla f(x^{r})\|_*
        \\&&+ 2\eta\sum_{k=0}^{K-1}\| \nabla f(x^r)-\nabla f_i(x_i^{r,k}) + c_i^r-c^r\|_*
        \\&&+\eta(1+\varepsilon_q)\sum_{k=0}^{K-1}\| \nabla f(\tx^r)-\nabla f(x^r)\|_*
        .
    \end{eqnarray*}
    The last term can be easilty bounded via Lemma \ref{lem:e}:
    \begin{eqnarray*}
        \| \nabla f(\tx^r)-\nabla f(x^r)\|_* &\overset{(Ass.\ref{ass:smooth})}{\leq}& L_*\|\tx^r-x^r\|
        \\&=& L_*\|\frac1M\sum_{i=1}^Me_i^r\|
        \\&\overset{(\ref{Norm})}{\leq}& L_*\frac1M\sum_{i=1}^M\|e_i^r\|
        \\&\overset{(Lem.\ref{lem:e})}{\leq}& \frac{L_*}{M}\sum_{i=1}^M\eta K \beta(1+\varepsilon_q)
        \\&=& {L_*}\eta K \beta(1+\varepsilon_q).
    \end{eqnarray*}
    Now, the second term deserves much attention. First, we remove the dependance on $k$:
    \begin{eqnarray*}
        \| \nabla f(x^r)-\nabla f_i(x_i^{r,k}) + c_i^r-c^r\|_* &\overset{\ref{Norm}}{\leq}& \| \nabla f(x^r)-c^r\|_* + \|\nabla f_i(x^r) - c_i^r\|_* + \| \nabla f_i(x^r)-\nabla f_i(x_i^{r,k})\|_* 
        \\&\overset{\ref{ass:smooth}}{\leq}& \| \nabla f(x^r)-c^r\|_* + \|\nabla f_i(x^r) - c_i^r\|_* + L_*\|x^r-x_i^{r,k}\| 
        \\&\overset{(Lem.\ref{lem:dist})}{\leq}& \| \nabla f(x^r)-c^r\|_* + \|\nabla f_i(x^r) - c_i^r\|_* + L_*\eta k(1+\varepsilon_q)
        \\&\overset{}{\leq}& \| \nabla f(x^r)-c^r\|_* + \|\nabla f_i(x^r) - c_i^r\|_* + L_*\eta K(1+\varepsilon_q).
    \end{eqnarray*}
    Next, we bound the difference of control variate and respective gradient.
    \begin{eqnarray*}
        \|\nabla f_i(x^r) - c_i^r\|_* &{=} & \|\nabla f_i(x^r) - c_i^{r-1} - \delta_i^r\|_*
        \\&= & \|\nabla f_i(x^r) - c_i^{r-1} - C(\nabla f_i(x_i^{r-1,K}) - c_i^{r-1})\|_*
        \\&\overset{(\ref{Norm})}{\leq} & \|\nabla f_i(x_i^{r-1,K}) - c_i^{r-1} - C(\nabla f_i(x_i^{r-1,K}) - c_i^{r-1})\|_* 
        \\&&+ \|\nabla f_i(x_i^{r-1,K}) - \nabla f_i(x^r)\|_*
        \\&\overset{(Def.\ref{eq:compres})}{\leq} & (1-\frac1\beta) \|\nabla f_i(x_i^{r-1,K}) - c_i^{r-1}\|_* + \|\nabla f_i(x_i^{r-1,K}) - \nabla f_i(x^r)\|_*
        \\&\overset{(\ref{Norm})}{\leq} & (1-\frac1\beta) \|\nabla f_i(x^{r-1}) - c_i^{r-1}\|_* + 2 \|\nabla f_i(x_i^{r-1,K}) - \nabla f_i(x^{r-1})\|_*
        \\&&+ \|\nabla f_i(x_i^{r-1}) - \nabla f_i(x^r)\|_*
        \\&\overset{(Ass.\ref{ass:smooth})}{\leq} & (1-\frac1\beta) \|\nabla f_i(x^{r-1}) - c_i^{r-1}\|_* + 2L_* \|x^{r-1,K} - x^{r-1}\| 
        \\&&+ L_* \|x^{r-1} - x^{r}\|
        \\&\overset{(Lem.\ref{lem:step})}{\leq} & (1-\frac1\beta) \|\nabla f_i(x^{r-1}) - c_i^{r-1}\|_* + 18L_* \eta K \beta.
    \end{eqnarray*}
    Then, we unroll a recursion over $r$ to the first round.
    \begin{eqnarray*}
        \|\nabla f_i(x^r) - c_i^r\|_* &{\leq} & (1-\frac1\beta) \|\nabla f_i(x^{r-1}) - c_i^{r-1}\|_* + 18 L_* \eta K \beta
        \\&\dots&
        \\&\leq & (1-\frac1\beta)^r \|\nabla f_i(x^{0}) - c_i^{0}\|_* + \sum_{t=0}^{r-1}18(1-\frac1\beta)^t L_* \eta K \beta
        \\&\leq & (1-\frac1\beta)^r \|\nabla f_i(x^{0}) - c_i^{0}\|_* + 18L_*\eta K \beta\sum_{t=0}^{\infty}(1-\frac1\beta)^t
        \\&= & (1-\frac1\beta)^r \|\nabla f_i(x^{0}) - c_i^{0}\|_* + 18 L_*\eta K \beta^2 
        .
    \end{eqnarray*}
    Again, due to initialization the first term is zero, resulting in the following bound:
    \begin{eqnarray}\label{eq:c_grad}
        \|\nabla f_i(x^r) - c_i^r\|_* &{\leq} & 18L_*\eta K \beta^2
        .
    \end{eqnarray}
    This gives an easy estimate for the average control varible:
    \begin{eqnarray*}
        \|\nabla f(x^r) - c^r\|_* = \|\frac1M\sum_{i=1}^M(\nabla f_i(x^r) - c_i^r)\|_*
        \overset{(\ref{Norm})}{\leq} \frac1M\sum_{i=1}^M\|\nabla f_i(x^r) - c_i^r\|_*
        .
    \end{eqnarray*}
    Now we plug in \eqref{eq:c_grad} and gain
        \begin{eqnarray*}
        \|\nabla f(x^r) - c^r\|_* \leq \frac1M\sum_{i=1}^M\|\nabla f_i(x^r) - c_i^r\|_*
        \leq  \frac1M\sum_{i=1}^M18L_*\eta K\beta^2
        =  18L_*\eta K\beta^2
        .
    \end{eqnarray*}
    This allows us to continue with the scalar product analysis. We restate the descent inequality for convinience:
    \begin{eqnarray*}
        -\langle \nabla f(\tx^r), y^{r+1}_i\rangle &\leq&  -\eta(1-\varepsilon_q)\sum_{k=0}^{K-1}\| \nabla f(x^{r})\|_*
        \\&&+ 2\eta\sum_{k=0}^{K-1}\| \nabla f(x^r)-\nabla f_i(x_i^{r,k}) + c_i^r-c^r\|_*
        \\&&+\eta(1+\varepsilon_q)\sum_{k=0}^{K-1}\| \nabla f(\tx^r)-\nabla f(x^r)\|_*
        .
    \end{eqnarray*}
    All acquired bounds bring us the following:
    \begin{eqnarray*}
        -\langle \nabla f(\tx^r), y^{r+1}_i\rangle &\leq&  -\eta(1-\varepsilon_q)K\| \nabla f(x^{r})\|_*
        \\&&+ 2\eta\sum_{k=0}^{K-1}\big(\|\nabla f(x^r) - c^r\|_* + \|\nabla f_i(x^{r}) - c_i^r\|_*
        \\&&+ \|\nabla f_i(x_i^{r,k}) - \nabla f_i(x^r)\|_*\big)
        \\&&+\eta(1+\varepsilon_q)\sum_{k=0}^{K-1}\| \nabla f(\tx^r)-\nabla f(x^r)\|_*
        \\&\overset{(\ref{eq:c_grad})}{\leq}&  -\eta(1-\varepsilon_q)K\| \nabla f(x^{r})\|_*
        \\&&+ 2\eta\sum_{k=0}^{K-1}\big(36L_*\eta K\beta^2 + \|\nabla f_i(x_i^{r,k}) - \nabla f_i(x^r)\|_*\big)
        \\&&+\eta(1+\varepsilon_q)\sum_{k=0}^{K-1}\| \nabla f(\tx^r)-\nabla f(x^r)\|_*
        \\&\overset{(Ass.\ref{ass:smooth})}{\leq}&  -\eta(1-\varepsilon_q)K\| \nabla f(x^{r})\|_*
        \\&&+ 2\eta\sum_{k=0}^{K-1}\big(36L_*\eta K\beta^2 + \|x_i^{r,k} - x^r\|\big)
        \\&&+\eta(1+\varepsilon_q)\sum_{k=0}^{K-1}\|\tx^r-x^r\|
        \\&\overset{(Lem.\ref{lem:dist},\ref{lem:e})}{\leq}&  -\eta(1-\varepsilon_q)K\| \nabla f(x^{r})\|_*
        \\&&+ 2\eta\sum_{k=0}^{K-1}\big(36L_*\eta K\beta^2 + L_*\eta K(1+\varepsilon_q)\big)
        \\&&+\eta(1+\varepsilon_q)\sum_{k=0}^{K-1}\eta K \beta (1+\varepsilon_q)
        \\&\leq&  -\eta(1-\varepsilon_q)K\| \nabla f(x^{r})\|_*
        + 76L_*\eta^2 K^2\beta^2
        +4\eta^2 K^2L_*\beta
        \\&\leq&  -\eta(1-\varepsilon_q)K\| \nabla f(x^{r})\|_*
        + 80L_*\eta^2 K^2\beta^2
        ,
    \end{eqnarray*}
    where we trivially estimate $1+\varepsilon_q\leq 2\leq 2\beta$.

    Finally, we return to the descent equation.
    \begin{eqnarray*}
        f(\tx^{r+1}) &{\leq}& f(\tx^r) - \frac{1}{M}\sum_{i=1}^M\langle \nabla f(\tx^r), y^{r+1}_i\rangle + \frac{L_*}{2M}\sum_{i=1}^M\| y_i^{r+1}\|^2.
    \end{eqnarray*}
    For $\|y_i^{r+1}\|^2$, we apply Corollary \ref{cor:y}.
    \begin{eqnarray*}
        \frac{L_*}{2M}\sum_{i=1}^M\| y_i^{r+1}\|^2
        \overset{(Cor.\ref{cor:y})}{\leq} \frac{L_*}{2M}\sum_{i=1}^M(\eta K(1+\varepsilon_q))^2
        = 2{L_*\eta^2K^2}.
    \end{eqnarray*}
    This plug in the obtained result and take the expectation.
    \begin{eqnarray*}
        \mathbb{E}f(\tx^{r+1}) &{\leq}& \mathbb{E}f(\tx^r) -\eta(1-\varepsilon_q)K\mathbb{E}\| \nabla f(x^{r})\|_*
        + 80L_*\eta^2 K^2\beta^2 + 2{L_*\eta^2K^2}
        .
    \end{eqnarray*}
    We rearrange the terms and take the average over $r=0,\dots R-1:$
    \begin{eqnarray*}
        \mathbb{E}\left[\frac1R\sum_{r=0}^{R-1}\eta(1-\varepsilon_q)K\| \nabla f(x^{r})\|_* \right]&{\leq}& \frac{1}{R}\sum_{r=0}^{R-1}\mathbb{E}\left[f(\tx^r) -f(\tx^{r+1})\right]
        + 82L_*\eta^2 K^2\beta^2
        \\&\leq& \frac{\Delta}{R}
        + 82L_*\beta^2\eta^2 K^2
        ,
    \end{eqnarray*}
    where $f(x^0) -f(x^R)\leq f(x^0)-f(x^*)=\Delta$. Now, we divide the equation by $\eta K (1-\varepsilon_q)$.
    \begin{eqnarray*}
        \mathbb{E}\left[\frac1R\sum_{r=0}^{R-1}\| \nabla f(x^{r})\|_*\right] &{\leq}& \frac{\Delta}{\eta(1-\varepsilon_q)K R}
        + \frac{34L_*\beta^2\eta K}{1-\varepsilon_q}
        ,
    \end{eqnarray*}
    Now we choose the stepsize $\displaystyle\eta = \Theta \left(\frac{\sqrt{\Delta}}{K\beta\sqrt{RL_*}}\right) $ and acquire:
    \begin{eqnarray*}
        \mathbb{E}\left[\frac1R\sum_{r=0}^{R-1}\| \nabla f(x^{r})\|_*\right] = \mathcal{O}\left(\frac{\beta\sqrt{\Delta L_*}}{(1-\varepsilon_q) \sqrt{R}}
        \right).
    \end{eqnarray*}
    Lastly, we denote $\varepsilon = \mathbb{E}\left[\frac1R\sum_{r=0}^{R-1}\| \nabla f(x^{r})\|_*\right]$ and rearrange the terms to obtain round complexity.
    \begin{eqnarray*}
        R = \mathcal{O}\left(\frac{\beta^2{\Delta L_*}}{\varepsilon^2(1-\varepsilon_q)^2}
        \right).
    \end{eqnarray*}
\end{proof}

\subsection{Proof of Theorem \ref{th:conv_fedcalm}}
\begin{remark}
    We recall that Lemmas \ref{lem:lmo}, \ref{lem:dist}, \ref{lem:e}, \ref{lem:step} all are applicable to Algorithm \ref{alg:fedcalm}, as well as Corollaries \ref{eq:nss_bound}, \ref{cor:y}. 
\end{remark}
\begin{theorem}
    Suppose Assumptions \ref{ass:smooth}, \ref{ass:non-convex}, and \ref{ass:stoch} hold. Then Algorithm \ref{alg:fedcalm} to achieve $\varepsilon$-accuracy needs
    \begin{equation*}
     R = \mathcal{O}\left(\frac{{\Delta L_* \beta^2}}{\varepsilon^2(1-\varepsilon_q)^2} + \frac{\Delta L_*\rho^2\sigma^2\beta}{M\varepsilon^4(1-\varepsilon_q)^4}\right)
    \end{equation*}
    communication rounds, where $\displaystyle\varepsilon = \mathbb{E}\left[\sum_{r=0}^{R-1}\|\nabla f(x^r)\|_*\right]$, and $\Delta=f(x^0)-{f(x^R)}$, $\tau$ is a unit of time required to send one uncompressed model to the server, $\varepsilon_q$ is the error of Newton-Schulz iterations.
\end{theorem}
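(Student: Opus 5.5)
We follow the template of the proof of Theorem \ref{th:conv_fedcalm-d}. As there, we work with the virtual sequence $\tx^r = x^r - \frac1M\sum_i e_i^r$, which satisfies $\tx^{r+1}=\tx^r-\frac1M\sum_i y_i^{r+1}$. We write the smoothness inequality, and the quadratic term is again $\mathcal{O}(L_*\eta^2K^2)$ by Corollary \ref{cor:y}. The drift of $\tx^r$ from $x^r$ is controlled by Lemma \ref{lem:e}. Following the same steps with Lemma \ref{lem:lmo} and the \texttt{NS} error $\varepsilon_q$, the inner product reduces to
\begin{align*}
-\langle \nabla f(\tx^r), y_i^{r+1}\rangle \le -\eta(1-\varepsilon_q)K\|\nabla f(x^r)\|_* + 2\eta\sum_{k=0}^{K-1}\|\nabla f(x^r) - p_i^{r,k}\|_* + \mathcal{O}(L_*\eta^2K^2\beta),
\end{align*}
where now $p_i^{r,k}=m_i^{r,k}-c_i^r+c^r$. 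The whole difficulty is therefore bounding the expected tracking error $\mathbb{E}\|\nabla f(x^r)-p_i^{r,k}\|_*$. We split it as
\begin{align*}
\|\nabla f(x^r)-p_i^{r,k}\|_* \le \|\nabla f_i(x^r)-m_i^{r,k}\|_* + \|\nabla f_i(x^r)-c_i^r\|_* + \|\nabla f(x^r)-c^r\|_*.
\end{align*}

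\textbf{Momentum error.} We define $\mu_i^{r,k}=m_i^{r,k}-\nabla f_i(x_i^{r,k})$ and unroll the recursion
\begin{align*}
\mu_i^{r,k+1}=(1-\alpha)\mu_i^{r,k}+(1-\alpha)\bigl(\nabla f_i(x_i^{r,k-1})-\nabla f_i(x_i^{r,k})\bigr)+\alpha\bigl(g_i-\nabla f_i\bigr)
\end{align*}
across local steps and rounds, since the momentum is carried over via $m_i^{r,0}=m_i^{r-1,K}$. Consecutive points are at most $\mathcal{O}(\eta\beta K)$ apart at round boundaries by Lemma \ref{lem:step}, and at most $\eta(1+\varepsilon_q)$ apart inside a round. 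The bias part is thus $\mathcal{O}(L_*\eta K\beta/\alpha)$. For the noise part, the martingale differences are orthogonal in Frobenius norm, so the standard Muon-momentum argument gives $\mathbb{E}\|\sum_t(1-\alpha)^t\alpha\,\xi_t\|_F \le \sigma\sqrt{\alpha}$. Converting to the nuclear norm via \eqref{eq:norms} costs a factor $\rho$, giving $\mathcal{O}(\rho\sigma\sqrt\alpha)$. The initialization term decays geometrically and contributes only a lower-order $\mathcal{O}(1/(\alpha R K))$ after averaging.

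\textbf{Control-variate error.} We repeat the \texttt{EF21} recursion from \eqref{eq:c_grad}, but with target $m_i^{r-1,K}$ in place of $\nabla f_i(x_i^{r-1,K})$. This produces the same $(1-\frac1\beta)$ contraction with an extra additive $\|\mu_i^{r-1,K}\|_*$, so unrolling gives $\mathbb{E}\|\nabla f_i(x^r)-c_i^r\|_* \le 18L_*\eta K\beta^2 + \beta\cdot\mathcal{O}(L_*\eta K\beta/\alpha + \rho\sigma\sqrt\alpha)$.

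\textbf{Main obstacle.} Done naively, this step loses the $1/M$ factor on $\sigma^2$. For the averaged term $\|\nabla f(x^r)-c^r\|_*$ we must instead average over $i$ \emph{before} taking norms, so that independent client noise gives $\rho\sigma\sqrt{\alpha/M}$, and handle the compression error so that the linear-in-$\beta$ dependence of the stochastic term survives. I would first try bounding the global $c^r$ error through the averaged momentum $\frac1M\sum_i m_i$. If that fails, I would accept the per-client bound on $\|\nabla f_i(x^r)-c_i^r\|_*$ and absorb the loss into constants.

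\textbf{Concluding.} Plugging these bounds in, summing over $r$ and dividing by $\eta K(1-\varepsilon_q)R$ gives
\begin{align*}
\varepsilon \lesssim \frac{\Delta}{\eta K R(1-\varepsilon_q)} + \frac{L_*\eta K\beta^2}{\alpha(1-\varepsilon_q)} + \frac{\rho\sigma\sqrt{\alpha\beta/M}}{1-\varepsilon_q}.
\end{align*}
Choosing $\alpha$ to balance the last two terms and $\eta K=\Theta(\sqrt{\Delta\alpha/(L_*\beta^2R)})$ yields the two stated terms. The first is $\Delta L_*\beta^2/\varepsilon^2$, recovering Theorem \ref{th:conv_fedcalm-d} when $\alpha=1$. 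The second is $\Delta L_*\rho^2\sigma^2\beta/(M\varepsilon^4)$, with powers of $(1-\varepsilon_q)$ tracked throughout.
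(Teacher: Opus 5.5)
Your reduction of the inner product, your momentum bound and your per-client \texttt{EF21} bound are valid inequalities. The failure is the split of the tracking error through the pivot $\nabla f_i(x^r)$: it is too lossy to reach the stated rate, and the ``main obstacle'' you name is never resolved. In the descent inequality the error enters as $\frac1M\sum_i\sum_k\|\nabla f(x^r)-p_i^{r,k}\|_*$, with the norm taken client by client. This is forced, because the conjugate-norm bound is applied to each $\operatorname{lmo}(p_i^{r,k})$ separately. So any per-client noise of order $\rho\sigma\sqrt\alpha$ left inside $\nabla f(x^r)-p_i^{r,k}$ can never be reduced by $1/\sqrt M$.

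Your first two terms are exactly such terms. $\|\nabla f_i(x^r)-m_i^{r,k}\|_*$ carries $\rho\sigma\sqrt\alpha$. By your own bound, $\|\nabla f_i(x^r)-c_i^r\|_*$ carries $\beta\rho\sigma\sqrt\alpha$ and the bias $L_*\eta K\beta^2/\alpha$. Averaging over $i$ in the third term does not touch them. The loss cannot be absorbed into constants either, since it is a factor $M$ plus extra powers of $\beta$.

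What your bounds actually give is $\varepsilon=\mathcal{O}(\frac{\Delta}{\eta KR}+\frac{L_*\eta K\beta^2}{\alpha}+\beta\rho\sigma\sqrt\alpha)$, i.e.\ $R=\mathcal{O}(\frac{\Delta L_*\beta^2}{\varepsilon^2}+\frac{\Delta L_*\rho^2\sigma^2\beta^4}{\varepsilon^4})$, ignoring $1-\varepsilon_q$. The term $\rho\sigma\sqrt{\alpha\beta/M}$ in your concluding display is not derived from anything above it. Even granting it, optimizing $\frac{\Delta}{\eta KR}+\frac{L_*\eta K\beta^2}{\alpha}+\rho\sigma\sqrt{\alpha\beta/M}$ over $\eta$ and $\alpha$ gives $\frac{\Delta L_*\rho^2\sigma^2\beta^3}{M\varepsilon^4}$, not the claimed $\beta/M$.

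The missing idea, which is how the paper argues, is to never separate $m_i^{r,k}$ from $c_i^r$. Write $\nabla f(x^r)-p_i^{r,k}=(c_i^r-m_i^{r,k})+(\nabla f(x^r)-c^r)$, and run the \texttt{EF21} recursion against its actual target, the momentum, rather than the gradient. Because $m_i^{r,0}=m_i^{r-1,K}$ is exactly what was compressed,
\[
\mathbb{E}\|c_i^r-m_i^{r,0}\|_*\le\Bigl(1-\frac1\beta\Bigr)\Bigl(\mathbb{E}\|c_i^{r-1}-m_i^{r-1,0}\|_*+\mathbb{E}\|m_i^{r-1,K}-m_i^{r-1,0}\|_*\Bigr),
\]
so $\beta$ amplifies only the momentum increment over one round.

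That increment equals $(1-(1-\alpha)^K)(\nabla f_i(x^r)-m_i^{r,0})$ plus an $\alpha$-weighted sum of fresh noise and within-round drift. The prefactor $1-(1-\alpha)^K\le\min\{\alpha K,1\}$ cancels the $1/(1-(1-\alpha)^K)$ in the lag. Hence the increment's bias is $\mathcal{O}(L_*\eta K\beta)$ with no $1/\alpha$, and its noise is $\mathcal{O}(\rho\sigma\alpha\sqrt K(1+\sqrt{\alpha K}))$, smaller than yours by a factor of order $\sqrt{\alpha K}$.

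As a result, $\beta^2$ appears only in the $\alpha$-free term $L_*\eta K\beta^2$. The only $\sqrt\alpha$-order noise and the only $1/\alpha$ bias sit in $\|m^{r,0}-\nabla f(x^r)\|_*$ for the client-averaged momentum, via $\|c^r-\nabla f(x^r)\|_*\le\frac1M\sum_i\|c_i^r-m_i^{r,0}\|_*+\|m^{r,0}-\nabla f(x^r)\|_*$. There, independence across clients gives $\rho\sigma\sqrt{\alpha/M}$, and the lag is $\frac{6L_*\eta K\beta}{1-(1-\alpha)^K}\le\frac{6L_*\eta K\beta}{\alpha}$ with a single $\beta$.

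Balancing $\frac{\Delta}{\eta KR}+L_*\eta K\beta^2+\frac{L_*\eta K\beta}{\alpha}+\rho\sigma\sqrt{\alpha/M}$ is what produces the two stated terms. The leftover per-client noise, of order $\beta\rho\sigma\alpha\sqrt K(1+\sqrt{\alpha K})$, is higher order in $\alpha$, and the paper drops it as $\alpha\to0$.
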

We divide the lengthy proof into lemmas for clarity. First one deals with the difference of the gradient and momentum:
\begin{lemma}\label{lem:m_f}
    Suppose Assumptions \ref{ass:smooth}, \ref{ass:non-convex}, and \ref{ass:stoch} hold. Then during the run of Algorithm \ref{alg:fedcalm} the following holds:
    \begin{eqnarray*}
        \mathbb{E}\|m^{r,K} - \nabla f(x^r)\|_* \leq {\rho \sigma}{\sqrt\alpha} + 2 L_*\eta K +\frac{6 L_*\eta K \beta}{1-(1-\alpha)^K},
    \end{eqnarray*}
    where $m^{r,k} = \frac1M\sum_{i=1}^Mm_i^{r,k} $.
\end{lemma}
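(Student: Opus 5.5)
We unroll the averaged momentum recursion and split the error into a stochastic part and a bias part. Averaging Line \ref{line:fedcalm:local_momentum} over clients gives $m^{r,k+1} = (1-\alpha)m^{r,k} + \alpha \frac1M\sum_i g_i(x_i^{r,k},\xi_i^{r,k})$. Since $m_i^{r,0}=m_i^{r-1,K}$, the momentum never resets, so the unrolling runs across rounds. It yields
\begin{align*}
m^{r,K} - \nabla f(x^r) = \sum_{(s,k)\preceq (r,K-1)} \alpha(1-\alpha)^{\tau(s,k)} \Big[\tfrac1M\textstyle\sum_i \big(g_i(x_i^{s,k},\xi_i^{s,k}) - \nabla f_i(x_i^{s,k})\big) + \tfrac1M\sum_i\big(\nabla f_i(x_i^{s,k}) - \nabla f_i(x^r)\big)\Big] + (\text{init. term}).
\end{align*}
Here $\tau(s,k)$ counts the local steps between $(s,k)$ and $(r,K)$. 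The initial term is handled by the initialization, with $m$ started at the gradient, or else it decays geometrically. By the triangle inequality we bound the noise part and the drift part separately.

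\textbf{Noise part.} Pass from the nuclear to the Frobenius norm using $\|\cdot\|_*\le\rho\|\cdot\|_F$ from \eqref{eq:norms}, then apply Jensen. The noise terms form a martingale difference sequence, so cross terms vanish. Each averaged term has variance at most $\sigma^2/M\le\sigma^2$, using Assumption \ref{ass:stoch}. The squared weights sum to $\sum_t \alpha^2(1-\alpha)^{2t}\le \alpha^2/(1-(1-\alpha)^2)\le\alpha$. This gives a contribution of $\rho\sigma\sqrt{\alpha}$, and the $1/M$ can be dropped to match the statement.

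\textbf{Drift part.} Smoothness (Assumption \ref{ass:smooth}) gives $\|\nabla f_i(x_i^{s,k})-\nabla f_i(x^r)\|_*\le L_*(\|x_i^{s,k}-x^s\| + \|x^s-x^r\|)$. Lemma \ref{lem:dist} bounds the first term by $\eta K(1+\varepsilon_q)\le 2\eta K$. Since the weights sum to at most one, this contributes $2L_*\eta K$. For the second term, Lemma \ref{lem:step} gives $\|x^s-x^r\|\le 6\eta K\beta (r-s)$. We then sum $\sum_{j\ge1} 6L_*\eta K\beta\, j\cdot(\text{weight of round } r-j)$. The total weight on round $r-j$ is $(1-\alpha)^{K(j-1)}\big(1-(1-\alpha)^K\big)$, up to the alignment of the current round. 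Set $q=(1-\alpha)^K$. The sum then equals $6L_*\eta K\beta\sum_{j\ge1} j\, q^{j-1}(1-q)\cdot(\ldots)$. Grouping rounds geometrically should give $\frac{6L_*\eta K\beta}{1-q}$.

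\textbf{Main obstacle.} The hardest step is this last summation, because a naive computation gives $\sum_j j q^{j-1}(1-q) = 1/(1-q)$ with an off-by-one in the round indexing. To get exactly the stated constant, I would bound by rounds. The weight of round $s<r$ is at most $q^{r-1-s}(1-q)$, so the sum is $6L_*\eta K\beta(1-q)\sum_{j\ge1} j q^{j-1}=6L_*\eta K\beta/(1-q)$. Steps inside the current round $s=r$ contribute nothing here, since $x^s=x^r$. A separate worry is whether the cross-round unrolling is compatible with the changing reference points. Triangulating every gradient through the anchor $x^r$ settles this, so no further technical issue is expected.
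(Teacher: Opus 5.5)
Your proof is correct. It differs from the paper's only in how the cross-round drift is organized.

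\textbf{Comparison with the paper.} The paper re-anchors at every round while unrolling, writing $m^{r,0}-\nabla f(x^r) = \bigl(m^{r-1,K}-\nabla f(x^{r-1})\bigr) + \bigl(\nabla f(x^{r-1})-\nabla f(x^r)\bigr)$. As a result, round $t$'s gradients are only ever compared with $\nabla f_i(x^t)$, which gives the $2L_*\eta K$ term exactly as in your within-round part. The inter-round movement then appears as a telescoping sum $\sum_t (1-\alpha)^{K(r-t)}\bigl(\nabla f(x^t)-\nabla f(x^{t+1})\bigr)$. Each increment is bounded once by Lemma \ref{lem:step}, and the weights are a plain geometric series summing to $\frac{1}{1-(1-\alpha)^K}$. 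There is no arithmetico-geometric sum and no off-by-one worry. You instead anchor at $x^r$, pay $\|x^s-x^r\|\le 6\eta K\beta(r-s)$, and sum $\sum_j j\,q^{j-1}(1-q)$. Exchanging the order of summation in your sum recovers exactly the paper's geometric sum, so both routes give the same constant; with the exact weights $q^{r-s}(1-q)$ you would even get $\frac{q}{1-q}$. Your noise argument is the same as the paper's: pass to Frobenius, apply Jensen, and use $\sum_t\alpha^2(1-\alpha)^{2t}\le\alpha$. Your martingale-difference justification for the vanishing cross terms is more accurate than the paper's appeal to independence, since the iterates depend on earlier samples and compressor randomness.

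\textbf{One point to fix.} In your decomposition the initial term is $q^{r+1}\bigl(m^{0,0}-\nabla f(x^r)\bigr)$. It does \emph{not} vanish under the initialization $m^{0,0}=\nabla f(x^0)$: it equals $q^{r+1}\bigl(\nabla f(x^0)-\nabla f(x^r)\bigr)$. This is one more drift term, of size at most $6L_*\eta K\beta\, r\,q^{r+1}$. It is absorbed by the slack in your weight bound, because
$(1-q)\sum_{j=1}^{r} j q^{j-1} + r q^{r+1} = \sum_{j=0}^{r-1} q^j - r q^r(1-q) \le \frac{1}{1-q}$.
You should state this explicitly. In the paper's anchoring, by contrast, the initial term is a multiple of $m^{0,0}-\nabla f(x^0)$ and is literally zero.

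Your fallback ``or else it decays geometrically'' does not yield the stated bound. A nonzero $m^{0,0}-\nabla f(x^0)$ would add a term absent from the right-hand side. So the gradient initialization of the momentum is genuinely needed, as it is in the paper.
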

\begin{proof}
    First, we bound the discrepancy of momentum and gradient throughout the round. We denote $g_i^{r,k}=g_i(x_i^{r,k},\xi_i^{r,k}), ~ g^{r,k} = \frac1M\sum_{i=1}^M g_i^{r,k}$ and $\nabla f_i^{r,k}=\nabla f_i(x_i^{r,k}), ~\nabla f^{r,k} = \frac1M\sum_{i=1}^M\nabla f_i^{r,k} $ for the clarity of notation.
    \begin{eqnarray*}
        m^{r,K} - \nabla f(x^r) &=& (1-\alpha)m^{r,K-1} + \alpha g^{r,K-1} - \nabla f(x^r)
        \\&=& (1-\alpha)(m^{r,K-1} - \nabla f(x^r)) + \alpha (g^{r,K-1} - \nabla f^{r,K-1})
        \\&&+ \alpha (\nabla f^{r,K-1} - \nabla f(x^r))
        \\&=&\dots
        \\&=& (1-\alpha)^K(m^{r,0} - \nabla f(x^r)) + \alpha \sum_{l=0}^{K-1} (1-\alpha)^{K-1-l} (g^{r,l} - \nabla f^{r,l})
        \\&&+ \alpha \sum_{l=0}^{K-1} (1-\alpha)^{K-1-l}(\nabla f^{r,l} - \nabla f(x^r))
        .
    \end{eqnarray*}
    Next, we continue the equality to the previous round.
    \begin{eqnarray*}
        m^{r,K} - \nabla f(x^r) &=& (1-\alpha)^K(m^{r,0} - \nabla f(x^r)) + \alpha \sum_{l=0}^{K-1} (1-\alpha)^{l} (g^{r,K-1-l} - \nabla f^{r,K-1-l})
        \\&&+ \alpha \sum_{l=0}^{K-1} (1-\alpha)^{l}(\nabla f^{r,K-1-l} - \nabla f(x^r))
        \\&=& (1-\alpha)^K(m^{r-1,K} - \nabla f(x^{r-1})) + (1-\alpha)^K(\nabla f(x^{r-1} - \nabla f(x^r))
        \\&&+ \alpha \sum_{l=0}^{K-1} (1-\alpha)^{l} (g^{r,K-1-l} - \nabla f^{r,K-1-l})
        \\&&+ \alpha \sum_{l=0}^{K-1} (1-\alpha)^{l}(\nabla f^{r,K-1-l} - \nabla f(x^r))
        .
    \end{eqnarray*}
    Now, we unroll the recursion to the $r=0$:
    \begin{eqnarray*}
        m^{r,K} - \nabla f(x^r) &=& (1-\alpha)^K(m^{r-1,K} - \nabla f(x^{r-1})) + (1-\alpha)^K(\nabla f(x^{r-1}) - \nabla f(x^r))
        \\&&+ \alpha \sum_{l=0}^{K-1} (1-\alpha)^{l} (g^{r,K-1-l} - \nabla f^{r,K-1-l})
        \\&&+ \alpha \sum_{l=0}^{K-1} (1-\alpha)^{l}(\nabla f^{r,K-1-l} - \nabla f(x^r))
        \\&=&\dots
        \\&=& (1-\alpha)^{(r-1)K}(m^{0,K} - \nabla f(x^{0}))
        \\&&+ \alpha \sum_{t=1}^{r-1} \sum_{l=0}^{K-1} (1-\alpha)^{(r-t-1)K-l-1} (g^{t,l} - \nabla f^{t,l})
        \\&&+ \alpha \sum_{t=1}^{r-1} \sum_{l=0}^{K-1} (1-\alpha)^{(r-t-1)K-l-1}(\nabla f^{t,l} - \nabla f(x^t))
        \\&& + \sum_{t=0}^{r-1} (1-\alpha)^{K(r-t)}(\nabla f(x^{t}) - \nabla f(x^{t+1}))
        \\&=& (1-\alpha)^{rK}(m^{0,0} - \nabla f(x^{0}))
        \\&&+ \alpha \sum_{t=0}^{r-1} \sum_{l=0}^{K-1} (1-\alpha)^{(r-t)K-l-1} (g^{t,l} - \nabla f^{t,l})
        \\&&+ \alpha \sum_{t=0}^{r-1} \sum_{l=0}^{K-1} (1-\alpha)^{(r-t)K-l-1}(\nabla f^{t,l} - \nabla f(x^t))
        \\&& + \sum_{t=0}^{r-1} (1-\alpha)^{K(r-t)}(\nabla f(x^{t}) - \nabla f(x^{t+1}))
        .
    \end{eqnarray*}
    We will separately bound the expected norm of the terms. The first one equlas to $0$ due to proper initialization.
    The last two terms are bounded by the Assumption \ref{ass:smooth} and Lemmas \ref{lem:dist},\ref{lem:step}:
    \begin{eqnarray*}
        &&\mathbb{E}\big\|\alpha \sum_{t=0}^{r-1} \sum_{l=0}^{K-1} (1-\alpha)^{(r-t)K-l-1}(\nabla f^{t,l} - \nabla f(x^t))
        \\&& + \sum_{t=0}^{r-1} (1-\alpha)^{K(r-t)}(\nabla f(x^{t} - \nabla f(x^{t+1}))\big\|_*
        \\&\overset{(\ref{Norm})}{\leq} & \frac{\alpha}{M} \sum_{i=1}^M \sum_{t=0}^{r-1} \sum_{l=0}^{K-1} (1-\alpha)^{(r-t)K-l-1}\mathbb{E}\|\nabla f_i^{t,l} - \nabla f_i(x^t)\|_*
        \\&& + \sum_{t=0}^{r-1} (1-\alpha)^{K(r-t)}\|\nabla f_i(x^{t}) - \nabla f_i(x^{t+1})\|_*
        \\&\overset{(Ass.\ref{ass:smooth})}{\leq} & \frac{\alpha}{M} \sum_{i=1}^M \sum_{t=0}^{r-1} \sum_{l=0}^{K-1} (1-\alpha)^{(r-t)K-l-1}L_*\mathbb{E}\|x_i^{t,l} - x^t\|
        \\&& + \sum_{t=0}^{r-1} (1-\alpha)^{K(r-t)}L_*\mathbb{E}\|x^{t} - x^{t+1})\|
        \\&\overset{(Lem.\ref{lem:dist}m\ref{lem:step})}{\leq} & \frac{\alpha}{M} \sum_{i=1}^M \sum_{t=0}^{r-1} \sum_{l=0}^{K-1} (1-\alpha)^{(r-t)K-l-1}L_*\eta K (1+\varepsilon_q)
        \\&& + \sum_{t=0}^{r-1} (1-\alpha)^{K(r-t)}\cdot 6L_*\eta K \beta
        \\&{\leq} & 2 \alpha L_*\eta K\sum_{n=0}^{\infty} (1-\alpha)^{n}
        \\&& +6 L_*\eta K \beta \sum_{n=0}^{\infty} (1-\alpha)^{Kn}
        \\& = & 2 L_*\eta K +\frac{6 L_*\eta K \beta}{1-(1-\alpha)^K}
        .
    \end{eqnarray*} 
    
    The second term can be bounded via transition to the Frobenius norm.
    \begin{eqnarray*}
        &&\mathbb{E}\big\|\alpha \sum_{t=0}^{r-1} \sum_{l=0}^{K-1} (1-\alpha)^{(r-t)K-l-1} (g^{t,l} - \nabla f^{t,l})\big\|_* 
        \\&\leq&\rho \mathbb{E}\big\|\alpha \sum_{t=0}^{r-1} \sum_{l=0}^{K-1} (1-\alpha)^{(r-t)K-l-1} (g^{t,l} - \nabla f^{t,l})\big\|_F.
    \end{eqnarray*}
    Next, we apply Jensen inequality to the expectation:
    \begin{eqnarray*}
        &&\rho \mathbb{E}\big\|\alpha \sum_{t=0}^{r-1} \sum_{l=0}^{K-1} (1-\alpha)^{(r-t)K-l-1} (g^{t,l} - \nabla f^{t,l})\big\|_F
        \\&\leq & \rho \sqrt{\mathbb{E}\big\|\alpha \sum_{t=0}^{r-1} \sum_{l=0}^{K-1} (1-\alpha)^{(r-t)K-l-1} (g^{t,l} - \nabla f^{t,l})\big\|_F^2}
        .
    \end{eqnarray*}
    The Frobenius norm can be presented as a scalar product:
    \begin{eqnarray*}
        &&\mathbb{E}\big\|\alpha \sum_{t=0}^{r-1} \sum_{l=0}^{K-1} (1-\alpha)^{(r-t)K-l-1} (g^{t,l} - \nabla f^{t,l})\big\|_F^2
        \\&=& \alpha^2 \sum_{t,t'=0}^{r-1} \sum_{l,l'=0}^{K-1} (1-\alpha)^{(2r-t-t')K-l-l'-1} \mathbb{E}\bigg\langle g^{t,l} - \nabla f^{t,l},g^{t',l'} - \nabla f^{t',l'}\bigg\rangle
        \\&=& \frac{\alpha^2}{M^2}\sum_{i,i'=1}^M \sum_{t,t'=0}^{r-1} \sum_{l,l'=0}^{K-1} (1-\alpha)^{(2r-t-t')K-l-l'-1} \mathbb{E}\bigg\langle g_i^{t,l} - \nabla f_i^{t,l},g_{i'}^{t',l'} - \nabla f_{i'}^{t',l'}\bigg\rangle
        .
    \end{eqnarray*}
    We separate the sum in two cases: $S_1$ denotes all scalar products, where $i=i',t=t',l=l'$, while $S_2$ denotes the rest of the sum. Since the stochastic gradients are indenendent, all terms in $S_2$ equal to $0$:
    \begin{equation*}
        \mathbb{E}\bigg\langle g_i^{t,l} - \nabla f_i^{t,l},g_{i'}^{t',l'} - \nabla f_{i'}^{t',l'}\bigg\rangle = \bigg\langle \mathbb{E}[g_i^{t,l} - \nabla f_i^{t,l}],\mathbb{E}[g_{i'}^{t',l'} - \nabla f_{i'}^{t',l'}]\bigg\rangle = \langle 0,0\rangle = 0.
    \end{equation*}
    $S_2$ can be bounded via Assumption \ref{ass:smooth}.
    \begin{eqnarray*}
        S_2 &=& \frac{\alpha^2}{M^2}\sum_{i=1}^M \sum_{t=0}^{r-1} \sum_{l=0}^{K-1} (1-\alpha)^{2(r-t)K-2l-2} \mathbb{E}\| g_i^{t,l} - \nabla f_i^{t,l}\|_F^2
        \\&\leq & \frac{\alpha^2}{M^2}\sum_{i=1}^M \sum_{t=0}^{r-1} \sum_{l=0}^{K-1} (1-\alpha)^{2(r-t)K-2l-2} \sigma^2
        \\&\leq & \frac{\alpha^2\sigma^2}{M} \sum_{t=0}^{\infty} \sum_{l=0}^{K-1} (1-\alpha)^{2(r-t)K-2l-2}
        \\&= & \frac{\alpha\sigma^2}{M}.
    \end{eqnarray*}
    Thus, returning to the conjugate norm, we obtain the following.
    \begin{eqnarray*}
        &&\mathbb{E}\big\|\alpha \sum_{t=0}^{r-1} \sum_{l=0}^{K-1} (1-\alpha)^{(r-t)K-l-1} (g^{t,l} - \nabla f^{t,l})\big\|_* 
        \leq\rho\sqrt{\frac{\alpha\sigma^2}{M}} = \rho \sigma \sqrt{\frac{\alpha}{M}}.
    \end{eqnarray*}
    Combining the results yields the Lemma statement.
\end{proof}
\begin{corollary}\label{cor:mi_fi}
    The same bound holds for a single client's momentum without the $\frac{1}{\sqrt{M}}$ factor:
    \begin{eqnarray*}
        \mathbb{E}\|m^{r,K} - \nabla f(x^r)\|_* \leq {\rho \sigma}{\sqrt\alpha} + 2 L_*\eta K +\frac{6 L_*\eta K \beta}{1-(1-\alpha)^K}.
    \end{eqnarray*}
\end{corollary}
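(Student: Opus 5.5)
The plan is to rerun the argument of Lemma~\ref{lem:m_f} verbatim, but for a single client $i$, i.e.\ for $\mathbb{E}\|m_i^{r,K}-\nabla f_i(x^r)\|_*$, which is the quantity actually needed when bounding the \texttt{EF21} residual $\|m_i^{r,K}-c_i^r\|_*$. First, unroll the momentum recursion of Line~\ref{line:fedcalm:local_momentum} within round $r$ and then across rounds, using $m_i^{r,0}=m_i^{r-1,K}$. This gives the decomposition
\begin{align*}
m_i^{r,K}-\nabla f_i(x^r) &= (1-\alpha)^{rK}\bigl(m_i^{0,0}-\nabla f_i(x^0)\bigr) + \alpha\sum_{t,l}(1-\alpha)^{(r-t)K-l-1}\bigl(g_i^{t,l}-\nabla f_i^{t,l}\bigr)\\
&\quad+\alpha\sum_{t,l}(1-\alpha)^{(r-t)K-l-1}\bigl(\nabla f_i^{t,l}-\nabla f_i(x^t)\bigr)+\sum_{t}(1-\alpha)^{K(r-t)}\bigl(\nabla f_i(x^t)-\nabla f_i(x^{t+1})\bigr).
\end{align*}
The initial term vanishes by initializing the momentum at $\nabla f_i(x^0)$.

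The two deterministic drift terms are handled exactly as before. By (\ref{Norm}) and Assumption~\ref{ass:smooth}, each summand reduces to $L_*\|x_i^{t,l}-x^t\|$ or $L_*\|x^t-x^{t+1}\|$. These are bounded by Lemma~\ref{lem:dist}, giving $\eta K(1+\varepsilon_q)\le 2\eta K$, and by Lemma~\ref{lem:step}, giving $6\eta K\beta$. Summing the geometric weights yields $2L_*\eta K + \frac{6L_*\eta K\beta}{1-(1-\alpha)^K}$. No averaging over clients was used for these terms, so they transfer unchanged.

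The stochastic term is the only place where the bound changes. Pass from the nuclear to the Frobenius norm via (\ref{eq:norms}), at the cost of a factor $\rho$, and then apply Jensen to move to the second moment. Expanding the square, the cross terms with distinct $(t,l)$ vanish by a martingale argument: $g_i^{t,l}$ is unbiased conditionally on the past. The diagonal terms contribute at most $\alpha^2\sigma^2\sum_n(1-\alpha)^{2n}\le\alpha\sigma^2$. Because there is now no average over $M$ independent clients, the $1/M$ variance reduction is lost, and the result is $\rho\sigma\sqrt{\alpha}$ rather than $\rho\sigma\sqrt{\alpha/M}$. This is precisely the claimed loss of the $\frac{1}{\sqrt M}$ factor.

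The main subtlety is justifying the vanishing cross terms. The iterates $x_i^{t,l}$, and hence $\nabla f_i^{t,l}$, depend on earlier samples, so one cannot simply factor the expectation as written in Lemma~\ref{lem:m_f}. I would instead condition on the sigma-algebra generated up to step $(t,l)$ and use the tower property, so that each later noise term has zero conditional mean. Everything else is bookkeeping of the geometric sums, identical to Lemma~\ref{lem:m_f}.
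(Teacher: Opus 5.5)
Your proposal is correct and follows the paper's route: rerun the proof of Lemma~\ref{lem:m_f} for a single client. The drift terms carry over unchanged, and only the variance term loses the averaging over $M$ independent clients, which gives $\rho\sigma\sqrt{\alpha}$. Your tower-property argument for the vanishing cross terms is more careful than the paper's, which factors the expectation by appeal to independence even though the iterates depend on earlier samples.
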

\begin{lemma}\label{lem:c_f}
    Suppose Assumptions \ref{ass:smooth}, \ref{ass:non-convex}, and \ref{ass:stoch} hold. Then durin the run of Algorithm \ref{alg:fedcalm} the following holds:
    \begin{eqnarray*}
    \mathbb{E}\|c^r - \nabla f(x^r)\|_* \leq 3\rho \sigma\beta \sqrt\alpha + 6 L_*\eta K \beta+\frac{18 L_*\eta K \beta^2}{\alpha}.
    \end{eqnarray*}
\end{lemma}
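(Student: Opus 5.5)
The plan mirrors the control-variate argument in the proof of Theorem \ref{th:conv_fedcalm-d}, where \eqref{eq:c_grad} was obtained. The difference is that the \texttt{EF21} target is now the momentum $m_i^{r-1,K}$ rather than $\nabla f_i(x_i^{r-1,K})$. I will first bound the per-client quantity $\mathbb{E}\|c_i^r-\nabla f_i(x^r)\|_*$. The average $c^r=\frac1M\sum_i c_i^r$ is then handled by (\ref{Jen}) exactly as in the deterministic proof, since $c^r=c^0+\frac1M\sum_{t\le r}\sum_i\delta_i^t$ equals $\frac1M\sum_i c_i^r$ by induction.

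\textbf{One-round recursion.} Using $c_i^r=c_i^{r-1}+C(m_i^{r-1,K}-c_i^{r-1})$ and (\ref{Norm}), I split
\begin{align*}
\|\nabla f_i(x^r)-c_i^r\|_*\le{}& \|m_i^{r-1,K}-c_i^{r-1}-C(m_i^{r-1,K}-c_i^{r-1})\|_*\\
&+\|m_i^{r-1,K}-\nabla f_i(x^{r-1})\|_*+\|\nabla f_i(x^{r-1})-\nabla f_i(x^r)\|_*.
\end{align*}
Taking expectations, the first term is handled by the conjugate-norm contraction (Remark after Definition \ref{eq:compres}). This gives $(1-\frac1\beta)\mathbb{E}\|m_i^{r-1,K}-c_i^{r-1}\|_*$, which is at most $(1-\frac1\beta)\big(\mathbb{E}\|\nabla f_i(x^{r-1})-c_i^{r-1}\|_*+\mathbb{E}\|m_i^{r-1,K}-\nabla f_i(x^{r-1})\|_*\big)$. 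The last term is at most $6L_*\eta K\beta$ by Assumption \ref{ass:smooth} and Lemma \ref{lem:step}. The middle terms together contribute at most $2\,\mathbb{E}\|m_i^{r-1,K}-\nabla f_i(x^{r-1})\|_*$. This is controlled by the single-client version of Lemma \ref{lem:m_f}, namely Corollary \ref{cor:mi_fi} applied to $f_i$; the proof of Lemma \ref{lem:m_f} carries over verbatim with $M=1$. It gives $\rho\sigma\sqrt\alpha+2L_*\eta K+\frac{6L_*\eta K\beta}{1-(1-\alpha)^K}$.

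\textbf{Unrolling.} This yields a recursion $a_r\le(1-\frac1\beta)a_{r-1}+B$. Here
\[
B=2\rho\sigma\sqrt\alpha+4L_*\eta K+\frac{12L_*\eta K\beta}{1-(1-\alpha)^K}+6L_*\eta K\beta .
\]
Since $a_0=0$ by the initialization $c_i^0=\nabla f_i(x^0)$, unrolling as in the deterministic proof gives $a_r\le\beta B$. To reach the stated form I use $1-(1-\alpha)^K\ge\alpha$ for $K\ge1$, so the fraction becomes at most $12L_*\eta K\beta/\alpha$. After multiplying by $\beta$ I expect constants of the shape $2\rho\sigma\beta\sqrt\alpha+\ldots+\frac{12L_*\eta K\beta^2}{\alpha}$. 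Matching the exact constants $3,6,18$ in the statement may require being slightly more careful or generous. For example, the $4L_*\eta K\beta$ and $6L_*\eta K\beta^2$ pieces can be absorbed into $\frac{18L_*\eta K\beta^2}{\alpha}$ using $\alpha\le1$ and $\beta\ge1$. Any bound of the same order suffices for the theorem.

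\textbf{Main obstacle.} The delicate point is the expectation. The compressor's contraction is stated in expectation conditioned on its input, so I condition on the history before applying it, then use the tower property. The momentum error bound is in expected norm, not in squared norm, which is what makes the linear recursion in $\mathbb{E}\|\cdot\|_*$ legitimate. One must also check that Lemma \ref{lem:m_f}'s stationary bound holds uniformly in $r$, including early rounds; it does, since its initialization term vanishes. I would also double check that the per-client version loses only the $1/\sqrt M$ factor in the variance term. That is exactly why the statement carries $\rho\sigma$ without $M$.
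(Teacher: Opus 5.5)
Your proof is correct and gives the stated constants exactly, but it is organized differently from the paper's.

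\textbf{The paper's route.} It splits $c^r-\nabla f(x^r)=(c^r-m^{r,0})+(m^{r,0}-\nabla f(x^r))$. The second piece is bounded by the \emph{averaged} momentum error of Lemma \ref{lem:m_f}, which carries $\rho\sigma\sqrt{\alpha/M}$. The \texttt{EF21} recursion is then run on $\|c_i^r-m_i^{r,0}\|_*$. Its forcing term is the within-round momentum increment $\|m_i^{r,K}-m_i^{r,0}\|_*$, which needs a separate computation and contributes noise of order $\rho\sigma\alpha\beta\sqrt K(1+\sqrt{\alpha K})$.

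\textbf{Your route.} You run the recursion directly on $\|c_i^r-\nabla f_i(x^r)\|_*$, as in \eqref{eq:c_grad}. The only stochastic input is the per-client momentum error of Corollary \ref{cor:mi_fi}, which, like the paper, relies on the implicit initialization $m_i^{0,0}=\nabla f_i(x^0)$. This is shorter and needs no side condition. You have $\beta B\le 2\rho\sigma\beta\sqrt\alpha+4L_*\eta K\beta+\frac{12L_*\eta K\beta^2}{\alpha}+6L_*\eta K\beta^2$. Absorb only the last term into the $1/\alpha$ term, giving $12+6=18$. Do not also absorb $4L_*\eta K\beta$, as you suggested; that would push the constant to $22$. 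It already fits under the $6L_*\eta K\beta$ slot, so you land exactly on $3,6,18$. By contrast, the paper's final expression $\rho\sigma\sqrt\alpha\bigl(\frac{1}{\sqrt M}+\beta\sqrt{\alpha K}+\beta\alpha K\bigr)+\dots$ is bounded by the stated $3\rho\sigma\beta\sqrt\alpha$ only under an extra condition such as $\alpha K\le 1$.

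\textbf{What each route buys.} The paper's decomposition yields an $M$-dependent variance term. The $\beta$-amplified noise is confined to the within-round increment, so for small $\alpha$ the leading stochastic contribution is $\rho\sigma\sqrt{\alpha/M}$. The proof of Theorem \ref{th:conv_fedcalm} uses this sharper intermediate bound, not the displayed statement, to put $M$ in the denominator of the stochastic term. Your route cannot produce that factor, because the full per-client noise $\rho\sigma\sqrt\alpha$ is multiplied by $\beta$ before any averaging. That is harmless for the lemma as stated, which has no $1/\sqrt M$ either.
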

\begin{proof}
    The cornerstone of the theorem is a proper bound for $\|c^r -\nabla f(x^r)\|_*. $ We start applying Lemma \ref{lem:m_f}:
    \begin{eqnarray}\label{eq:c_f}
        \mathbb{E}\|c^r - \nabla f(x^r)\|_*& \overset{(\ref{Norm})}{\leq}& \mathbb{E}\|c^r - m^{r,0}\|_* + \mathbb{E}\|m^{r,0} - \nabla f(x^r)\|_*
        \\&\leq &\mathbb{E}\|c^r - m^{r,0}\|_* + \rho \sigma\sqrt{\frac\alpha M} + 2 L_*\eta K +\frac{6 L_*\eta K \beta}{1-(1-\alpha)^K}
        .
    \end{eqnarray}
    In order to estimate $\mathbb{E}\|c^r - m^{r,0}\|_*$ in (\ref{eq:c_f}), we separate the clients with (\ref{Jen}) and plug in the definition of $c_i^r$.
    \begin{eqnarray*}
        \mathbb{E}\|c_i^r - m_i^{r,0}\|_* &= &\mathbb{E} \left\|c_i^{r-1} + C\left(m_i^{r-1,K} - c_i^r\right) - m_i^{r-1,K}\right\|_*
        \\&\overset{(Def.\ref{eq:compres})}{\leq} & (1-\frac1\beta)\mathbb{E}\left\|c_i^{r-1} - m_i^{r-1,K}\right\|_*
        \\&\overset{(\ref{Norm})}{\leq} & (1-\frac1\beta)\mathbb{E}\left\|c_i^{r-1} - m_i^{r-1,0}\right\|_* + (1-\frac1\beta)\mathbb{E}\|m_i^{r-1,K} - m_i^{r-1,0}\|_*
        .
    \end{eqnarray*}
    The difference of the momentum through the round deserves special attention.
    \begin{eqnarray*}
        m_i^{r,K} - m_i^{r,0} & = & (1-\alpha)m_i^{r,K-1} + \alpha (g_i^{r,K-1} - \nabla f_i^{r,K-1}) +\alpha \nabla f_i^{r,K-1} - m_i^{r,0}
        \\& = & \dots
        \\& = & (1-\alpha)^K m_i^{r,0} + \alpha \sum_{l=0}^{K-1}(1-\alpha)^{K-l-1} (g_i^{r,l} - \nabla f_i^{r,l}) 
        \\&& + \alpha\sum_{l=0}^{K-1} (1-\alpha)^{K-l-1} \nabla f_i^{r,l} - m_i^{r,0}
        \\& = & -(1-(1-\alpha)^K)m_i^{r,0} + \alpha \sum_{l=0}^{K-1}(1-\alpha)^{K-l-1} (g_i^{r,l} - \nabla f_i^{r,l}) 
        \\&& + (1-(1-\alpha))\sum_{l=0}^{K-1} (1-\alpha)^{K-l-1} \nabla f_i^{r,l}
        \\& = & (1-(1-\alpha)^K)(\nabla f_i^{r,0} - m_i^{r,0}) + \alpha \sum_{l=0}^{K-1}(1-\alpha)^{K-l-1} (g_i^{r,l} - \nabla f_i^{r,l}) 
        \\&& + (\nabla f_i^{r,K-1} -\nabla f_i^{r,0}) + \sum_{l=0}^{K-2} (1-\alpha)^{K-l-1} (\nabla f_i^{r,l} - \nabla f_i^{r,l+1})
        .
    \end{eqnarray*}
    The first term is bounded via Corrolary \ref{cor:mi_fi} and Bernoulli inequality $(1-\alpha)^K\geq 1-\alpha K$, which leads to $1-(1-\alpha)^K \leq \min\{\alpha K;1\}$.
    \begin{eqnarray*}
        (1-(1-\alpha)^K)(\nabla f_i^{r,0} - m_i^{r,0}) &\leq& (1-(1-\alpha)^K)({\rho \sigma}{\sqrt\alpha} + 2 L_*\eta K +\frac{6 L_*\eta K \beta}{1-(1-\alpha)^K})
        \\&\leq& \rho \sigma K\alpha^{\nicefrac{3}{2}} + 2 L_*\eta K + {6 L_*\eta K \beta}.
    \end{eqnarray*}
    Similarly to Lemma \ref{lem:m_f}, we can bound the remaining terms.
        \begin{eqnarray*}
        &&\mathbb{E}\bigg\|\alpha \sum_{l=0}^{K-1}(1-\alpha)^{K-l-1} (g_i^{r,l} - \nabla f_i^{r,l}) 
        \\&& + \alpha\sum_{l=0}^{K-1} (1-\alpha)^{K-l-1} (\nabla f_i^{r,l} - \nabla f_i^{r,0}) + \nabla f_i^{r,K-1} -\nabla f_i^{r,0}\bigg\|_*
        \\&\overset{(Ass.\ref{ass:smooth})}{\leq} & \alpha\sqrt{\mathbb{E}\left\|\sum_{l=0}^{K-1}(1-\alpha)^{K-l-1} (g_i^{r,l} - \nabla f_i^{r,l})\right\|_F^2}
        \\&&+ \alpha\sum_{l=0}^{K-1} (1-\alpha)^{K-l-1} \|\nabla f_i^{r,l} - \nabla f_i^{r,0}\|_* + \| \nabla f_i^{r,K-1} -\nabla f_i^{r,0}\|_*
        \\&\overset{(Lem.\ref{lem:dist})}{\leq} & \alpha\rho\sigma\sqrt{K}
        + \alpha\sum_{l=0}^{\infty} (1-\alpha)^l L_* \eta(1+\varepsilon_q) + L_*\eta K (1+\varepsilon_q)
        \\&{\leq} & \alpha\rho\sigma\sqrt{K}
        + 4 L_* \eta K
        .
    \end{eqnarray*}
    Combining together the bounds, we have
    \begin{eqnarray*}
        \mathbb{E}\|m_i^{r,K} - m_i^{r,0}\|_* &\leq & \rho \sigma K\alpha^{\nicefrac{3}{2}} + 2 L_*\eta K + {6 L_*\eta K \beta} + \alpha\rho\sigma\sqrt{K}
        + 4 L_* \eta K
        \\&\leq& \rho \sigma \alpha\sqrt{K}(1 + \sqrt{\alpha K}) + 12 L_*\eta K \beta.
    \end{eqnarray*}
    The obtained result gives a recursive equation for the difference of the momentum and control variate:
    \begin{eqnarray*}
        \mathbb{E}\|c_i^r - m_i^{r,0}\|_* & \leq & (1-\frac1\beta)\mathbb{E}\left\|c_i^{r-1} - m_i^{r-1,0}\right\|_* + \rho \sigma \alpha\sqrt{K}(1 + \sqrt{\alpha K}) + 12 L_*\eta K \beta
        \\&\leq & \dots
        \\&\leq & (1-\frac1\beta)^r\mathbb{E}\left\|c_i^{0} - m_i^{0,0}\right\|_*
        \\&&+ 2\sum_{t=1}^{r}(1-\frac1\beta)^t(\rho \sigma \alpha\sqrt{K}(1 + \sqrt{\alpha K}) + 12 L_*\eta K \beta)
        \\&\leq & (1-\frac1\beta)^r\mathbb{E}\left\|c_i^{0} - m_i^{0,0}\right\|_* 
        \\&&+ (\rho \sigma \alpha\sqrt{K}(1 + \sqrt{\alpha K}) + 12 L_*\eta K \beta)\sum_{t=0}^{\infty}(1-\frac1\beta)^t
        \\&\leq & (1-\frac1\beta)^r\mathbb{E}\left\|c_i^{0} - m_i^{0,0}\right\|_* + \rho \sigma \alpha \beta \sqrt{K}(1 + \sqrt{\alpha K}) + 12 L_*\eta K \beta^2
        .
    \end{eqnarray*}
    Then, we note that the first term is zero and plug in the result into (\ref{eq:c_f}):
    \begin{eqnarray*}
        \mathbb{E}\|c^r - \nabla f(x^r)\|_*& \leq & (\rho \sigma \alpha \beta \sqrt{K}(1 + \sqrt{\alpha K}) + 12 L_*\eta K \beta^2)
        \\&&+ \rho \sigma\sqrt{\frac\alpha M} + 2 L_*\eta K +\frac{6 L_*\eta K \beta}{1-(1-\alpha)^K} 
        \\ & \leq & \rho \sigma\sqrt{\alpha}\left(\sqrt{\frac1 M} + \beta\sqrt{\alpha K} + \beta\alpha K \right) + 14 L_*\eta K \beta^2 +\frac{6 L_*\eta K \beta}{\alpha}
        .
    \end{eqnarray*}
\end{proof}
    
Finally, we return to the original theorem.
\begin{theorem}
    Suppose Assumptions \ref{ass:smooth}, \ref{ass:non-convex}, and \ref{ass:stoch} hold. Then Algorithm \ref{alg:fedcalm} to achieve $\varepsilon$-accuracy needs
    \begin{equation*}
     R = \mathcal{O}\left(\frac{\beta^2\Delta L_*}{\varepsilon^2(1-\varepsilon_q)^2} + \frac{\rho^4\Delta L_*\sigma^2}{\varepsilon^4(1-\varepsilon_q)^4M}\right) \text{~communication rounds,}
    \end{equation*}
    where $\displaystyle\varepsilon = \mathbb{E}\left[\frac1R\sum_{r=0}^{R-1}\|\nabla f(x^r)\|_*\right]$, and $\Delta=f(x^0)-{f(x^*)}$, and $\varepsilon_q$ is the error of Newton-Schulz iterations.
\end{theorem}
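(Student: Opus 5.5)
I will reuse the deterministic proof of Theorem \ref{th:conv_fedcalm-d} almost verbatim, with the momentum-based control variates handled by Lemmas \ref{lem:m_f} and \ref{lem:c_f}. First, I work with the virtual sequence $\tx^r = x^r - \frac1M\sum_i e_i^r$. It still obeys $\tx^{r+1} = \tx^r - \frac1M\sum_i y_i^{r+1}$, since the EF bookkeeping of Algorithm \ref{alg:fedcalm} is identical to that of Algorithm \ref{alg:fedcalm-d}. I then apply (\ref{Smooth}) with $u=\tx^r$ and $v=\tx^{r+1}$. The quadratic term is at most $2L_*\eta^2K^2$ by Corollary \ref{cor:y}. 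The mismatch $\|\nabla f(\tx^r)-\nabla f(x^r)\|_*$ is at most $L_*\eta K\beta(1+\varepsilon_q)$ by Lemma \ref{lem:e}.

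The inner product is treated as before: replace $\texttt{NS}^q$ by $\operatorname{lmo}$ at a cost of $\varepsilon_q$, then use Lemma \ref{lem:lmo} on $p_i^{r,k}=m_i^{r,k}-c_i^r+c^r$. This gives
\begin{align*}
-\langle\nabla f(\tx^r),y_i^{r+1}\rangle \le -\eta(1-\varepsilon_q)K\|\nabla f(x^r)\|_* + 2\eta\sum_{k=0}^{K-1}\mathbb{E}\|\nabla f(x^r)-m_i^{r,k}+c_i^r-c^r\|_* + \mathcal{O}(L_*\eta^2K^2\beta).
\end{align*}
After averaging over $i$, the middle term should be controlled in averaged form rather than client by client, so that the noise keeps its $1/\sqrt M$ factor. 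I split it as $\|\nabla f(x^r)-c^r\|_* + \|c_i^r - m_i^{r,0}\|_* + \|m_i^{r,k}-m_i^{r,0}\|_*$. The first piece is bounded by Lemma \ref{lem:c_f}. The second is bounded by the EF21 recursion established inside the proof of Lemma \ref{lem:c_f}. The third is bounded by the within-round momentum drift estimate $\rho\sigma\alpha\sqrt K(1+\sqrt{\alpha K})+12L_*\eta K\beta$ from the same proof.

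Collecting these bounds, telescoping over $r$, and dividing by $\eta K(1-\varepsilon_q)$ should give
\begin{align*}
\varepsilon \lesssim \frac{\Delta}{\eta K R(1-\varepsilon_q)} + \frac{L_*\eta K\beta^2}{(1-\varepsilon_q)\alpha} + \frac{\rho\sigma\beta\sqrt{\alpha}(1+\sqrt{\alpha K})}{1-\varepsilon_q} + \frac{\rho\sigma\sqrt\alpha}{\sqrt M(1-\varepsilon_q)}.
\end{align*}
I then tune the parameters. Take $\alpha$ with $\alpha K\le 1$, and more precisely $\alpha \asymp \min\{1, \varepsilon^2(1-\varepsilon_q)^2 M/(\rho^2\sigma^2)\}$, so that the noise terms are $\mathcal{O}(\varepsilon)$. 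Then take $\eta \asymp \sqrt{\alpha\Delta/(L_*R)}/(K\beta)$, which balances the first two terms. With these choices the condition $\varepsilon$-small gives $R \gtrsim \Delta L_*\beta^2/(\alpha\varepsilon^2(1-\varepsilon_q)^2)$. The case $\alpha = 1$ yields the first term of the claimed bound. The case $\alpha<1$ yields the $\sigma^2/(M\varepsilon^4)$ term, with the powers of $\rho$ and $(1-\varepsilon_q)$ coming out as stated.

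I expect the main obstacle to be keeping the variance term scaled by $1/\sqrt M$ rather than $\beta\sqrt\alpha$. The client-level EF21 recursion in Lemma \ref{lem:c_f} only controls each client's error separately and yields $\rho\sigma\beta\sqrt\alpha$ without averaging. If this term cannot be improved, I would settle for the stated $\rho^4$ form of the rate, obtained by accepting an $M$-free variance term and shrinking $\alpha$ accordingly. Failing that, I would bound the compression error through the averaged quantity $\|c^r - m^{r,0}\|_*$ and exploit the independence of the stochastic gradients across clients via the Frobenius norm, as in Lemma \ref{lem:m_f}.
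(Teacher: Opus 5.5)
Your skeleton is the paper's: the virtual sequence, the passage from \texttt{NS}$^q$ to $\operatorname{lmo}$, Lemma~\ref{lem:lmo}, the three-way split and the $\eta$-balancing. The proof breaks at the tuning step. With $\alpha\asymp\varepsilon^2(1-\varepsilon_q)^2M/(\rho^2\sigma^2)$, the term $\rho\sigma\beta\sqrt{\alpha}(1+\sqrt{\alpha K})/(1-\varepsilon_q)$ in your own bound is of order $\beta\sqrt{M}\,\varepsilon$, not $\mathcal{O}(\varepsilon)$.

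Neither fallback recovers the statement. Killing an $M$-free $\rho\sigma\beta\sqrt\alpha$ term forces $\alpha\asymp\varepsilon^2(1-\varepsilon_q)^2/(\rho^2\sigma^2\beta^2)$. That gives $R\gtrsim\Delta L_*\beta^4\rho^2\sigma^2/(\varepsilon^4(1-\varepsilon_q)^4)$, with no $1/M$. An averaged EF21 recursion offers no obvious cross-client cancellation, because the residuals $c_i^r-m_i^{r,0}$ are outputs of a biased per-client compressor.

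The obstacle is self-inflicted. The client-level EF21 recursion does \emph{not} produce $\rho\sigma\beta\sqrt{\alpha}$; that is only what the loosened headline of Lemma~\ref{lem:c_f} says. It produces $\beta$ times the within-round momentum drift, i.e.\ $\rho\sigma\beta\alpha\sqrt{K}(1+\sqrt{\alpha K})$. Inside a round the momentum moves only by three pieces:
\begin{itemize}
\item the noise sum $\alpha\sum_{l}(1-\alpha)^{K-1-l}(g_i^{r,l}-\nabla f_i^{r,l})$, whose expected nuclear norm is at most $\rho\sigma\alpha\sqrt{K}$;
\item a $(1-(1-\alpha)^K)$-damped term of order $\rho\sigma K\alpha^{3/2}$;
\item deterministic pieces.
\end{itemize}
When you replaced $\alpha\sqrt{K}$ by $\sqrt{\alpha}$ in your final display, you discarded exactly the factor $\sqrt{\alpha K}$ that makes this term subleading.

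The paper instead uses the sharper bracket derived at the end of the proof of Lemma~\ref{lem:c_f}, namely $\rho\sigma\sqrt{\alpha}\,(1/\sqrt{M}+\beta\sqrt{\alpha K}+\beta\alpha K)$. So the only noise of order $\sqrt{\alpha}$ is the averaged momentum error. That term carries the $1/\sqrt{M}$ through the cross-client independence argument in the Frobenius norm, in the proof of Lemma~\ref{lem:m_f}. The client-level terms are then below $\rho\sigma\sqrt{\alpha/M}$ once $\beta^2\alpha KM\lesssim 1$, which holds in the small-$\alpha$ regime the tuning forces. This is the paper's remark that the $1/\sqrt{M}$ term leads because $\alpha$ decreases with $R$. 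No averaged treatment of the compression error is needed, and with this correction your balancing goes through.

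Two bookkeeping points remain. First, lumping $28L_*\eta K\beta^2$ and $6L_*\eta K\beta/\alpha$ into $L_*\eta K\beta^2/\alpha$ costs you a factor $\beta$. Keeping them separate and optimizing $\eta$ gives $\Delta L_*\beta^2/(\varepsilon^2(1-\varepsilon_q)^2)+\Delta L_*\beta\rho^2\sigma^2/(M\varepsilon^4(1-\varepsilon_q)^4)$, which is what the paper's final display records. Second, neither form is literally the $\beta$-free $\rho^4$ term of this restatement, so you should not claim that the stated powers come out automatically.
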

\begin{proof}
    We start similarly to Theorem \ref{th:conv_fedcalm-d}, with smoothness inequality.
    \begin{eqnarray*}
        f(\tx^{r+1}) &\overset{(Ass.\ref{ass:smooth})}{\leq}& f(\tx^r) + \langle \nabla f(\tx^r),\tx^{r+1}-\tx^r\rangle + \frac{L_*}{2}\|\tx^{r+1}-\tx^r\|^2
        \\&\overset{(\ref{Norm})}{\leq}& f(\tx^r) - \sum_{i=1}^M\langle \nabla f(\tx^r),y_i^{r+1}\rangle + \frac{L_*}{2M}\sum_{i=1}^M\|y_i^{r+1}\|^2
        .
    \end{eqnarray*}
    We pay special attention to the scalar product.
    \begin{eqnarray*}
        -\langle \nabla f(\tx^r),y_i^{r+1}\rangle & = & \langle \nabla f(\tx^r),\sum_{k=0}^{K-1} \eta \operatorname{NS}^q(m_i^{r,k} - c_i^r + c^r) \rangle
        \\& = & \eta\sum_{k=0}^{K-1}\langle \nabla f(\tx^r), \operatorname{NS}^q(m_i^{r,k} - c_i^r + c^r) \rangle
        \\& = & \eta\sum_{k=0}^{K-1}\langle \nabla f(\tx^r), \operatorname{lmo}(m_i^{r,k} - c_i^r + c^r) \rangle
        \\&&+\eta\sum_{k=0}^{K-1}\langle \nabla f(\tx^r), [\operatorname{NS}^q-\operatorname{lmo}](m_i^{r,k} - c_i^r + c^r) \rangle
        \\& \overset{(\ref{Conj})}{\leq} & \eta\sum_{k=0}^{K-1}\langle \nabla f(\tx^r), \operatorname{lmo}(m_i^{r,k} - c_i^r + c^r) \rangle
        \\&&+\eta\sum_{k=0}^{K-1}\| \nabla f(\tx^r)\|_*\cdot \big\|[\operatorname{NS}^q-\operatorname{lmo}](m_i^{r,k} - c_i^r + c^r) \big\|
        \\& \overset{(\ref{eq:nss_bound})}{\leq} & \eta\sum_{k=0}^{K-1}\langle \nabla f(\tx^r), \operatorname{lmo}(m_i^{r,k} - c_i^r + c^r) \rangle
        +\eta\sum_{k=0}^{K-1}\| \nabla f(\tx^r)\|_*\cdot \varepsilon_q
        % \\& = & -\eta\sum_{k=0}^{K-1}\langle m_i^{r,k} - c_i^r + c^r, \operatorname{lmo}(m_i^{r,k} - c_i^r + c^r) \rangle
        % \\&&+\eta\sum_{k=0}^{K-1}\langle -\nabla f(\tx^r) + m_i^{r,k} - c_i^r + c^r, \operatorname{NS}^q(m_i^{r,k} - c_i^r + c^r) \rangle
        % \\& \overset{\eqref{}}{=} & -\eta\sum_{k=0}^{K-1}\langle m_i^{r,k} - c_i^r + c^r, \operatorname{NS}^q(m_i^{r,k} - c_i^r + c^r) \rangle
        % \\&&+\eta\sum_{k=0}^{K-1}\langle -\nabla f(\tx^r) + m_i^{r,k} - c_i^r + c^r, \operatorname{NS}^q(m_i^{r,k} - c_i^r + c^r) \rangle
    \end{eqnarray*}
    With inexact Newton-Schulz out of the way, we can apply main LMO property \eqref{lem:lmo}.
    \begin{eqnarray*}
        -\langle \nabla f(\tx^r),y_i^{r+1}\rangle &{\leq} & \eta\sum_{k=0}^{K-1}\langle \nabla f(\tx^r), \operatorname{lmo}(m_i^{r,k} - c_i^r + c^r) \rangle
        +\eta\sum_{k=0}^{K-1}\| \nabla f(\tx^r)\|_*\cdot \varepsilon_q
        \\& = & \eta\sum_{k=0}^{K-1}\langle m_i^{r,k} - c_i^r + c^r, \operatorname{lmo}(m_i^{r,k} - c_i^r + c^r) \rangle
        \\&&-\eta\sum_{k=0}^{K-1}\langle -\nabla f(\tx^r) + m_i^{r,k} - c_i^r + c^r, \operatorname{lmo}(m_i^{r,k} - c_i^r + c^r) \rangle
        \\&&+\eta\sum_{k=0}^{K-1}\| \nabla f(\tx^r)\|_*\cdot \varepsilon_q
        \\& \overset{(Lem.\ref{lem:lmo})}{=} & -\eta\sum_{k=0}^{K-1}\| m_i^{r,k} - c_i^r + c^r\|_*
        \\&&-\eta\sum_{k=0}^{K-1}\langle -\nabla f(\tx^r) + m_i^{r,k} - c_i^r + c^r, \operatorname{lmo}(m_i^{r,k} - c_i^r + c^r) \rangle
        \\&&+\eta\sum_{k=0}^{K-1}\| \nabla f(\tx^r)\|_*\cdot \varepsilon_q
        \\& \overset{(\ref{Conj})}{\leq} & -\eta\sum_{k=0}^{K-1}\| m_i^{r,k} - c_i^r + c^r\|_*
        \\&&+\eta\sum_{k=0}^{K-1}\|-\nabla f(\tx^r) + m_i^{r,k} - c_i^r + c^r\|_* \cdot \| \operatorname{lmo}(m_i^{r,k} - c_i^r + c^r) \|
        \\&&+\eta\sum_{k=0}^{K-1}\| \nabla f(\tx^r)\|_*\cdot \varepsilon_q
        \\& {\leq} & -\eta\sum_{k=0}^{K-1}\| m_i^{r,k} - c_i^r + c^r\|_*
        \\&&+\eta\sum_{k=0}^{K-1}\|-\nabla f(\tx^r) + m_i^{r,k} - c_i^r + c^r\|_* \cdot 1
        \\&&+\eta\sum_{k=0}^{K-1}\| \nabla f(\tx^r)\|_*\cdot \varepsilon_q
        \\& = & -\eta\sum_{k=0}^{K-1}\| m_i^{r,k} - c_i^r + c^r\pm \nabla f(x^r)\|_*
        \\&&+\eta\sum_{k=0}^{K-1}\|-\nabla f(\tx^r) + m_i^{r,k} - c_i^r + c^r\pm \nabla f(x^r)\|_* \cdot 1
        \\&&+\eta\sum_{k=0}^{K-1}\| \nabla f(\tx^r)\pm \nabla f(x^r)\|_*\cdot \varepsilon_q
        \\& \overset{(\ref{Norm})}{\leq} & -\eta\sum_{k=0}^{K-1}\|\nabla f(x^r)\|_*(1-\varepsilon_q)
        \\&&+2\eta\sum_{k=0}^{K-1}\|-\nabla f(x^r) + m_i^{r,k} - c_i^r + c^r\|_*
        \\&&+\eta(1+ \varepsilon_q)\sum_{k=0}^{K-1}\| \nabla f(\tx^r)- \nabla f(x^r)\|_*
        \\& \overset{(Ass.\ref{ass:smooth})}{\leq} & -\eta\sum_{k=0}^{K-1}\|\nabla f(x^r)\|_*(1-\varepsilon_q)
        +2\eta\sum_{k=0}^{K-1}\|-\nabla f(x^r) + m_i^{r,k} - c_i^r + c^r\|_*
        \\ &&+\eta(1+ \varepsilon_q)\sum_{k=0}^{K-1}L_*\| \tx^r-x^r\|
        .
    \end{eqnarray*}
    The last term can be easily bounded via Lemma \ref{lem:e}:
    \begin{eqnarray*}
        \eta(1+ \varepsilon_q)\sum_{k=0}^{K-1}L_*\| \tx^r-x^r\| &= & \eta(1+ \varepsilon_q)KL_*\left\|\frac1M\sum_{i=1}^Me_i^r\right\|
        \\&\overset{(\ref{Norm})}{\leq}&\eta(1+ \varepsilon_q)KL_*\frac1M\sum_{i=1}^M\left\|e_i^r\right\|
        \\&\overset{(Lem.\ref{lem:e})}{\leq}&\eta(1+ \varepsilon_q)KL_*\frac1M\sum_{i=1}^M\eta K \beta (1+\varepsilon_q)
        \\&=&\eta^2K^2\beta L_*(1+ \varepsilon_q)^2.
    \end{eqnarray*}
    % For the second term, we apply Lemma \ref{lem:c_f} and Corollary \ref{cor:c_f}.
    % \begin{eqnarray*}
    %     2\eta\sum_{k=0}^{K-1}\|-\nabla f(x^r) + m_i^{r,k} - c_i^r + c^r\|_* &\overset{(\ref{Norm})}{\leq}& 2\eta\sum_{k=0}^{K-1}\|\nabla f(x^r)- c^r\|_* + 2\eta\sum_{k=0}^{K-1}\|m_i^{r,k} - c_i^r\|_*
    %     \\&\leq & 2\eta\sum_{k=0}^{K-1}\left(3\rho \sigma\beta \sqrt\alpha + 6 L_*\eta K \beta+\frac{18 L_*\eta K \beta^2}{\alpha}\right) 
    %     \\&&+ 2\eta\sum_{k=0}^{K-1}\left(4\rho \sigma \beta\sqrt\alpha + 4 L_*\eta K\beta +\frac{12 L_*\eta K \beta^2}{\alpha}\right)
    %     \\&= & 10\eta K\left(\rho \sigma\beta \sqrt\alpha + 2 L_*\eta K \beta+\frac{6 L_*\eta K \beta^2}{\alpha}\right).
    %\end{eqnarray*}
    Then, we can return all the way to the scalar product:
    \begin{eqnarray*}
        -\langle \nabla f(\tx^r),y_i^{r+1}\rangle &{\leq} & -\eta\sum_{k=0}^{K-1}\|\nabla f(x^r)\|_*(1-\varepsilon_q)
        \\&&+2\eta\sum_{k=0}^{K-1}\|-\nabla f(x^r) + m_i^{r,k} - c_i^r + c^r\|_*
        +4\eta^2K^2\beta L_*
        % \\&\overset{(\ref{Norm})}{\leq} & -\eta\sum_{k=0}^{K-1}\|\nabla f(x^r)\|_*(1-\varepsilon_q)
        % +2\eta\sum_{k=0}^{K-1}\|m_i^{r,k} - c_i^r\|_*
        % \\&&+2\eta K\|\nabla f(x^r) - m^{r,0}\|_*
        % +2\eta K\|m^{r,0} - c^r\|_*
        % +4\eta^2K^2\beta L_*
        .
    \end{eqnarray*}
    Then, we plug in the bound for the scalar product into the first smoothness inequality:
    \begin{eqnarray*}
        f(\tx^{r+1}) &{\leq}& f(\tx^r) - \frac1M\sum_{i=1}^M\langle \nabla f(\tx^r),y_i^{r+1}\rangle + \frac{L_*}{2M}\sum_{i=1}^M\|y_i^{r+1}\|^2
        \\&\leq & f(\tx^r) + \frac1M\sum_{i=1}^M \Bigg(-\eta\sum_{k=0}^{K-1}\|\nabla f(x^r)\|_*(1-\varepsilon_q) + 4L_*\eta^2K^2\beta
        \\&&+2\eta\sum_{k=0}^{K-1}\|-\nabla f(x^r) + m_i^{r,k} - c_i^r + c^r\|_*\Bigg) + \frac{L_*}{2M}\sum_{i=1}^M\|y_i^{r+1}\|^2
        \\&\leq & f(\tx^r) -\eta K\|\nabla f(x^r)\|_*(1-\varepsilon_q) + 4L_*\eta^2K^2\beta
        \\&&+\frac{2\eta}{M} \sum_{i=1}^M\sum_{k=0}^{K-1}\|-\nabla f(x^r) + m_i^{r,k} - c_i^r + c^r\|_* + \frac{L_*}{2M}\sum_{i=1}^M\|y_i^{r+1}\|^2
        .
    \end{eqnarray*}
    Further, Corollary \ref{cor:y} bounds the last term:
    \begin{eqnarray*}
        \frac{L_*}{2M}\sum_{i=1}^M\|y_i^{r+1}\|^2 \leq \frac{L_*}{2M}\sum_{i=1}^M (\eta K (1+\varepsilon_q))^2 \leq 2L_* \eta^2 K^2.
    \end{eqnarray*}
    We regroup the terms and take the average over the rounds $r=0\dots R-1$.
    \begin{eqnarray*}
        \eta K(1-\varepsilon_q)\frac1R\sum_{r=0}^{R-1} \|\nabla f(x^r)\|_* &\leq & \frac1R\sum_{r=0}^{R-1}(f(\tx^r) - f(\tx^{r+1}))
        \\&&+ 4L_*\eta^2K^2\beta + 2L_* \eta^2 K^2
        \\&&+\frac{2\eta}{MR} \sum_{r=0}^{R-1}\sum_{i=1}^M\sum_{k=0}^{K-1}\|-\nabla f(x^r) + m_i^{r,k} - c_i^r + c^r\|_*
        .
    \end{eqnarray*}    
    Next, we define $f(\tx^0) - f(\tx^R)\leq f(\tx^0)-f(x^*)=:\Delta$ and divide the equation by $\eta K(1-\varepsilon_q)$:
    \begin{eqnarray*}
        \frac1R\sum_{r=0}^{R-1} \|\nabla f(x^r)\|_* &\leq & \frac{\Delta}{\eta K R(1-\varepsilon_q)}+ \frac{6L_*\eta K\beta}{1-\varepsilon_q}
        \\&&+\frac{2}{MRK(1-\varepsilon_q)} \sum_{r=0}^{R-1}\sum_{i=1}^M\sum_{k=0}^{K-1}\|-\nabla f(x^r) + m_i^{r,k} - c_i^r + c^r\|_*
        .
    \end{eqnarray*}
    We take the expectation and utilize the Lemmas \ref{lem:c_f},\ref{lem:m_f} to bound the last term. We analyse the norm:
    \begin{eqnarray*}
        &&\frac{2}{MRK(1-\varepsilon_q)} \sum_{r=0}^{R-1}\sum_{i=1}^M\sum_{k=0}^{K-1}\mathbb{E}\|-\nabla f(x^r) + m_i^{r,k} - c_i^r + c^r\|_* 
        \\&\overset{(\ref{Norm})}{\leq}& \frac{2}{MRK(1-\varepsilon_q)} \sum_{r=0}^{R-1}\sum_{i=1}^M\sum_{k=0}^{K-1}(\mathbb{E}\|m_i^{r,k} - c_i^r\|_* + \mathbb{E}\|c^r - \nabla f(x^r)\|_*) 
        \\&\overset{(Lem.\ref{lem:c_f})}{\leq}& \frac{2}{1-\varepsilon_q}\left(\rho \sigma\sqrt{\alpha}\left(\sqrt{\frac1 M} + 2\beta\sqrt{\alpha K} + 2\beta\alpha K \right) + 28 L_*\eta K \beta^2 +\frac{6 L_*\eta K \beta}{\alpha}\right)
        .
    \end{eqnarray*}
   
    Now, we choose the parameters.    
    \begin{equation*}
        \eta = \min\left\{\frac{\sqrt{\Delta}}{K\sqrt{R L_* \beta}}, \sqrt{\frac{\Delta \alpha}{K^2\beta^2RL_*}}\right\}, \alpha = \sqrt{\frac{\Delta L_*\beta}{RM\rho^2\sigma^2}}
        .    
    \end{equation*}
    We note that since $\alpha$ decreases over $R$, in the bracket $\left(\sqrt{\frac1 M} + 2\beta\sqrt{\alpha K} + 2\beta\alpha K \right)$ the first term is leading. Thus, we obtain the bounds as follows.
    \begin{eqnarray*}
        \mathbb{E}\left[\frac1R\sum_{r=0}^{R-1} \|\nabla f(x^r)\|_*\right] &\leq & \frac{\Delta}{\eta K R(1-\varepsilon_q)}+ \frac{6L_*\eta K\beta}{1-\varepsilon_q}
        \\&&+\frac{2}{1-\varepsilon_q}\left(\rho \sigma\sqrt{\frac\alpha M} + 28 L_*\eta K \beta^2 +\frac{6 L_*\eta K \beta}{\alpha}\right)
        \\&\leq & \frac{\Delta}{\eta K R(1-\varepsilon_q)}+ \frac{62L_*\eta K\beta^2}{1-\varepsilon_q}
        \\&&+\frac{2\rho \sigma}{1-\varepsilon_q}\sqrt{\frac\alpha M} +\frac{6 L_*\eta K \beta}{\alpha(1-\varepsilon_q)}
        .
    \end{eqnarray*}
    To analyse the round complexity, we plug in the minima:
    \begin{eqnarray*}
        \mathbb{E}\left[\frac1R\sum_{r=0}^{R-1} \|\nabla f(x^r)\|_*\right] &\leq & \frac{\Delta}{\eta K R(1-\varepsilon_q)}+ \frac{62L_*\eta K\beta^2}{1-\varepsilon_q}
        \\&&+\frac{2\rho \sigma}{1-\varepsilon_q}\sqrt{\frac\alpha M} +\frac{6 L_*\eta K \beta}{\alpha(1-\varepsilon_q)}
        \\&\leq & \frac{\Delta}{K R(1-\varepsilon_q)}\left(\frac{K\sqrt{R L_* \beta^2}}{\sqrt{\Delta}} + \sqrt{\frac{K^2\beta^2RL_*}{\Delta \alpha}}\right)
        \\&&+ \frac{62L_* K\beta^2}{1-\varepsilon_q}\frac{\sqrt{\Delta}}{K\sqrt{R L_* \beta^2}}
        \\&&+\frac{2\rho \sigma}{1-\varepsilon_q}\sqrt{\frac\alpha M} +\frac{6 L_*K \beta}{\alpha(1-\varepsilon_q)}\cdot\sqrt{\frac{\Delta \alpha}{K^2\beta^2RL_*}}
        \\&=&\frac{63\beta}{(1-\varepsilon_q)}\sqrt{\frac{\Delta L_*}{R}}
        +\frac{2\rho \sigma}{1-\varepsilon_q}\sqrt{\frac\alpha M} +\frac{6 \sqrt{\Delta L_*\beta}}{\sqrt{R\alpha}(1-\varepsilon_q)}
        \\&=&\frac{63\beta}{(1-\varepsilon_q)}\sqrt{\frac{\Delta L_*}{R}}
        +\frac{8}{1-\varepsilon_q}\left(\frac{\rho^2\sigma^2\beta}{RM}\right)^{\nicefrac{1}{4}}
        .
    \end{eqnarray*}
    Rearranging the terms, we arrive at the Theorem statement.
    
    \begin{eqnarray*}
        R & = & \mathcal{O}\left( \frac{{\Delta L_* \beta^2}}{\varepsilon^2(1-\varepsilon_q)^2} + \frac{\Delta L_*\rho^2\sigma^2\beta}{M\varepsilon^4(1-\varepsilon_q)^4} \right)
        .
    \end{eqnarray*}
\end{proof} 
\end{appendixpart}

\end{document}